\documentclass{article}

\PassOptionsToPackage{numbers, compress}{natbib}
\usepackage[preprint]{neurips_2026}

\usepackage[utf8]{inputenc} % allow utf-8 input
\usepackage[T1]{fontenc}    % use 8-bit T1 fonts
\usepackage{hyperref}       % hyperlinks
\usepackage{url}            % simple URL typesetting
\usepackage{booktabs}       % professional-quality tables
\usepackage{amsfonts}       % blackboard math symbols
\usepackage{nicefrac}       % compact symbols for 1/2, etc.
\usepackage{microtype}      % microtypography
\usepackage{xcolor}         % colors
\usepackage{algorithm}
\usepackage{algorithmic}

\usepackage{amsmath}
\usepackage{amssymb}
\usepackage{mathtools}
\usepackage{amsthm}
\usepackage{thmtools}
\usepackage{thm-restate}

\usepackage{etoc}

\usepackage{caption}

\usepackage{enumitem}
\usepackage[capitalize, noabbrev]{cleveref}
\AddToHook{env/lemma/begin}{\crefalias{theorem}{lemma}}
\crefname{lemma}{Lemma}{Lemmas}

\AddToHook{env/proposition/begin}{\crefalias{theorem}{proposition}}
\crefname{proposition}{Proposition}{Propositions}

\AddToHook{env/remark/begin}{\crefalias{theorem}{remark}}
\crefname{remark}{Remark}{Remarks}

\AddToHook{env/corollary/begin}{\crefalias{theorem}{corollary}}
\crefname{corollary}{Corollary}{Corollaries}

\AddToHook{env/assumption/begin}{\crefalias{theorem}{assumption}}
\crefname{assumption}{Assumption}{Assumptions}

\AddToHook{env/proof/begin}{\crefalias{theorem}{proof}}
\crefname{proof}{Proof}{Proofs}

\Crefname{equation}{Eq.}{Eqs.}
\Crefname{figure}{Fig.}{Figs.}

\theoremstyle{plain}
\newtheorem{theorem}{Theorem}[section]
\newtheorem{proposition}[theorem]{Proposition}
\newtheorem{lemma}[theorem]{Lemma}
\newtheorem{corollary}[theorem]{Corollary}

\theoremstyle{definition}

\newtheorem{assumption}[theorem]{Assumption}

\theoremstyle{remark}
\newtheorem{remark}[theorem]{Remark}

\DeclareMathOperator{\err}{\textbf{err}}

\newcommand{\E}{\mathbb{E}}
\newcommand{\TV}{\mathbf{TV}}

\usepackage[disable,textsize=tiny]{todonotes}
\usepackage{comment}
\usepackage{dsfont}
\usepackage{xcolor}
\usepackage[skins]{tcolorbox}
\usepackage{soul}
\sethlcolor{purple}
\usepackage{multirow}
\usepackage{optidef}
\usepackage{siunitx}
\usepackage{wrapfig}
\usepackage[subtle]{savetrees}

\definecolor{NTGreen}{RGB}{1, 113, 4}
\definecolor{NTBlue}{RGB}{0, 77, 128}
\definecolor{NTPink}{RGB}{150, 15, 82}

\tcbset{commonstyle/.style={boxrule=0pt,sharp corners,enhanced jigsaw,nobeforeafter,boxsep=0pt,left=\fboxsep,right=\fboxsep}}
\newtcolorbox{mycolorbox}[1][]{commonstyle,#1}

\newcommand{\gtext}[1]{\textcolor{NTGreen}{#1}}

\newcommand{\ptext}[1]{\textcolor{NTPink}{#1}}
\newcommand{\tbf}[1]{\textbf{#1}}

\title{AdaptNC: Adaptive Nonconformity Scores for Conformal Prediction under Distribution Shift}
\author{%
    Renukanandan Tumu$^{1}$ \thanks{$^{1}$ All authors are from Department of Electrical and Systems Engineering, University of Pennsylvania, Philadelphia, Pennsylvania (email: \{nandant, singhadi, rahulm\}@seas.upenn.edu)}
    \And
    Aditya Singh$^{1}$ 
    \And
    Rahul Mangharam$^{1}$ 
}

\begin{document}
    \maketitle
    \etocdepthtag.toc{main}
    
\begin{abstract}
Rigorous uncertainty quantification is essential for the safe deployment of autonomous systems in dynamic environments. Conformal Prediction (CP) provides a distribution-free framework for this task, yet its standard formulations rely on exchangeability assumptions that are violated by the distribution shifts inherent in real-world robotics. Existing online CP methods maintain target coverage by adaptively scaling the conformal threshold, but typically employ a static nonconformity score function. We show that this fixed geometry leads to highly conservative, volume-inefficient prediction regions when environments undergo structural shifts. To address this, we propose \textbf{AdaptNC}, a framework for the joint online adaptation of both the nonconformity score parameters and the conformal threshold. AdaptNC leverages an adaptive reweighting scheme to optimize score functions, and introduces a replay buffer mechanism to mitigate the coverage instability that occurs during score transitions. We evaluate AdaptNC on diverse robotic benchmarks involving multi-agent policy changes, environmental changes and sensor degradation. Our results demonstrate that AdaptNC significantly reduces prediction region volume compared to state-of-the-art threshold-only baselines while maintaining target coverage levels.
\end{abstract}

% \begin{abstract}
%     Uncertainty quantification is a critical component for the deployment of real-world robotic systems.  Although conformal prediction provides a principled, distribution-free approach to this problem, real-world robotic systems typically operate in dynamic and evolving environments that induce distribution shifts, thereby violating the exchangeability assumptions required to guarantee nominal coverage. Previous online conformal prediction methods allow principled adaptation of the score threshold in settings with distribution shift, but do not adapt the non-conformity score and can yield conservative regions, which impact the performance of robotic systems. We propose AdaptNC, a method that adapts the non-conformity score function and score threshold via optimization-based score function design and a replay buffer to mitigate coverage shock. Demonstrations show that our method can adapt to structural distribution shifts caused by policy changes in multi-agent settings, environmental changes, and system degradation.
% \end{abstract}

    \section{Introduction}
\vspace{-0.5em}
Autonomous systems operating in real-world environments are subject to constant change. These changes induce distribution shift, driven by changing environmental conditions, such wind or weather, system degradation due to damage or wear and tear, or through multi-agent interactions, which can all degrade the performance of predictive models. 
For safety-critical applications such as autonomous driving and robotic navigation, identifying these shifts and providing robust, uncertainty-aware predictions is not just a performance requirement; it is a safety necessity \cite{brunke_safe_2022}. 
Distribution shift in robotics does not only change the magnitude or frequency of errors, it can change the structure of the errors. For example, tire damage in a ground vehicle or strong winds affecting a drone can induce directionally biased prediction errors. Because prediction regions are often used to define where robots should slow down, avoid, or plan conservatively \cite{11274485}, these structural changes directly affect the usefulness of uncertainty estimates. Thus, prediction regions should not only adapt in size, but also in shape, as environmental and system conditions evolve.

Uncertainty quantification has been a focus of much recent work, and conformal prediction (CP) \cite{vovk_algorithmic_2022, lei_distribution-free_2018}, has emerged as a popular method for its flexibility to be applied to predictors of all types, its statistical grounding, and its ease of use for the practitioner. CP constructs prediction sets which contain the true label with high ($1 -\alpha$) probability, under the assumption of exchangeability between the calibration and test distributions. In CP, the threshold controls the size of the prediction region, while the non-conformity score function controls its geometry. Online adaptation has largely focused on the former, leaving the latter fixed even when the structure of the data changes.

Recent online CP methods adapt the conformal threshold to maintain coverage under distribution shift\cite{gibbs_adaptive_2021,gibbs_conformal_2024, zaffran_adaptive_2022}. However, because they keep the non-conformity score fixed, they can only resize prediction regions rather than reshape them. This limitation is important in robotic settings, where shifts such as sensor degradation or policy changes can alter the geometry of prediction errors. Separately, previous work has addressed the design of non-conformity score functions\cite{kiyani_length_2024,gao_volume_2025}, including control and robotics settings with multi-dimensional outputs \cite{tumu_multi-modal_2024,tumu_physics_2023,cleaveland_conformal_2024,braun_minimum_2025}, however, this work does not address the adaptation of the non-conformity score in response to distribution shift.

Unlike existing methods that only tune the threshold, AdaptNC optimizes the underlying score function to ensure that prediction regions remain tight and informative even as the data distribution evolves. AdaptNC also shifts the conformal threshold. In order to provide principled score function adaptation, we introduce an \textit{adaptive reweighting scheme} that leverages DtACI expert weights to prioritize the most relevant historical observations during score optimization.
The joint adaptation of the score function and threshold introduces the phenomenon of ``coverage shock": the errors in coverage caused by evaluating old threshold values under a new score function.  To mitigate ``coverage shock''  we employ a counterfactual \textit{replay mechanism} to recalibrate the conformal threshold following a score update. Across three robotic settings with structural distribution shifts, AdaptNC maintains target coverage while producing smaller prediction regions than threshold-adaptation baselines, demonstrating that score adaptation improves efficiency without sacrificing empirical coverage.

\textbf{Contributions:} This paper addresses the challenge of constructing efficient, uncertainty-aware prediction regions under distribution shift by contributing:
\vspace{-0.5em}
\begin{enumerate}
    \item \textbf{Online Score Function and Threshold Adaptation:} We present a method for the online optimization of non-conformity scores and their corresponding conformal thresholds that minimizes prediction region volume while enabling adaptation to shifting distributions.
    
    \item \textbf{Dynamic Weighting and Replay Buffer:} We develop a weighting mechanism for data based on the rate of distribution shift, and a replay mechanism to maintain coverage stability during score parameter transitions.
    
    \item \textbf{Empirical Validation on Robotic Benchmarks:} We evaluate AdaptNC on robotic tasks involving structural distribution shifts, including multi-agent policy changes and sensor degradation. We demonstrate that AdaptNC reduces prediction volume compared to state-of-the-art online CP baselines while maintaining target coverage levels.
    
\end{enumerate}

Furthermore, we emphasize that AdaptNC is not a direct composition of prior adaptive conformal prediction and score-adaptation methods. Instead, it formulates a new setting involving coupled adaptation, characterizes the instabilities arising from a naive integration, and introduces the mechanisms necessary to ensure stable and effective joint adaptation. Additional discussion is provided in Appendix~\ref{app:direct_comb}.

    \vspace{-0.5em}
    \begin{figure*}[t]
        \centering
        \includegraphics[width=0.7\textwidth]{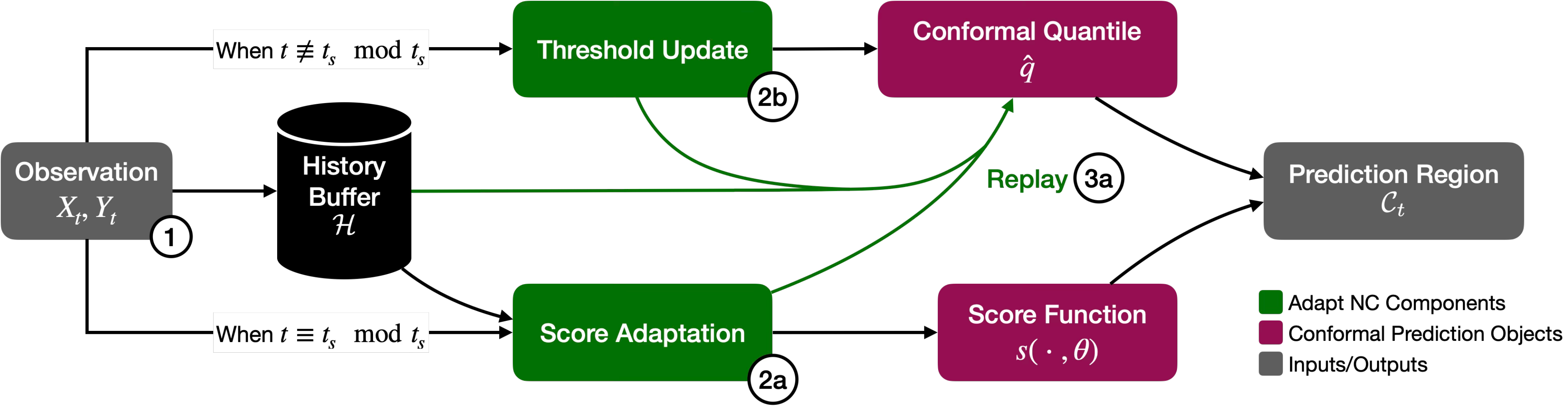}
        \captionsetup{font=small}
        \caption[Overview of the AdaptNC Framework]{The AdaptNC framework is an online algorithm designed to handle distribution shifts. The above figure shows the procedure at a timestep $t$. (1) The observation for the current timestep arrives, and is added to the history buffer. Every $t_s$ timesteps, the \ptext{score function} is adapted by \gtext{weighted score adaptation} (2a). This adaptation yields a score function \ptext{$s(\cdot,\theta_t)$}, the \gtext{replay} algorithm (3a) compensates for the distribution shift generated by the change in score function by replaying recent samples from the history buffer $\mathcal{H}$ and generates the conformal quantile \ptext{$\hat{q}$}. When the score is not being adapted, only the \ptext{conformal quantile} is adapted through the \gtext{threshold update} step.}
        \label{fig:adaptnc-overview-figure}
        \vspace{-1.5em}
    \end{figure*}
    \section{Problem Setup and Preliminaries}
\vspace{-0.5em}
\subsection{Online Prediction Setting}
\vspace{-0.3em}
We consider an online $\tau$-step prediction setting. At each time step $t = 1, \ldots, T$, the predictor observes features $X_{t}\in \mathcal{X}$ and produces a $\tau$-step prediction $\hat{Y}_{t} = h(X_t) = \{\hat{y}_{t+1}, \ldots, \hat{y}_{t+\tau}\} \in \mathcal{Y}^{\tau}$. The future outcome $Y_t \in \mathcal{Y}^{\tau}$ is then observed. The joint distribution of $(X_t,Y_t)$ may change over time.

We seek to build a prediction region $\mathcal{C}_t$ that contains the future outcome $Y_t$ with probability $1-\alpha$, where $\alpha\in(0,1)$ is the target miscoverage rate. The prediction regions are constructed using a non-conformity score function $s(X_{t}, y;\theta_t)$ where $\theta_t$ are parameters of the score function at time $t$, as well as a conformal threshold, $q_t$. We define $q_{\mathcal{D}_t}(\delta)$ to be the empirical $\delta\in [0,1]$ quantile of the score dataset $\mathcal{D}_t$, and $q_t = q_{\mathcal{D}_t}(\delta_t)$. The prediction region is the set of points with a non-conformity score less than the conformal quantile: $\mathcal{C}_{t}(X_{t};\theta_t,q_t) \coloneqq \{y \in \mathcal{Y}^{\tau} : s(X_{t}, y;\theta_t) \leq q_t\}$. The threshold $q_t$ controls the size of the prediction region, while the score parameters $\theta_t$ control its geometry.
\vspace{-0.8em}
\paragraph{Desiderata:}
We pursue the objectives of coverage that approaches the target, and efficiency of the prediction region under settings with distribution shift.

\textbf{Coverage:}
Under distribution shift, exact finite-sample coverage at every time step is generally not attainable. Instead, we seek the asymptotic miscoverage rate $\alpha$, in settings where the magnitude of distribution shift decays over time. We define the error indicator $\err_{t}= \mathds{1}\{Y_{t}\notin \mathcal{C}_{t}(X_{t};\theta_t,q_t) \}$, and we target:
\vspace{-0.2em}
\begin{equation}\label{eq:long-term-coverage}
    \lim_{T \rightarrow \infty}\frac{1}{T}\sum_{t=1}^{T}\err_{t}= \alpha.
\end{equation}\vspace{-0.7em}

\textbf{Efficiency:}
We would also like to ensure that the prediction regions $\mathcal{C}_{t}$ are small and informative to downstream algorithms. Subject to our coverage desideratum, our objective is to minimize the volume of the prediction set $\mathcal{C}_{t}$. We formulate the optimization problem:
\begin{subequations}\label{eq:efficiency-optim}
\begin{align}
    \min_{\theta_t}\quad & \text{Vol}(\mathcal{C}_{t}(X_{t+1};\theta_t,q_t)), \quad  
    \text{s.t.} ~~\Pr[Y_{t+1} \in \mathcal{C}_{t}(X_{t+1};\theta_t,q_t)] \geq 1-\alpha.
\end{align}
\end{subequations}
In the online setting, the future distribution is unknown, so this problem serves as an idealized objective: AdaptNC seeks to reduce prediction-region volume while maintaining long-run coverage.
\vspace{-0.3em}
\subsection{Preliminaries}
\vspace{-0.3em}
\textbf{Conformal Prediction:}
Conformal prediction (CP) and its split variants provide distribution-free, finite-sample prediction regions which guarantee ground truth coverage with $1-\alpha$ probability under the assumption of exchangeable calibration and test data \cite{vovk_algorithmic_2022, lei_distribution-free_2018, barber_predictive_2020}. Under distribution shift, this exchangeability assumption may fail, motivating online methods that adapt the threshold over time.

\textbf{Online Threshold Adaptation:}
Online conformal prediction methods adapt the threshold to maintain long-run coverage in settings where distribution shift occurs.
DtACI \cite{gibbs_conformal_2024} targets long-run coverage under arbitrary distribution shifts by adaptively reweighting a set of candidate adaptation rates based on their historical performance. It obtains a long-term coverage guarantee of the form in \cref{eq:long-term-coverage} in settings where the size of the distribution shift decays over time.

The DtACI algorithm seeks to compensate for distribution shift by changing the adaptive miscoverage level $\alpha_t$ used to select the score threshold $q_t=q_{\mathcal{D}_t}(1-\alpha_t)$. The algorithm seeks to find $\alpha_t$ that minimizes the regret to the hindsight optimal quantile $\alpha^*_t$. This difference is bounded in \cref{thm:dtaci_regret}. Observations in the algorithm are quantiles $\beta_t$, which are the lowest empirical quantiles of the dataset $\mathcal{D}_t$ that would have covered the true outcome at time $t$. Formally,  
\begin{equation} \label{eq:def_beta_t}
        \beta_t \coloneqq \inf\left\{\beta: Y_t \in \{y : s(X_t, y;\theta_t) \leq \hat{q}_{\mathcal{D}_{t}}(\beta)\}\right\}
\end{equation}
To achieve this, DtACI maintains a set of $k$ candidate adaptation rates $\{\gamma_i\}_{i=1}^k$ and corresponding weights $\{w_t^i\}_{i=1}^k$. At each time step, the algorithm performs an exponential weights update on the weights based on the error incurred by each candidate rate. An averaged quantile $\bar{\alpha}_t$ is then computed using the weights, and the quantile threshold $q_{\mathcal{D}_t}(1-\bar{\alpha}_t)$ is updated accordingly. The algorithm is presented in \cref{alg:dtaci} (in Appendix~\ref{appendix:method}).

    \vspace{-0.7em}
    \section{Related Work}
% Conformal prediction has been increasingly used for applications in safety-critical systems, such as autonomous driving and robotic control \cite{11274485,lindemann_safe_2023}. In these settings, prediction regions directly influence downstream planning and control decisions, so uncertainty estimates must remain valid and informative under changing conditions. Efficient and adaptive conformal prediction methods can enhance the reliability of these systems by providing tighter uncertainty bounds that reflect the current data distribution. \citet{dixit_adaptive_2023} demonstrate the utility of adaptive conformal prediction in autonomous driving scenarios, providing a framework for utilizing adaptive conformal methods in control. These applications motivate adaptive prediction regions that remain both valid and efficient under distribution shift. However, existing adaptive CP methods for these settings typically rely on fixed non-conformity scores or threshold-only adaptation.
% \vspace{-0.8em}
\vspace{-0.5em}
\paragraph{Online Threshold Adaptation.}
Online conformal prediction methods address distribution shift primarily by adapting the conformal threshold or calibration quantile over time. Adaptive Conformal Inference (ACI) updates the quantile based on recent coverage errors \cite{gibbs_adaptive_2021}, while related methods such as AgACI and multivalid conformal prediction provide alternative mechanisms for maintaining coverage under non-stationarity \cite{zaffran_adaptive_2022,bastani_practical_2022}. DtACI further advances this line of work by adaptively weighting multiple candidate adaptation rates and providing long-term coverage regret guarantees \cite{gibbs_conformal_2024}. However, these online CP methods adapt the threshold while leaving the non-conformity score fixed. As a result, they can resize prediction regions under distribution shift, but they cannot reshape them when the geometry of prediction errors changes. Other approaches address distribution shift by reweighting calibration points to better match the test distribution under covariate shift~\cite{tibshirani_conformal_2019,barber_conformal_2023}. These methods modify the effective calibration distribution, and have been used in reinforcement learning settings \cite{taufiq_conformal_2022, kuipers_conformal_2024} but they do not adapt the non-conformity score online. 
\vspace{-0.8em}
\paragraph{Efficient and Shape-Adaptive Non-Conformity Scores.}
A separate line of work studies how the choice of non-conformity score affects the shape and volume of conformal prediction regions. Recent methods design volume-efficient scores for general prediction settings~\cite{kiyani_length_2024,gao_volume_2025}, as well as for control and robotics problems with multi-dimensional outputs~\cite{tumu_multi-modal_2024,tumu_physics_2023,cleaveland_conformal_2024,braun_minimum_2025}. These methods show that intentional design of the non-conformity score function can substantially improve prediction-region efficiency and downstream usefulness. However, they typically design or optimize the score for a fixed data-generating process. They do not consider online score adaptation under distribution shift, where the geometry of prediction errors may change over time. 

    \vspace{-0.7em}
    \section{AdaptNC Method}
\vspace{-0.5em}
Our method, Adaptation of Non-Conformity Scores (AdaptNC), extends online conformal prediction by jointly adapting the conformal threshold and non-conformity score function (NCSF). (1) \textbf{Threshold adaptation} maintains long-run coverage, (2) \textbf{Score Parameter Adaptation} improves efficiency under distribution shift, and (3) \textbf{Replay} recalibrates the threshold after score updates to mitigate coverage shock. At each timestep, the conformal threshold is updated using the quantile $q_{\mathcal{D}_t}(1-\bar{\alpha}_t)$ based on the current NCSF and newly observed data. Score parameters $\theta_t$ are periodically updated using estimates of distribution shift to obtain more efficient regions, and replay rescores recent observations under the updated NCSF ccount for the induced change in the score distribution . Due to computational intensity, steps (2) and (3) are performed every $t_s$ timesteps, while (1) is updated at every timestep. An overview is shown in \cref{fig:adaptnc-overview-figure}, with full details in \cref{alg:adaptnc} (Appendix~\ref{appendix:method}).

% Our method, Adaptation of Non-Conformity scores (AdaptNC) extends online conformal prediction by jointly adapting the conformal threshold and non-conformity score function (NCSF). (1) \textbf{Conformal threshold adaptation} maintains long-run coverage, (2) \textbf{Score Parameter Adaptation} improves prediction region efficiency under distribution shift, and (3) \textbf{Replay} recalibrates the threshold after score updates to mitigate coverage shock. In the conformal threshold update step, we chose the quantile $q_{\mathcal{D}_t}(1-\bar{\alpha}_t)$ based on the current NCSF and newly observed data. In the score parameter adaptation step, we use the current estimates of the rate of change of the data distribution to fit efficient non-conformity score parameters $\theta_t$. Finally, in the replay step, we account for the change induced in the score distribution by the change of the NCSF by replaying recent observations. Due to computational intensity, steps (2) and (3) are only run every $t_s$ timesteps, while (1) is run every timestep. The AdaptNC algorithm is presented pictorially in \cref{fig:adaptnc-overview-figure}, and in detail in \cref{alg:adaptnc} (in Appendix~\ref{appendix:method}).   

The data observed is saved as tuples $(X_t,Y_t,\hat{Y}_t)$ in a history buffer $\mathcal{H}$. For each observation, $\beta_t$ is calculated according to \cref{eq:def_beta_t}. The dataset $\mathcal{D}_t$ used in the quantile calculation is a rolling window of the last $W$ timesteps.

% \subsection{Overview}
% What state AdaptNC maintains.
% What happens each timestep.
% What happens every t_s timesteps.
% Three components and their roles.

% \subsection{Online Threshold Adaptation}
% Uses DtACI for current score function.
% Computes beta_t.
% Updates expert weights.
% Produces q_t.
% Expert weights are reused for score adaptation.

% \subsection{Adaptive Score Optimization}
% Motivation: threshold adaptation resizes; score adaptation reshapes.
% Reweight history with DtACI weights/rates.
% Solve weighted efficiency objective.

% \paragraph{Adaptive history weighting.}
% Define omega_{j,t}.

% \paragraph{Volume-efficient score fitting.}
% KDE high-density region.
% Convex hull / score parameterization.

% \subsection{Replay for Score Updates}
% Define coverage shock.
% Explain why changing theta disrupts DtACI state.
% Counterfactual replay over last W observations.

% AdaptNC has four steps: (1) \textbf{Observation}, (2) \textbf{Conformal threshold update}, (3) \textbf{Score parameter adaptation}, and (4) \textbf{Replay}. In the observation step, we observe the feature vector $X_t$, produce a point prediction $\hat{Y}_t$, and then observe the true outcome $Y_t$. We compute the non-conformity score $s(X_t, Y_t; \theta_t)$ using the current score parameters $\theta_t$ and store the tuple $(X_t, Y_t, s(X_t, Y_t; \theta_t))$ in a history buffer $\mathcal{H}$.
\vspace{-0.8em}
\paragraph{Conformal Threshold Update}
At every timestep, AdaptNC uses DtACI (\cref{alg:dtaci}) to adapt the conformal threshold online. The goal of this step is to select an adaptive miscoverage level $\bar{\alpha}_t$ and the corresponding quantile $q_t = q_{\mathcal{D}_t}(1-\bar{\alpha}_t)$, where $\mathcal{D}_t$ is a rolling window of the last $W$ timesteps. To enable coverage under unknown distribution shift, $k$ ACI \citep{gibbs_adaptive_2021} experts are maintained and dynamically reweighted. The procedure involves calculating an inverse quantile to transform the raw non-conformity score, expert reweighting, and finally expert update. This procedure aims to estimate $\alpha^*_t$, the optimal value for the miscoverage rate that would yield the correct coverage.

Based on the error between the predicted quantile for each expert $\alpha_t^i$ and the observed $\beta_t$, the weight $w_t^i$ is exponentially reweighted. When these weights are applied to the experts, we get the algorithm's estimate of the current conformal quantile $\bar{\alpha}_t$. Finally, each expert is updated according to the expert learning rate $\gamma_i$ and the error indicator $\err_t^i$, which is $1$ if the observation is not covered by the $i$-th expert, and $0$ otherwise. The complete DtACI algorithm is presented in \cref{alg:dtaci} (in Appendix~\ref{appendix:method}).

%Needs Definition: \beta, \alpha, \alpha*, \sigma
\vspace{-0.8em}
\paragraph{Score Parameter Adaptation}
Threshold adaptation can maintain coverage under distribution shift, but it can only resize prediction regions for a fixed score function. To adapt the geometry of the prediction region, AdaptNC periodically updates the score parameters $\theta_t$. The objective of this step is to find score parameters that reduce prediction-region volume while preserving the desired coverage behavior. Since the future distribution is unknown, AdaptNC approximates this objective using a relevance-weighted history buffer, where the relevance weights are derived from the current DtACI expert weights $w_t^i$ and adaptation rates $\gamma_i$.

Let $\mathcal{H}_t$ denote the history buffer at time $t$. For each historical observation indexed by $j\leq t$, AdaptNC assigns a relevance weight using the DtACI expert mixture:
\begin{equation}\label{eq:weight_calc}
    \bar{\omega}_{j,t}
    =
    \sum_{i=1}^{k} w_t^i(1-\gamma_i)^{t-j},
    \qquad
    \omega_{j,t}
    =
    \frac{\bar{\omega}_{j,t}}
    {\sum_{\ell\in\mathcal{H}_t}\bar{\omega}_{\ell,t}}
\end{equation}
This weighting scheme induces a weighted empirical distribution $\mathcal{H}_t^\omega$. Experts with larger adaptation rates $\gamma_i$ emphasize recent observations, while experts with smaller adaptation rates retain longer memory. Therefore, the DtACI mixture provides an adaptive estimate of how much historical data should influence score adaptation.

Using the weighted history distribution $\mathcal{H}_t^\omega$, AdaptNC seeks score parameters that minimize prediction-region volume subject to weighted empirical coverage. This gives the objective
\begin{subequations}\label{eq:actual-volume-problem}
\begin{align}
    \min_{\theta}\quad
    \mathbb{E}_{X\sim \mathcal{H}_t^\omega}
    \left[
        \mathrm{Vol}\left(\mathcal{C}(X;\theta,q_\theta)\right)
    \right] \quad
    \text{s.t.}~~\mathbb{P}_{(X,Y)\sim \mathcal{H}_t^\omega}
    \left[
        Y\in \mathcal{C}(X;\theta,q_\theta)
    \right]
    \geq 1-\alpha,
\end{align}
\end{subequations}
where $q_\theta$ denotes the threshold induced by the score function with parameters $\theta$ on the weighted history. In practice, AdaptNC approximates this objective by estimating a high-density region of the weighted residual distribution and fitting a convex score template to that region.

To estimate the high-density region, AdaptNC forms residual samples $Y_j-h(X_j)$ from the weighted history buffer and constructs a weighted kernel density estimate (KDE)~\citep{parzen_estimation_1962}. We then sample candidate residuals from the KDE, evaluate their estimated density, and retain the highest-density samples whose empirical mass is approximately $1-\alpha$. We denote this retained set by $\hat{R}_t$. This Monte Carlo procedure provides an empirical approximation of the high-density residual region under the current weighted data distribution; the full algorithmic details are given in Appendix~\ref{sec:mc-kde}.

Given the estimated high-density region $\hat{R}_t$, AdaptNC fits a convex shape template to cover these points, in a manner similar to \citep{tumu_multi-modal_2024}. We parameterize the template by $\theta_t=\{A\in\mathbb{R}^{r\times p}, b\in\mathbb{R}^r\}$, where $r$ is the number of facets. The resulting non-conformity score is: $s(X,y;\theta_t) = \max_{j\in\{1,\ldots,r\}}\left(A_j(y-h(X))-b_j\right).$

Under this parameterization, the zero sublevel set of the score corresponds to the fitted convex template centered at the predictor output $h(X)$. We fit the convex hull of $\hat{R}_t$ using the QuickHull algorithm \cite{barber_quickhull_1996}. This produces convex prediction-region geometry, which is useful for downstream planning and control applications \citep{lindqvist_nonlinear_2020}.
\vspace{-0.8em}
\paragraph{Replay}\label{sec:method-replay}
Changing the score parameters alters the distribution of non-conformity scores. Consequently, the DtACI state prior to an update may remain calibrated to the previous score function. This mismatch induces \emph{coverage shock}, a transient calibration error immediately following a score update. In the DtACI regret bound, this appears as a potentially large change in the hindsight-optimal miscoverage level $\alpha_t^*$ across consecutive timesteps (see \cref{sec:theory-replay,appendix:gmm} for theoretical and empirical evidence ).

To mitigate coverage shock, AdaptNC replays the most recent $W$ observations after each score update. The replay procedure rescans these observations under the updated score parameters $\theta_t$, recomputes the corresponding quantiles $\beta_t$, and reruns the threshold updates. This can be interpreted as a counterfactual reconstruction of the DtACI state under the new score function. The resulting miscoverage level and threshold are then used for subsequent online predictions, effectively resetting the the threshold adaptation process under the new score distribution and reducing the abrupt calibration error induced by score updates.

% Changing the score parameters changes the distribution of non-conformity scores. Consequently, the DtACI state before a score update may be calibrated to the old score function rather than the updated one. This mismatch can produce \emph{coverage shock}: a transient calibration error immediately after a score update. In the DtACI regret bound, this effect appears as a potentially large change in the hindsight-optimal miscoverage level $\alpha_t^*$ across consecutive timesteps. Theoretical and empirical evidence for this phenomenon is presented in \cref{sec:theory-replay} and \cref{fig:adapt_nc_alpha_star}.

% To mitigate coverage shock, AdaptNC replays the most recent $W$ observations after each score update. Starting from a freshly initialized DtACI state, the replay procedure rescans these observations using the updated score parameters $\theta_t$, recomputes the corresponding realized quantiles $\beta_t$, and reruns the threshold-adaptation updates. This can be interpreted as a counterfactual evaluation of what the DtACI state would have been if the recent observations had all been scored under the updated score function. The resulting adaptive miscoverage level and threshold are then used for subsequent online predictions. Thus, replay resets the threshold-adaptation chain under the new score distribution, reducing the abrupt calibration error caused by score-function updates.

    \vspace{-0.7em}
    \section{Theoretical Analysis}
\vspace{-0.5em}
AdaptNC modifies adaptive conformal prediction in two ways: it updates the
nonconformity score online, and it replays recent observations after score
updates to realign the ACI expert states with the updated score. These two
mechanisms serve different goals. Score adaptation controls the geometry and
efficiency of the prediction region, while the ACI expert updates control
long-run coverage. Our theoretical analysis therefore separates these effects:
we first study long-run stability of prediction-region size, then show that
single-expert ACI coverage is preserved whenever replay induces only a
vanishing average perturbation of the expert recursion.
\(\theta_t\).
\vspace{-0.3em}
\subsection{Long-Run Score Function and Region Stability}
\vspace{-0.3em}
We make a series of assumptions in the long-run stability and coverage analysis, which are analogous to the long-run regimes considered in prior work \cite{gibbs_conformal_2024}. 
\begin{assumption}[Weighted Residual Stability]
    \label{assump:weighted_residual_stability}
    We assume that the weighted distribution of residuals \(\mathcal Z_t\) 
converges in $f$-divergence to the steady state $\mathcal{Z}_*$ in the asymptote: $D_\phi(\mathcal{Z}_t,\mathcal{Z}_{*}) \to 0$ as $t \to \infty$. We additionally assume that the MCKDE parameters $M,N \to \infty$. 
\end{assumption}
In AdaptNC, because the score function is fit from weighted residuals, the corresponding stability object is the weighted residual distribution used by the score-adaptation step. We require the regularity conditions for the HDR stability result in \cref{assump:hdr_regular}. We also assume that the MCKDE procedure converges in \cref{assump:template_consistency}. The proof of \cref{thm:score_func_stability} can be found in \cref{proof:score-func-stability}.

\begin{restatable}[Score Function Stability]{theorem}{scorefuncstability}\label{thm:score_func_stability}
Suppose \cref{assump:weighted_residual_stability,assump:hdr_regular,assump:template_consistency} hold. Let
$\widehat R_m$ be the MCKDE high-density residual region estimated at
score-update time $t_m$, and define\( K_m := \operatorname{conv}(\widehat R_m).\) Let\( R_\star := \{z:f_\star(z)\ge\tau_\star\}\) and \( K_\star := \operatorname{conv}(R_\star).\)
Let $s_{K_m}$ and $s_{K_\star}$ denote the canonical support-function scores
induced by $K_m$ and $K_\star$. Then, for any evaluation domain
$\mathcal A\subseteq\mathcal X\times\mathcal Y$,
\[
    \sup_{(x,y)\in\mathcal A}|s_{K_m}(x,y)-s_{K_\star}(x,y)| \xrightarrow{p}0 \quad\therefore\quad \sup_{(x,y)\in\mathcal A}|s_{K_{m+1}}(x,y)-s_{K_m}(x,y)| \xrightarrow{p}0.
\]
\end{restatable}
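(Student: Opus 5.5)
The plan is to reduce the score-function statement to Hausdorff convergence of convex sets, using the fact that support-function scores are 1-Lipschitz in the set argument. For a compact convex $K$ containing the origin, the canonical support-function score is
\[
s_K(x,y) = \sup_{u\in U}\bigl(\langle u, y-h(x)\rangle - h_K(u)\bigr),
\]
where $h_K$ is the support function of $K$ and $U$ is the (fixed) set of unit facet directions. This gives
\[
|s_{K}(x,y)-s_{K'}(x,y)| \le \sup_{u\in U}|h_K(u)-h_{K'}(u)| \le d_H(K,K'),
\]
uniformly in $(x,y)$, since $\sup$ differences are bounded by the sup of differences. Hence the left-hand claim follows once I show $d_H(K_m,K_\star)\xrightarrow{p}0$. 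The conclusion after $\therefore$ then follows from the triangle inequality,
\[
\sup_{\mathcal A}|s_{K_{m+1}}-s_{K_m}| \le \sup_{\mathcal A}|s_{K_{m+1}}-s_{K_\star}| + \sup_{\mathcal A}|s_{K_m}-s_{K_\star}|,
\]
together with the fact that sums of terms tending to zero in probability also tend to zero in probability.

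To get $d_H(K_m,K_\star)\to0$, I use that the convex hull is a contraction in Hausdorff distance: $d_H(\operatorname{conv}A,\operatorname{conv}B)\le d_H(A,B)$. So it suffices to show $d_H(\widehat R_m,R_\star)\xrightarrow{p}0$, which I split into two pieces:
\[
d_H(\widehat R_m,R_\star)\le d_H(\widehat R_m,R_{t_m}) + d_H(R_{t_m},R_\star).
\]
Here $R_{t}=\{z:f_t(z)\ge\tau_t\}$ is the true $(1-\alpha)$ high-density region of the weighted residual law $\mathcal Z_t$.

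The first term is the Monte Carlo/KDE estimation error. Because $M,N\to\infty$ (\cref{assump:weighted_residual_stability}), I would take it to vanish by \cref{assump:template_consistency}, which I read as exactly the consistency of the MCKDE HDR estimate in Hausdorff distance.

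The second term is the main obstacle: converting $D_\phi(\mathcal Z_t,\mathcal Z_\star)\to0$ into convergence of the level sets. I would proceed in three steps.

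1. An $f$-divergence tending to zero implies total-variation convergence, via Pinsker-type inequalities valid for standard $\phi$.
2. To pass from total variation to densities, I would invoke \cref{assump:hdr_regular}. I expect it to supply a uniformly bounded, equicontinuous family of densities with a common compact support. Under that, TV convergence upgrades to $\|f_t-f_\star\|_\infty\to0$, by an Arzelà–Ascoli argument with every subsequential limit identified as $f_\star$. The level $\tau_t$ is defined by the mass condition $\int_{f_t\ge\tau_t} f_t = 1-\alpha$. Since the map $\tau\mapsto \mathbb P_\star(f_\star\ge\tau)$ is continuous and strictly monotone near $\tau_\star$ (no flat regions at the level, part of HDR regularity), this gives $\tau_t\to\tau_\star$.
3. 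The non-vanishing gradient (margin) condition at $\{f_\star=\tau_\star\}$ in \cref{assump:hdr_regular} then yields the standard level-set stability bound
\[
d_H(R_t,R_\star)\lesssim \|f_t-f_\star\|_\infty+|\tau_t-\tau_\star|.
\]

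Combining the two pieces gives $d_H(\widehat R_m,R_\star)\xrightarrow{p}0$ as $m\to\infty$ (so $t_m\to\infty$), which completes the chain.

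If \cref{assump:hdr_regular} does not give equicontinuity, my fallback is to state the Hausdorff level-set convergence directly as the content of that assumption. The uniform-in-$\mathcal A$ conclusion is unaffected, because the Lipschitz bound in the first paragraph does not depend on $(x,y)$.
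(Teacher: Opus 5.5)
Your outer chain is the paper's. The bound $|s_K-s_{K'}|\le\sup_{\|v\|_2=1}|H_K(v)-H_{K'}(v)|=d_H(K,K')$ (\cref{lem:score_stability_from_hull}), the hull contraction (\cref{lem:convex_hull_stability}), the gradient-margin level-set argument and the final triangle inequality all match. One minor fix: the canonical score takes the supremum over the whole unit sphere. A facet set that depends on $K$ is not a common index set, so take $U=S^{p-1}$.

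The gap is in how you dispose of the first term $d_H(\widehat R_m,R_{t_m})$. \cref{assump:template_consistency} does not say this. It asserts $d_H(K_m,K_\star)\xrightarrow{p}0$, which concerns convex hulls and the \emph{limiting} region $R_\star$. It gives no control of the possibly nonconvex set $\widehat R_m$ relative to $R_{t_m}$, since hulls can converge while the sets themselves do not. Read as actually stated, the assumption would make the theorem immediate from your first paragraph. Read as you read it, the step is unsupported.

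You also cannot easily prove the first term with your own step 3. Comparing $(\widehat f_{t_m},\widehat\tau_{t_m})$ with $(f_{t_m},\tau_{t_m})$ requires a gradient margin for $f_{t_m}$ at level $\tau_{t_m}$, and that margin is assumed only for $f_\star$. As a result, the Monte Carlo threshold $\widehat\tau$ is never actually analyzed anywhere in your argument.

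The repair, which is the route of \cref{lem:mckde_hausdorff_consistency}, is to apply the triangle inequality to densities rather than to sets.
\begin{itemize}
\item By items 1--2 of \cref{assump:hdr_regular}, $\|\widehat f_t-f_\star\|_\infty\le\|\widehat f_t-f_t\|_\infty+\|f_t-f_\star\|_\infty\xrightarrow{p}0$.
\item By Glivenko--Cantelli over the $M_t\to\infty$ Monte Carlo density scores, together with the quantile regularity of item 3, $\widehat\tau_t\xrightarrow{p}\tau_\star$.
\item Set $\varepsilon_t=\|\widehat f_t-f_\star\|_\infty+|\widehat\tau_t-\tau_\star|$. Then $\{f_\star\ge\tau_\star+\varepsilon_t\}\subseteq\widehat R_t\subseteq\{f_\star\ge\tau_\star-\varepsilon_t\}$.
\item This sandwich and the margin of $f_\star$ alone give $d_H(\widehat R_t,R_\star)\le C\varepsilon_t$.
\end{itemize}
No intermediate region $R_{t_m}$ is needed. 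Your steps 1--2 (Pinsker, Arzel\`a--Ascoli) also become unnecessary. Item 1 of \cref{assump:hdr_regular} posits $\|f_t-f_\star\|_\infty\to0$ directly, which is exactly your fallback.
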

\begin{corollary}[Stability of prediction-region size]
\label{cor:prediction_region_size_stability}
Suppose
\(
    \sup_{(x,y)\in\mathcal A}
    |s_t(x,y)-s_\star(x,y)|
    \le \delta_t
\)
with \(\delta_t\to0\) in probability due to \cref{thm:score_func_stability}.
Under Assumption~\ref{assump:hdr_regular}, for every fixed threshold \(q\),
\[
    \left|
    \operatorname{Vol}(C_t(x;q))
    -
    \operatorname{Vol}(C_\star(x;q))
    \right|
    \le
    L_{\mathrm{vol}}\delta_t \quad\textrm{in probability}.
\]
Hence the size of the prediction region stabilizes as the score function
stabilizes in probability.
\end{corollary}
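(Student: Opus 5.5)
The plan is to reduce the volume comparison to a comparison of nested sublevel sets and then use the regularity in \cref{assump:hdr_regular} to bound the volume of the resulting shell linearly in its thickness.

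\textbf{Step 1: sandwich the regions.} Fix $x$ and $q$, and work on the event $\sup_{(x,y)\in\mathcal A}|s_t(x,y)-s_\star(x,y)|\le\delta_t$. On this event, for every $y$ with $(x,y)\in\mathcal A$, the bound $s_t(x,y)\le q$ implies $s_\star(x,y)\le q+\delta_t$. Likewise $s_\star(x,y)\le q-\delta_t$ implies $s_t(x,y)\le q$. Hence
\[
C_\star(x;q-\delta_t)\subseteq C_t(x;q)\subseteq C_\star(x;q+\delta_t).
\]
The same sandwich holds with $C_t$ and $C_\star$ at level $q$ in the middle. The only requirement is that $\mathcal A$ contain the relevant sublevel sets. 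I will take $\mathcal A$ to be a compact domain containing $C_\star(x;q+\delta)$ for small $\delta$, which is allowed because \cref{thm:score_func_stability} holds for any evaluation domain.

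\textbf{Step 2: reduce to a shell.} From the sandwich, both $\operatorname{Vol}(C_t(x;q))$ and $\operatorname{Vol}(C_\star(x;q))$ lie between $\operatorname{Vol}(C_\star(x;q-\delta_t))$ and $\operatorname{Vol}(C_\star(x;q+\delta_t))$. Therefore
\[
\bigl|\operatorname{Vol}(C_t(x;q))-\operatorname{Vol}(C_\star(x;q))\bigr|\le \operatorname{Vol}\bigl(\{y: q-\delta_t< s_\star(x,y)\le q+\delta_t\}\bigr).
\]

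\textbf{Step 3: bound the shell volume (main obstacle).} Here I use the structure of $s_\star$ as the canonical support-function (gauge-type) score of the convex body $K_\star$. The sublevel set at level $q$ is the shifted Minkowski dilation $h(x)+(K_\star\oplus qB)$, or $(1+q)K_\star$ in the gauge normalization. Its volume $V(q)$ is therefore a polynomial in $q$ by the Steiner formula, or equals $(1+q)^p\operatorname{Vol}(K_\star)$. On bounded $q$ ranges it is Lipschitz, with a constant controlled by the surface area of $K_\star$. That boundedness should follow from \cref{assump:hdr_regular}, since the HDR $R_\star$ is bounded with nondegenerate density at the level $\tau_\star$. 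This gives a constant $L_{\mathrm{vol}}$ such that
\[
V(q+\delta)-V(q-\delta)\le L_{\mathrm{vol}}\delta
\]
for small $\delta$, after absorbing the factor $2$ into the constant. If \cref{assump:hdr_regular} instead directly posits a Lipschitz (or margin) condition on $q\mapsto\operatorname{Vol}(C_\star(x;q))$, I would invoke it directly.

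\textbf{Step 4: conclude in probability.} The bound from Steps 1–3 holds on the event $\{\sup_{\mathcal A}|s_t-s_\star|\le\delta_t\}$. By \cref{thm:score_func_stability} this event holds with probability tending to one, and $\delta_t\to0$ in probability. So the volume gap is at most $L_{\mathrm{vol}}\delta_t$ with probability tending to one, and in particular it tends to $0$ in probability.
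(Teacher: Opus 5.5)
Your proposal is correct and follows the paper's route. The paper's proof simply asserts the bound from Assumption~\ref{assump:hdr_regular}, whose item 6 is exactly the shell condition $\operatorname{Vol}\{y: q-\varepsilon\le s_\star(x,y)\le q+\varepsilon\}\le L_{\mathrm{vol}}\varepsilon$, and then uses $\delta_t\to 0$ in probability from \cref{thm:score_func_stability}; your sandwich argument in Steps 1--2 supplies the inclusion the paper leaves implicit. Your fallback in Step 3 (invoking that condition directly) is what the paper does, so the Steiner-formula detour is unnecessary, and as written it only covers $q\ge 0$: for the canonical support-function (signed-distance) score, the negative sublevel sets are inner parallel bodies rather than Minkowski dilations, though their volumes are still Lipschitz in $q$ with constant at most the surface area of $K_\star$.
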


Thus, the score-stability result implies that the geometry and volume of the
prediction regions stabilize around the limiting convex-template region. In this
sense, the score-adaptation analysis is an efficiency result: once the residual
geometry stabilizes, AdaptNC no longer incurs persistent changes in prediction
region size due to score updates.
\vspace{-0.3em}
\subsection{Coverage Guarantees}\label{sec:theory-coverage}
\vspace{-0.3em}
The long-term coverage guarantee of our method follows from a variation on the per-expert guarantee of ACI, which we present in \cref{thm:per_expert_longrun_coverage}. This modification is necessary because the recursion used to prove the original ACI guarantee is perturbed by the score updates and replay mechanism. We show that the single-expert ACI guarantee is preserved whenever replay induces only a vanishing average perturbation of the expert recursion. We then give a sufficient condition for this perturbation to vanish: successive score updates must become small on replayed observations, and replayed observations must not concentrate near the conformal boundary.

\begin{theorem}[Per-Expert Coverage Guarantee with AdaptNC]
\label{thm:per_expert_longrun_coverage}
Fix an expert $i\in\{1,\ldots,k\}$ with step size $\gamma_i>0$. At time $t$, let the expert's prediction set be
\(
    \mathcal{C}_t^i = \mathcal{C}_t(X_t;\theta_t,q_{\mathcal{D}_t}(1-\alpha_t^i)),
\)
and define its realized miscoverage indicator
\(
    \err_t^i = \mathds{1}\{Y_t\notin \mathcal{C}_t^i\}.
\)
Suppose the expert state evolves according to the perturbed ACI recursion
\(
    \alpha_{t+1}^i = \alpha_t^i + \gamma_i(\alpha-\err_t^i) + \Delta_t^i,
\)
where $\Delta_t^i$ is the perturbation caused by score updates and replay. Assume that
the expert states remain uniformly bounded in the range $(0,1)$, then:
\begin{equation}
\label{eq:min-pert-to-lt-error}
    \frac{1}{T}\sum_{t=1}^T |\Delta_t^i| \longrightarrow 0 \implies \frac{1}{T}\sum_{t=1}^T \err_t^i \longrightarrow \alpha
\end{equation}
\end{theorem}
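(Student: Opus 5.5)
The approach is the standard telescoping argument behind the original ACI guarantee of \citet{gibbs_adaptive_2021}, with the perturbation terms carried along as an additive error. The first step is to sum the perturbed recursion $\alpha_{t+1}^i = \alpha_t^i + \gamma_i(\alpha-\err_t^i) + \Delta_t^i$ over $t=1,\ldots,T$. The left-hand side telescopes, which gives the identity
\begin{equation*}
    \alpha_{T+1}^i - \alpha_1^i = \gamma_i\sum_{t=1}^T(\alpha-\err_t^i) + \sum_{t=1}^T \Delta_t^i .
\end{equation*}
Dividing by $\gamma_i T$ and rearranging then yields
\begin{equation*}
    \frac{1}{T}\sum_{t=1}^T \err_t^i - \alpha = \frac{\alpha_1^i-\alpha_{T+1}^i}{\gamma_i T} + \frac{1}{\gamma_i T}\sum_{t=1}^T \Delta_t^i .
\end{equation*}

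The second step bounds the two terms on the right separately. For the first term, the hypothesis that the expert states remain uniformly bounded in $(0,1)$ gives $|\alpha_1^i-\alpha_{T+1}^i|\le 1$. This term is therefore at most $1/(\gamma_i T)$, which tends to $0$ because $\gamma_i>0$ is fixed. For the second term, the triangle inequality gives $\bigl|\frac{1}{\gamma_i T}\sum_{t=1}^T \Delta_t^i\bigr| \le \frac{1}{\gamma_i}\cdot\frac{1}{T}\sum_{t=1}^T|\Delta_t^i|$. By the assumption in \cref{eq:min-pert-to-lt-error}, this quantity vanishes. Combining the two bounds gives
\begin{equation*}
    \Bigl|\frac{1}{T}\sum_{t=1}^T \err_t^i-\alpha\Bigr| \le \frac{1}{\gamma_i T} + \frac{1}{\gamma_i T}\sum_{t=1}^T|\Delta_t^i| \to 0 ,
\end{equation*}
which is the claimed conclusion. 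The argument uses nothing about $\theta_t$ or the replay mechanism beyond the fact that their combined effect on the expert state is absorbed into $\Delta_t^i$.

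The main subtlety is the boundedness hypothesis rather than any calculation. In the original ACI analysis, boundedness of $\alpha_t$ (within $[-\gamma,1+\gamma]$) is derived, not assumed. It follows because once $\alpha_t$ leaves $[0,1]$ the set becomes trivial or full, and $\err_t$ then pushes $\alpha_t$ back. With perturbations $\Delta_t^i$ this self-correction can fail, because an adversarial sequence of replay-induced jumps could push the state outward. This is why the statement takes uniform boundedness as an assumption. I would simply invoke it, and remark that any fixed bound $B$ on $\sup_t|\alpha_t^i|$ suffices, since the first term is then at most $2B/(\gamma_i T)$. A secondary point is to make precise what $\Delta_t^i$ means at replay times. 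I would define it as the difference between the post-replay state and the unperturbed ACI update, $\Delta_t^i \coloneqq \alpha_{t+1}^i - \alpha_t^i - \gamma_i(\alpha-\err_t^i)$. With this definition the recursion holds as an identity and the telescoping argument goes through unchanged.
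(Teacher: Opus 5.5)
Your proposal is correct and follows the paper's proof essentially line for line. You telescope the perturbed recursion, isolate $\frac{1}{T}\sum_{t=1}^T \err_t^i-\alpha$ as $\frac{\alpha_1^i-\alpha_{T+1}^i}{\gamma_i T}+\frac{1}{\gamma_i T}\sum_{t=1}^T\Delta_t^i$, and send the first term to zero via bounded expert states and the second via the triangle inequality and the averaged-perturbation hypothesis — exactly as the paper does; your remarks on why boundedness must be assumed, and on defining $\Delta_t^i$ as the recursion residual, are sensible clarifications the paper leaves implicit.
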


\begin{theorem}[Averaged Replay Perturbation Bound]
\label{thm:perturbation_to_zero_avg}
Fix an ACI expert \(i\) with step size \(\gamma_i>0\).
Let
\(\{t_m\}_{m\ge 1}\) be the score-update times, and let
\(\mathcal R_m\) be the replay window used at update \(t_m\). Let
\(s_m^-\) and \(s_m^+\) denote the score functions immediately before and
after the update. Define
\(
    \varepsilon_m := \sup_{r\in\mathcal R_m}\left|s_m^+(X_r,Y_r)-s_m^-(X_r,Y_r)\right|.
\)
Let \(q_{m,r}^{i,-}\) and \(q_{m,r}^{i,+}\) be the conformal thresholds used
by expert \(i\) at replay step \(r\) under the pre-update and post-update
scores. Suppose there exists \(L_q<\infty\) such that
\(
    |q_{m,r}^{i,+}-q_{m,r}^{i,-}| \le L_q\varepsilon_m
\)
for all \(m\) and \(r\in\mathcal R_m\). Suppose further that
\[
    \frac1T
    \sum_{m:t_m\le T}
    \sum_{r\in\mathcal R_m}
    \mathbf 1
    \left\{
        \left|
        s_m^-(X_r,Y_r)-q_{m,r}^{i,-}
        \right|
        \le
        (1+L_q)\varepsilon_m
    \right\}
    \longrightarrow 0.
\]
Then the replay-induced perturbations satisfy
\(
    \frac1T\sum_{t=1}^T|\Delta_t^i| \longrightarrow 0.
\)
% Therefore, if the expert states remain uniformly bounded, then
% \[
%     \frac1T\sum_{t=1}^T\err_t^i \longrightarrow \alpha.
% \]
\end{theorem}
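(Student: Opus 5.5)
We need to prove that the averaged perturbation vanishes. The plan is to write the perturbation \(\Delta_t^i\) explicitly as the difference between the replayed post-update expert trajectory and the trajectory the unperturbed recursion would have produced, and then control that difference by counting replay steps whose coverage indicator changes.

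First we make \(\Delta_t^i\) concrete. At non-update times the recursion is the plain ACI recursion, so \(\Delta_t^i=0\). At an update time \(t_m\), replay reruns the expert over \(r\in\mathcal R_m\). Expert \(i\)'s updates are \(\gamma_i(\alpha-\err_{m,r}^{i,\pm})\), where
\[
\err_{m,r}^{i,\pm}=\mathbf 1\{s_m^\pm(X_r,Y_r)>q_{m,r}^{i,\pm}\}.
\]
We therefore identify the perturbation at \(t_m\) with the accumulated discrepancy
\[
\Delta_{t_m}^i=\gamma_i\sum_{r\in\mathcal R_m}\bigl(\err_{m,r}^{i,-}-\err_{m,r}^{i,+}\bigr).
\]
This is the telescoped difference between the replayed state and the state obtained by running the recursion on the old scores. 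It follows that
\[
|\Delta_{t_m}^i|\le\gamma_i\sum_{r\in\mathcal R_m}\mathbf 1\{\err_{m,r}^{i,+}\ne\err_{m,r}^{i,-}\}.
\]
Justifying that this is the right accounting of \(\Delta_t^i\) is the main obstacle. The thresholds \(q_{m,r}^{i,\pm}\) depend on the expert states along the replay path. The hypothesis \(|q^{+}-q^{-}|\le L_q\varepsilon_m\) is exactly what lets us sidestep the path dependence: it is assumed uniformly over replay steps, so we never need to propagate state differences step by step. We simply compare the two indicator sequences pointwise.

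Second, we show the deterministic flip lemma. Suppose \(|s_m^-(X_r,Y_r)-q_{m,r}^{i,-}|>(1+L_q)\varepsilon_m\). Then the two indicators agree. Indeed, the score moves by at most \(\varepsilon_m\) by the definition of \(\varepsilon_m\), and the threshold moves by at most \(L_q\varepsilon_m\) by hypothesis. Hence the sign of \(s_m^+-q_{m,r}^{i,+}\) equals the sign of \(s_m^--q_{m,r}^{i,-}\), since the total shift is at most \((1+L_q)\varepsilon_m\), which is below the margin. Consequently,
\[
\mathbf 1\{\err^{+}\ne\err^{-}\}\le\mathbf 1\{|s_m^-(X_r,Y_r)-q_{m,r}^{i,-}|\le(1+L_q)\varepsilon_m\}.
\]

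Third, we sum over the horizon. Combining the two previous steps,
\[
\frac1T\sum_{t=1}^T|\Delta_t^i|\le\frac{\gamma_i}{T}\sum_{m:t_m\le T}\sum_{r\in\mathcal R_m}\mathbf 1\bigl\{|s_m^--q_{m,r}^{i,-}|\le(1+L_q)\varepsilon_m\bigr\}.
\]
The right-hand side tends to \(0\) by the assumed boundary non-concentration condition. If the convergence is only in probability, the conclusion holds in the same mode. This yields the claim, and together with \cref{thm:per_expert_longrun_coverage} it gives per-expert long-run coverage.
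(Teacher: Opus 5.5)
Your proposal is correct and matches the paper's proof essentially step for step. Like the paper, you set $\Delta_t^i=0$ off update times and compare the post- and pre-update replay trajectories started from a common expert state, which gives $|\Delta_{t_m}^i|\le\gamma_i\sum_{r\in\mathcal R_m}|\err_{m,r}^{i,+}-\err_{m,r}^{i,-}|$; you then use the same $(1+L_q)\varepsilon_m$ margin argument to show that indicators can flip only near the pre-update threshold, and you sum against the boundary non-concentration hypothesis, exactly as the paper does.
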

The second supposition requires that replayed observations do not accumulate in the shrinking band around the conformal boundary where a small score update can change the miscoverage indicator. Score stability makes the width of this band shrink, but it does not by itself rule out many replayed points lying near the boundary. The assumption can be interpreted as a regularity condition on the data distribution. Taken together, the preceding results establish the single-expert long-run coverage property underlying AdaptNC. Proofs are provided in \cref{appendix:theory}. Since the DtACI long-run coverage guarantee holds for arbitrary sequences, AdaptNC inherits this guarantee for any sequence of score functions.

% We formally define the coverage shock to be the $\alpha^*$ shift term from \cref{rem:big-o-regret}:
% \begin{equation}
%     \textrm{coverage shock} = \left|\alpha_t^{*}-\alpha_{t-1}^{*}\right|.
% \end{equation}
% This, combined with a term that varies inversely with the size of the window $|W|$, combine to dominate the regret expression. Minimizing the coverage shock, or minimizing the difference between the optimal miscoverage level across consecutive timesteps will improve the regret performance of the algorithm. Conversely, the changes to the NCSF can have an outsized impact on this difference. The counterfactual replay mechanism is designed to mitigate this. Here, we present an intuitive explanation paired with evidence from a toy experiment in support.

% The hindsight optimal level of $\alpha*_t$ is related to the distribution of the non-conformity scores at time $t$. We decompose the shift in the score distribution into that caused by the observation of a new data point, and then the change in the NCSF. First, we note that the exchange of a new and old sample produces a bounded change in the score distribution in \cref{prop:dset-change-bound}. Next, we show that the distance caused by the change in NCSF parameter is not bounded by the data-distribution shift.
\vspace{-0.3em}
\subsection{Replay}\label{sec:theory-replay}
% \begin{wrapfigure}{r}{0.38\columnwidth}
%     \centering
%     \includegraphics[width=\linewidth]{media/results/gmm_alpha_star_d_shift.pdf}
%     \vspace{-5pt}
%     \caption{This figure shows the change in the optimal threshold $\alpha_t^*$ based on scores from the initial and final distributions, $\mathcal{N}_1$ and $\mathcal{N}_2$. The bottom pane shows the rate of distribution change, while the top shows the difference between optimal quantiles.}
%     % \vspace{-3em}
%     \label{fig:adapt_nc_alpha_star}
% \end{wrapfigure}
\vspace{-0.3em}
We formally define \emph{coverage shock} as the $\alpha^*$ shift term from \cref{rem:big-o-regret}:
$\textrm{coverage shock} = \left|\alpha_t^{*}-\alpha_{t-1}^{*}\right|.$
Together with the term that varies inversely with the window size $|W|$, this quantity dominates the regret. Minimizing coverage shock, i.e., the change in the optimal miscoverage level across consecutive timesteps, will improve regret performance. However, changes to the NCSF can have an outsized impact on this difference, motivating the counterfactual replay mechanism. Here, we present an intuitive explanation paired with evidence from a toy experiment in support.

The hindsight-optimal $\alpha_t^*$ depends on the distribution of non-conformity scores at time $t$. We decompose changes in this distribution into those arising from new observations and those induced by updates to the NCSF. First, we note that replacing a single data point yields a bounded change in the score distribution in \cref{prop:dset-change-bound}. Next, we show that the distance caused by the change in NCSF parameter is not bounded by the data-distribution shift.

\begin{remark}\label{rem:param-adaptation-cost}
    The score distribution depends on both the underlying data and the NCSF used to process it. We can decompose the distribution shift as follows due to \cref{prop:dset-change-bound}.
    \begin{equation}
        \TV(S_{\theta_t,t}, S_{\theta_{t+1},t+1}) \le \delta + \TV(S_{\theta_t,t+1}, S_{\theta_{t+1},t+1})
    \end{equation}
\end{remark}

\subsection{Computational Complexity Analysis}
\label{subsec: comp_complexity}
\vspace{-0.3em}
The computational complexity is dominated by replay associated costs, proportional to the size of the replay window $W$.
\begin{remark}[Computational Complexity of AdaptNC]
\label{thm:adaptcompcomplexity}
Let $W$ denote the window size, $k$ the number of experts, $M$ the number of Monte Carlo samples, $d$ the state dimension, $r$ the number of facets of the convex hull, and  $n = |\hat{R}_t|$ the number of high-density points retained after thresholding. Under the assumption $d \leq 3$, the amortized per-timestep cost of AdaptNC is $O\!\left(W + k + MWd/t_s \right).$
\end{remark}
We defer a detailed analysis of computational complexity to \cref{app:complexity}, and a discussion in \cref{app:low_dim_assump}

% \subsubsection{Empirical Evidence}

% \clearpage
% \section{Additional Experiments}
% We present results with $50$ different random seeds for each environment. We present the results showing the same metrics as the main paper with multiple seeds. Finally we present a sensitivity analysis with selections of the parameters $t_s$ and $w$, as requested.

% \begin{table}[ht]
% \centering
% \caption{Prediction Region Volume across 50 random seeds}
% \label{tab:volume_results}
% \begin{tabular}{lccc}
% \toprule
% Method & \textbf{Indoor Localization} & \textbf{Multirotor} & \textbf{Social Navigation} \\
% \midrule
% AdaptNC (replay)    & $27.7564 \pm 2.0770$  & $0.0365 \pm 0.0099$ & $43.5648 \pm 5.9982$ \\
% AdaptNC (no replay) & $38.6379 \pm 3.6671$  & $0.0709 \pm 0.0172$ & $73.4688 \pm 9.6169$ \\
% DtACI               & $43.0940 \pm 4.4557$  & $0.4361 \pm 0.0112$ & $52.6658 \pm 8.1048$ \\
% Split-CP            & $98.9110 \pm 3.1200$  & $0.2090 \pm 0.0001$ & $0.7061 \pm 0.0151$ \\
% Tumu et. al (2024)  & $102.0860 \pm 7.0275$ & $0.0027 \pm 0.0003$ & $0.6962 \pm 0.0189$ \\
% \bottomrule
% \end{tabular}
% \end{table}

    \vspace{-0.7em}
    \vspace{-0.3em}
\section{Experiments}\label{sec:experiments}
\vspace{-0.5em}

We evaluate AdaptNC across three case studies capturing common forms of distribution shift in robotics and dynamical systems: (i) environmental shift from unmodeled sensor noise, (ii) policy-induced shift from changing interaction dynamics, and (iii) dynamics shift from gradual system degradation. Together, these settings provide a comprehensive testbed for adaptive uncertainty quantification under evolving conditions.

We present three experimental settings. In the \textbf{Indoor Localization} experiments, a robot performs localization using wireless signal strength measurements. 
Distribution shifts arise in this setting because the underlying localization model fails to account for realistic wireless channel dynamics which vary over time.
% The distribution shift arises because the underlying location estimator does not account for temporal variations in the wireless channel caused by shadowing and small-scale fading effects. 
In the \textbf{Social Navigation} setting, $8$ agents navigate a shared environment based on information from agents within a collaboration radius. A trajectory predictor seeks to estimate the future poses of the agents. The distribution shift in this scenario stems from a gradual increase in the collaboration radius. In the \textbf{Multirotor Tracking} experiment, a predictor seeks to estimate the future position of a multirotor tracking a reference trajectory while maintaining a fixed altitude, but encounters distribution shift due to gradual degradation in the multirotor actuators. These three experiments showcase AdaptNC's benefits in settings due to environmental changes, policy shift in multi-agent settings, and actuator degradation. Further details are in Appendices~\ref{appendix:indoor-localization}, \ref{appendix:social-navigation}, and \ref{appendix:multirotor-tracking}.

For each case study, we evaluate all methods across $50$ different random seeds to assess performance variability and robustness. For each seed, evaluation is conducted over a single 6000-step rollout without resets, and results are aggregated across all seeds.

\textbf{Evaluation Metrics}: \textbf{(1) Global Coverage}, defined as $1 - \frac{1}{T}\sum_{t=0}^{T} \err_t$, which measures recovery of the target coverage level,  \textbf{(2)} Average Coverage Volume per Covered Timestep, which quantifies conservatism. \textbf{(3) Local Coverage} metric, defined by $1 - \frac{1}{w}\sum_{t-\frac{w}{2}+1}^{t+\frac{w}{2}} \err_t$, a sliding-window average of size $w$ that captures temporal behavior under distribution shift. 

\textbf{Baselines}: We compare against four baselines. \textbf{(1) DtACI:} As the state-of-the-art method for online conformal prediction under distribution shift, it serves as the primary and most relevant baseline as it consistently outperforms prior approaches \cite{gibbs_conformal_2024}. \textbf{(2) Split Conformal Prediction:} assumes exchangeability and does not account for distribution shift. \textbf{(3) AdaptNC d without Replay:} an ablation to assess the role of replay in stabilizing adaptation and mitigating coverage shock. \textbf{(4) Shape-CP:} \cite{tumu_multi-modal_2024} uses optimized but static nonconformity scores, highlighting the limitations of static calibration under distribution shift.

\begin{table*}[b]
\vspace{-1.8em}
\captionsetup{font=small}
\caption{This table reports the proposed evaluation metrics across all methods and case studies, averaged over 50 seeds. AdaptNC achieves coverage closest to the $90\%$ target while maintaining low uncertainty volumes. It also exhibits the most stable and robust behavior, as reflected by the low standard deviation of all metrics.}
\label{table:performance_metrics}
\centering
\resizebox{1.0\textwidth}{!}{
\begin{tabular}{lccccccccc}
\toprule
& \multicolumn{3}{c}{Indoor Localization} & \multicolumn{3}{c}{Social Navigation} & \multicolumn{3}{c}{Multirotor Navigation} \\
\cmidrule(lr){2-4}\cmidrule(lr){5-7}\cmidrule(lr){8-10}
Method              & Global Cov. & Vol. (\unit{\m\squared}) $\downarrow$  & Mean Local Cov. & Global Cov. &  Vol. (\unit{\m\squared}) $\downarrow$  & Mean Local Cov. & Global Cov. & Vol. (\unit{\m\squared}) $\downarrow$ & Mean Local Cov. \\
\midrule
\textbf{ANC}      
& \tbf{90.6\% $\pm$ 0.3\%} & \tbf{27.76 $\pm$ 2.08} & \tbf{90.2\% $\pm$ 0.3\%} 
& \tbf{91.6\% $\pm$ 1.2\%} & \tbf{43.56 $\pm$ 6.00} & \tbf{91.4\% $\pm$ 1.2\%} 
& \tbf{92.5\% $\pm$ 0.7\%} & \tbf{0.04 $\pm$ 0.01} & \tbf{92.3\% $\pm$ 0.7\%} \\

ANC NR  
& 94.8\% $\pm$ 0.8\% & 38.64 $\pm$ 3.67 & 94.4\% $\pm$ 0.8\%
& 87.9\% $\pm$ 3.0\% & 73.47 $\pm$ 9.62 & 87.6\% $\pm$ 3.0\%
& 80.1\% $\pm$ 2.5\% & 0.07 $\pm$ 0.02 & 80.0\% $\pm$ 2.5\% \\

DtACI               
& 92.2\% $\pm$ 1.8\% & 43.09 $\pm$ 4.46 & 91.8\% $\pm$ 1.8\%
& 80.0\% $\pm$ 3.5\% & 52.67 $\pm$ 8.10 & 79.7\% $\pm$ 3.5\%
& 87.4\% $\pm$ 1.7\% & 0.44 $\pm$ 0.01 & 87.2\% $\pm$ 1.7\% \\

Sp. CP            
& 97.1\% $\pm$ 0.9\% & 98.91 $\pm$ 3.12 & 96.7\% $\pm$ 0.9\%
& 3.0\% $\pm$ 0.8\% & \tbf{0.71 $\pm$ 0.02} & 3.0\% $\pm$ 0.8\%
& 60.1\% $\pm$ 1.6\% & 0.21 $\pm$ 0.00 & 59.9\% $\pm$ 1.6\% \\

Sh. CP            
& 96.9\% $\pm$ 1.1\% & 102.1 $\pm$ 7.0 & 96.5\% $\pm$ 1.1\%
& 3.0\% $\pm$ 0.8\% & 0.70 $\pm$ 0.02 & 3.0\% $\pm$ 0.8\%
& 0.7\% $\pm$ 0.2\% & \tbf{0.003 $\pm$ 0.0003} & 0.6\% $\pm$ 0.2\% \\

\bottomrule
\end{tabular}}
\vspace{-10pt}
\begin{minipage}{\textwidth}
\footnotesize
\textbf{Abbreviations:} ANC= AdaptNC; NR = No replay; Sh. CP = Shape CP; Sp. CP = Split CP; CP = Conformal Prediction.
\end{minipage}
\end{table*}

\vspace{-5pt}
\subsection{Performance Evaluation and Comparative Analysis}
\vspace{-5pt}

Table~\ref{table:performance_metrics} summarizes performance across all methods and environments. AdaptNC achieves coverage closest to the target level of $90\%$ while maintaining substantially smaller uncertainty volumes, indicating a favorable trade-off between coverage and efficiency. Furthermore, the low variance across 50 seeds shows that these trends remain consistent, demonstrating the robustness of the method.

% \textbf{Global Coverage:}
% AdaptNC consistently achieves coverage nearest to the target across all settings (e.g., $90.6\%$, $91.6\%$, and $92.5\%$). In contrast, baseline methods exhibit inconsistent behavior. DtACI attain near-target coverage in certain settings, but this is typically accompanied by significantly larger uncertainty regions. Conversely, methods that maintain small volumes, such as Split CP and Shape-CP in some environments, suffer from severe undercoverage (e.g., $\approx 3.0\%$ in social navigation). These results indicate that the baselines cannot maintain coverage without incurring substantial inefficiency.

% \textbf{Local Coverage Stability:}
% AdaptNC exhibits the most stable local coverage across all environments, with consistently low variance. In contrast, baseline methods display significant temporal variability. AdaptNC without replay, for instance, attains reasonable global coverage only through oscillatory local behavior, which necessitates larger uncertainty regions. Similarly, DtACI and Split CP exhibit intermittent coverage drops under distribution shift. This instability further reinforces the trade-off, as maintaining stable coverage requires either adaptivity or increased conservatism.
\begin{figure*}[t]
    \centering
    \includegraphics[width=0.8\linewidth]{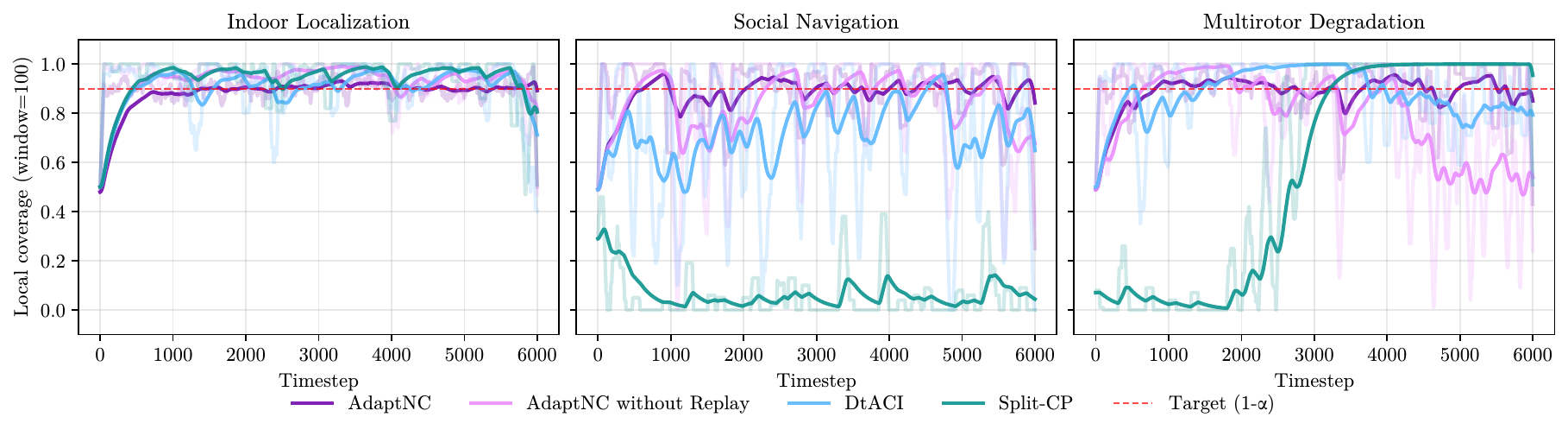}
    \vspace{-0.8em}
\captionsetup{font=small}
\caption{This figure shows the \textbf{evolution of local coverage over a 100-step sliding window} for a representative seed. For readability, an exponential moving average is shown, with the raw data in a lighter color. AdaptNC exhibits the lowest variability in local coverage, indicating stable recovery of tight uncertainty regions under distribution shift.} 
\label{fig:local_baseline_comparison}
\vspace{-1.2em}
\end{figure*}
\begin{figure*}[t]
    \centering
    \includegraphics[width=0.8\linewidth]{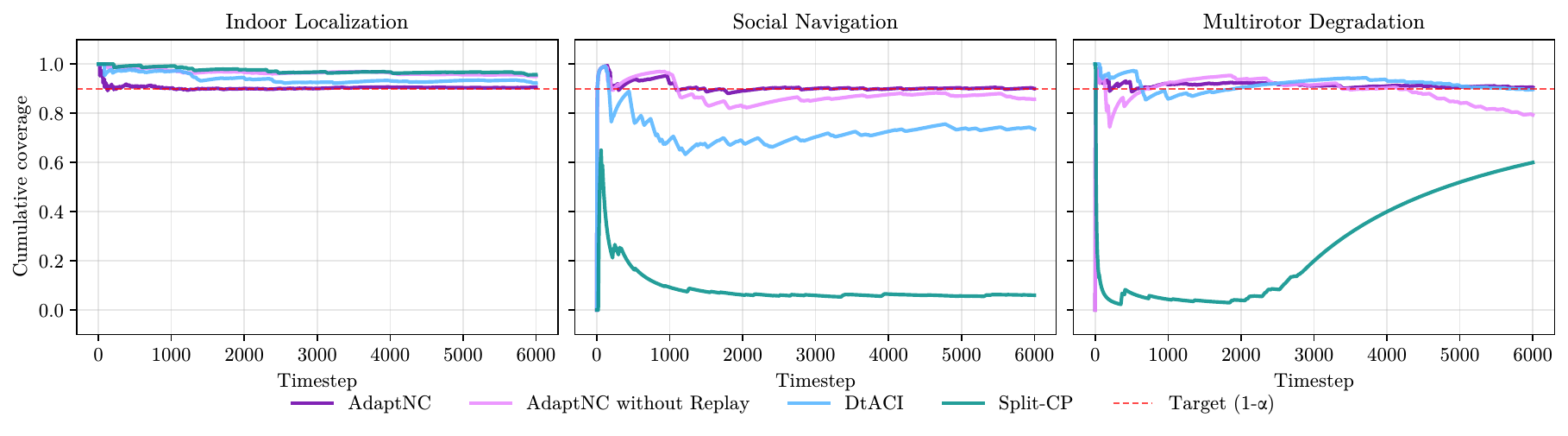}
    \vspace{-0.8em}
\captionsetup{font=small}
\caption{This figure illustrates the \textbf{evolution of empirical coverage over the evaluation horizon} relative to the target coverage level (shown in red) \textit{for a representative seed}. AdaptNC's performance highlights that score-function adaptation enables sustained coverage with tighter regions.} 
\label{fig:global_baseline_comparison}
\vspace{-1.5em}
\end{figure*}

\textbf{Global Coverage and Local Coverage Stability:}
AdaptNC consistently achieves coverage closest to the target across all settings ($90.6\%$, $91.6\%$, and $92.5\%$) while maintaining stable local coverage with low variance. In contrast, baseline methods exhibit a clear trade-off between coverage, stability, and efficiency. DtACI attains near-target coverage in some settings, but only with significantly larger uncertainty regions. Conversely, methods that achieve small volumes, such as Split CP and Shape-CP in certain environments, suffer from severe undercoverage (e.g., $\approx 3.0\%$ in social navigation). Moreover, these baselines display pronounced temporal variations (Figures~\ref{fig:local_baseline_comparison} and~\ref{fig:global_baseline_comparison}): AdaptNC without replay achieves reasonable global coverage only through oscillatory local behavior, while DtACI and Split CP exhibit intermittent coverage drops under distribution shift. These results indicate that baseline methods cannot simultaneously maintain accurate, stable coverage and efficient uncertainty regions, whereas AdaptNC achieves all three.

\textbf{Uncertainty Volume:}
AdaptNC achieves smaller uncertainty regions while maintaining near-target coverage. For example, in multirotor tracking, AdaptNC attains comparable coverage to DtACI with an $11$x reduction in volume ($0.04$ vs.\ $0.44$). A similar trend is observed in indoor localization, where DtACI achieves reasonable coverage but at significantly higher volume ($43.09$ vs.\ $27.76$). In contrast, methods that achieve low volumes do so by sacrificing coverage, as seen with Split CP. Thus, baseline methods either achieve efficiency at the cost of coverage or achieve coverage with excessive volume. Figure~\ref{fig: shape_evolution} further illustrates that shape adaptation enables significantly tighter conformal prediction regions while maintaining coverage.

\textbf{Importance of Replay:} Figures~\ref{fig:local_baseline_comparison} and~\ref{fig:global_baseline_comparison} illustrate the evolution of global and local coverage over time for all case studies using a representative seed, highlighting the role of the replay mechanism. A consistent trend is observed across settings: AdaptNC maintains coverage closer to the target with significantly more stable behavior. In contrast, AdaptNC without replay exhibits increased variability in local coverage, which propagates to oscillations in global coverage and leads to more conservative uncertainty regions. These results highlight the importance of replay in stabilizing coverage under distribution shift and enabling the recovery of tighter, more accurate uncertainty regions.

\textbf{Runtime Analysis:} We report per-step runtimes in Table~\ref{table:runtime}, averaged over 50 seeds on a desktop workstation (AMD Ryzen 7900X, 64GB RAM). Runtimes include all operations required to return a prediction region, including score evaluation (for offline methods Split-CP and Shape-CP). The threshold update in AdaptNC has a runtime comparable to DtACI, under $0.1$ms in all environments. The score update averages $\approx 100$ ms, dominated by KDE evaluation, consistent with the theoretical analysis. Score updates are performed periodically (every $20$, $100$, and $100$ steps across the three case studies), keeping latency within the $t_s \cdot \Delta t$ budget. Importantly, periodic updates do not degrade coverage, as threshold updates occur at every timestep, which we verify empirically. Moreover, a runtime of $\approx 100$ ms is comparable to static safety filtering methods such as Control Barrier Functions (CBFs)~\cite{Ames_2017}, which must be recomputed in settings with distribution shift. Further discussion of these topics are provided in Appendices~\ref{app:periodic_safety} and~\ref{app:CBF_comp} respectively. 
% Finally, the $\approx 90$ ms gap between AdaptNC and the no-replay variant isolates the cost of replay, indicating that it constitutes majority of the score update overhead.
\begin{table}[b]
\vspace{-2.0em}
\captionsetup{font=small}
\caption{Per-step runtime (ms) of all methods, averaged over 50 seeds.}
\label{table:runtime}
\centering
% \small
% \setlength{\tabcolsep}{4pt}
% \renewcommand{\arraystretch}{0.75}
\resizebox{0.73\textwidth}{!}{
\begin{tabular}{lccc}
\toprule
Method & Indoor Localization (ms) & Multirotor (ms) & Social Navigation (ms) \\
\midrule
AdaptNC (threshold update) & 0.09 $\pm$ 0.00 & 0.08 $\pm$ 0.00 & 0.08 $\pm$ 0.00 \\
AdaptNC (score update) & 100.61 $\pm$ 48.28 & 101.90 $\pm$ 38.06 & 97.34 $\pm$ 38.09 \\
AdaptNC (no replay, threshold update) & 0.15 $\pm$ 0.00 & 0.16 $\pm$ 0.01 & 0.13 $\pm$ 0.00 \\
AdaptNC (no replay, score update) & 7.04 $\pm$ 0.47 & 6.91 $\pm$ 0.12 & 6.55 $\pm$ 0.18 \\
DtACI & 0.09 $\pm$ 0.00 & 0.08 $\pm$ 0.00 & 0.08 $\pm$ 0.00 \\
Split CP & 0.00 $\pm$ 0.00 & 0.00 $\pm$ 0.00 & 0.00 $\pm$ 0.00 \\
Shape CP & 0.22 $\pm$ 0.06 & 0.20 $\pm$ 0.40 & 0.21 $\pm$ 0.04 \\
\bottomrule
\end{tabular}}
\vspace{-20pt}
\end{table}

\textbf{Hyperparameter Sensitivity:} We also provide a sensitivity analysis across different values of shape adaptation interval (update period) and replay buffer window size to assess the robustness of AdaptNC to moderate hyperparameter changes. Due to space constraints, the results are reported in Table~\ref{table:sensitivity} in Appendix~\ref{app:sensitivity_analysis}. Overall, AdaptNC exhibits low sensitivity to hyperparameters, with stable performance across a broad range of window sizes and update periods. Coverage and volume change noticeably only when deviating far from the base setting. The results show changes in the window size and update period have monotonic effects on both coverage and volume. The predictable nature of these tradeoffs enables straightforward, principled tuning.

\begin{figure*}[t]
    \centering
    \includegraphics[width=0.8\linewidth]{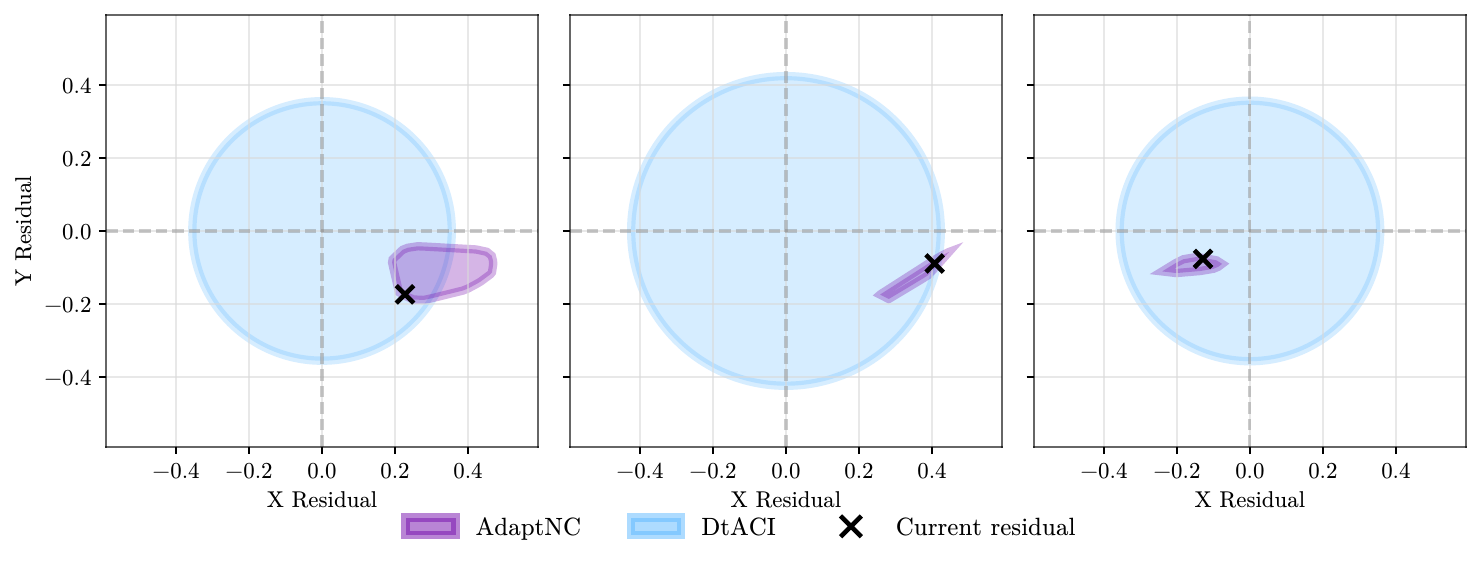}
    \vspace{-0.8em}
\captionsetup{font=small}
\caption{This figure shows uncertainty regions from AdaptNC and DtACI at representative timesteps ($t\in[720,1560,3120]$) in the multirotor tracking task for a representative seed, along with the realized residual. \textbf{AdaptNC recovers significantly tighter regions} while maintaining coverage, whereas DtACI’s threshold-based adaptation constrains region geometry and leads to misalignment under distribution shift.} 
\label{fig: shape_evolution}
\vspace{-2.0em}
\end{figure*}
\textbf{Results Summary:} Across all three case studies, AdaptNC maintains near-target coverage under structural distribution shifts while producing significantly tighter uncertainty regions than competing methods. By jointly adapting the nonconformity score and threshold, it avoids the conservatism of threshold-only approaches and the undercoverage of non-adaptive baselines, with replay mitigating coverage shocks and stabilizing local coverage. This behavior is also reflected in the expert weight dynamics (Figures~\ref{fig:appendix-indoor-localization-weights},~\ref{fig:appendix-socialnav-weights}, and \ref{fig:appendix-multirotor-weights}), where AdaptNC exhibits faster reweighting than DtACI and AdaptNC without replay. Additional details are provided in Appendices~\ref{appendix:indoor-localization},~\ref{appendix:social-navigation}, and~\ref{appendix:multirotor-tracking}.

    \vspace{-0.7em}
    \section{Conclusion}
\vspace{-0.5em}
We present AdaptNC, an online conformal prediction method that adapts nonconformity scores under distribution shift. It leverages a reweighted data distribution and a replay mechanism to stabilize adaptation and maintain coverage as scores evolve. We establish regret and long-term coverage guarantees and validate the method on three robotics case studies involving \textbf{environmental change, multi-agent interaction,} and \textbf{system degradation}. Across all settings, AdaptNC achieves coverage closer to the target with lower uncertainty volume than competing methods. Future work will focus on improving computational efficiency and addressing the limitations discussed in Appendix~\ref{sec:limitations}.

    \bibliography{references}
    \bibliographystyle{unsrtnat}

    \newpage
    \appendix
    \etocdepthtag.toc{appendix}
    \onecolumn
    \begingroup
      \etocsettagdepth{main}{none}      % suppress everything before appendix 
      \etocsettagdepth{appendix}{all}  % show appendix entries
      \setcounter{tocdepth}{2}
    
      % Optional: custom title styling for this TOC
      \etocsettocstyle{\section*{Appendix Contents}}{} % title + after-title hook
    
      \tableofcontents
    \endgroup
    % \section*{Table of Contents for Appendix}
    % \begin{enumerate}[label=\Alph*., leftmargin=2em]
    %     \item \hyperref[sec:limitations]{Limitations} \dotfill \pageref{sec:limitations}
    %     \item \hyperref[appendix:method]{Extended Methodology and Implementation} \dotfill \pageref{appendix:method}
    %     \begin{enumerate}[label=\theenumi\arabic*.]
    %         \item \hyperref[sec:score-optim]{Score Optimization Algorithm} \dotfill \pageref{sec:score-optim}
    %         \item \hyperref[sec:adaptnc-algorithm]{AdaptNC Algorithm} \dotfill \pageref{sec:adaptnc-algorithm}
    %         \item \hyperref[sec:dtaci]{Dynamically Tuned Adaptive Conformal Inference} \dotfill \pageref{sec:dtaci}
    %         \item \hyperref[sec:mc-kde]{High Density Region Identification}\dotfill\pageref{sec:mc-kde}
    %     \end{enumerate}        
    %     \item \hyperref[appendix:theory]{Theoretical Results} \dotfill \pageref{appendix:theory}        
    %     \item \hyperref[appendix:experiments]{Extended Experiments} \dotfill \pageref{appendix:experiments}
    %     \begin{enumerate}[label=\theenumi\arabic*.]
    %         \item \hyperref[appendix:indoor-localization]{Indoor Localization} \dotfill \pageref{appendix:indoor-localization}
    %         \item \hyperref[appendix:social-navigation]{Social Navigation} \dotfill \pageref{appendix:social-navigation}
    %         \item \hyperref[appendix:multirotor-tracking]{Multirotor Tracking} \dotfill \pageref{appendix:multirotor-tracking}
    %     \end{enumerate}
    %     \item \hyperref[appendix:experiments]{Additional Details on Multirotor Experiments} \dotfill \pageref{appendix:experiments}
    % \end{enumerate}
    % \NT{MAKE SURE TO UPDATE TOC}
    
    \newpage
    \section{Limitations}\label{sec:limitations}

While AdaptNC demonstrates the ability to recover tight, valid uncertainty regions under distribution shift, it remains, as with other adaptive conformal prediction methods, susceptible to producing trivial uncertainty sets. Although such sets are not entirely uninformative, as they indicate that future observations are likely to fall outside the previously observed support and that the system may be undergoing a regime change, they nonetheless represent a limitation. Empirically, AdaptNC exhibits a vacuous coverage rate of approximately $25\%$ across all case studies, while DtACI exhibits a rate of approximately $20\%$. Although AdaptNC shows a slightly higher vacuous coverage rate, the difference is marginal, and these results suggest that vacuous coverage is an inherent limitation shared across all adaptive conformal prediction methods considered in this study. Furthermore, we emphasize that vacuous coverage steps do not constitute a critical safety concern in autonomous systems. Rather, they serve as indicators of a potential regime change in the environment, at which point the system can transition to a fallback safe controller during the transient phase. We believe that score function regularization may be able to address this issue, and it is an avenue of future work we are pursuing.

% To further examine this behavior, we report the \textbf{percentage of vacuous coverage timesteps} for each method in
% Table~\ref{table:vacuous_coverage}, defined as $\frac{1}{T}\sum_{t=0}^{T} \mathds{1}\{\hat{q}_{t,1-\alpha} = \infty\}$, which quantifies the frequency with which a method yields uninformative, unbounded uncertainty regions.

% \begin{table}[h]
% \centering
% % \resizebox{\columnwidth}{!}{%
% \begin{tabular}{lccc}
% \toprule
% Method 
% & Indoor Localization 
% & Social Navigation 
% & Multirotor Navigation \\
% \midrule
% AdaptNC              & 11.51\% & 40.42\% & 33.65\% \\
% AdaptNC w/o Replay   & 11.93\% & 24.52\% & 22.82\% \\
% DtACI                & 15.67\% & 21.17\% & 28.77\% \\
% Split CP             & 0.00\%  & 0.00\% & 0.00\%  \\
% Shape CP             & 0.00\%  & 0.00\% & 0.00\%  \\
% \bottomrule
% \end{tabular}
% % }
% \caption{This table lists the \textbf{percentage of timesteps exhibiting vacuous coverage} for each method across all case studies. \textbf{Methods incorporating temporal adaptation exhibit a nonzero frequency of vacuous coverage.}}
% \label{table:vacuous_coverage}
% \vspace{-1.0em}
% \end{table}

% Table~\ref{table:vacuous_coverage} shows that Split CP does not exhibit vacuous coverage at any time step, as it lacks the ability to adapt over time. 

\section{Additional Clarifications and Extended Discussion}

\subsection{AdaptNC is not a direct combination of DtACI and prior score-adaptation methods}~\label{app:direct_comb}
While AdaptNC builds upon these important prior works, it is not merely a direct combination of these methods. The key novelty of AdaptNC lies in identifying and addressing a new setting that prior work does not consider: the joint, coupled adaptation of both the nonconformity score and the conformal threshold under distribution shift. This coupling introduces a non-trivial feedback loop. Changes in the score function alter the induced nonconformity distribution, which subsequently shifts the optimal quantile and destabilizes threshold updates. As a result, the update dynamics of prior independent methods no longer perform their function in the same manner. To support this new setting, we introduced an adaptive reweighting scheme that enables fully online score optimization, alongside new theoretical results including score function stability under joint adaptation (Theorem 5.3) in an asymptotic setting, using a guarantee on the stability of the score function adaptation in (Theorem B.1).

Furthermore, the direct amalgamation of DtACI with score adaptation, corresponds to our “AdaptNC without Replay” baseline, which includes our historical data reweighting scheme. As demonstrated in our ablation studies, this naive combination fails, exhibiting substantial misses in coverage and significantly inflated prediction regions because composition alone is insufficient to handle the feedback problem created through this joint optimization. The intuition behind this is presented in (Remark 5.5), which shows that there are two sources of shift in the score distribution; the bounded component due to the additional data point observed at the next timestep, and the potentially unbounded component which is due to the shift in the score function. To solve this, we introduced the counterfactual replay mechanism. By reinitializing and recalibrating the DtACI expert chain using recent data under the updated score, the replay mechanism prevents coverage shocks caused by mismatched quantiles. As shown empirically, replay is not a minor implementation detail but a fundamentally necessary component that restores both valid coverage and tight uncertainty sets.

Overall, AdaptNC goes significantly beyond combining existing techniques by formulating a new problem setting of coupled adaptation, identifying the instabilities of a naive synthesis, and introducing the novel mechanisms required to make joint adaptation both stable and effective.

\subsection{AdaptNC's Theoretical Support}
AdaptNC's various components have asymptotic guarantees. It is important to note that finite-sample guarantees are generally unavailable in online conformal prediction (CP) due to the absence of assumptions on distribution shift. Guarantees are often given in the asymptotic setting. \cite{gibbs_conformal_2024} show that the regret of their expert weighting algorithm will have a bounded regret to the best expert. \cite{gibbs_adaptive_2021} establish asymptotic convergence of each expert to the target coverage. In this asymptotic stability setting, we guarantee that our method yields tight and consistent prediction regions. This is shown in \cref{thm:score_func_stability,cor:prediction_region_size_stability}. Taken together, our method provides convergence guarantees in the asymptotic setting, including for the score adaptation procedure. Our theoretical analysis gives sufficient conditions under which replay induces a vanishing average perturbation of the ACI expert recursion. These conditions are asymptotic and rely on boundary anti-concentration; finite-sample replay guarantees remain an important direction for future work.

\subsection{Role of the Low-Dimensional Assumption in Complexity Analysis}~\label{app:low_dim_assump}
The assumption \( d \leq 3 \) in Section~\ref{subsec: comp_complexity} is introduced to derive tractable bounds for the geometric operations underlying AdaptNC, particularly convex hull construction via the QuickHull algorithm. In low dimensions, QuickHull exhibits near-linear complexity, which supports the stated amortized per-timestep bound. This assumption is therefore analytical in nature and is used to characterize the regime where AdaptNC achieves favorable computational efficiency.

Importantly, AdaptNC itself does not depend on \( d \leq 3 \) and remains applicable in higher dimensions, although the associated geometric computations scale less favorably. In particular, QuickHull has complexity \( O(n^{\lfloor d/2 \rfloor}) \) for $d > 3$, where \( \lfloor \cdot \rfloor \) denotes the greatest integer function. However, this growth remains manageable up to moderate dimensions (e.g., $d \leq 5$), where the scaling is at most quadratic, consistent with the empirical runtimes in Table~\ref{table:runtime}. Moreover, in autonomous systems applications, which are the primary focus of this work, conformal prediction is typically performed over low-dimensional predictors such as \textbf{scalar} value functions or spatial trajectory predictors (up to $3$ dimensions), and thus AdaptNC remains applicable without incurring prohibitive computational overhead.

\subsection{Effects of Vacuous Prediction Regions on the Next Cycle of Learning and Their Mitigation}
The "infinite" safety margins or vacuous bounds do not corrupt the next cycle of learning, as the underlying data is not perturbed or destroyed. These vacuous bounds are often encountered when the system undergoes change, or sees significant shift in error patterns. When the non-conformity scores the system sees are very large, larger than those we have previously seen, AdaptNC correctly identifies that the system is undergoing change, and surfaces to the user that the change is major, and the system cannot yet accurately predict what will happen. In this way, vacuous bounds \textit{are} informative; they show that AdaptNC cannot provide guarantees in high-change regimes. After this period of change is complete, AdaptNC will return to predicting safety regions, as the residuals stabilize. Finally, prediction of these vacuous bounds happens when the current quantile prediction $\bar{\alpha}_t$ becomes very small. The data is preserved in its unperturbed state in the history buffer $\mathcal{H}$. These independent mechanisms ensure that data is kept in a manner that permits learning from samples, even if the predicted safety margin is infinite.

\subsection{Detecting Distribution Shift and Adapting to its Rate}
Identifying when the data distribution has shifted is one of the major problems in the online conformal prediction setting. There are advantages to having more data, namely that more data allows for more data to develop bounds on the non-conformity scores. However, using old or stale data can yield inaccurate coverage which can lead to catastrophe for robots. AdaptNC is fundamentally a weighted mixture of a series of ``expert" predictors. Each of these ``experts" functions under the hypothesis that the distribution is shifting at a different rate, characterized by $\gamma$. A lower value of $\gamma$ indicates a low expected rate of change, and a higher $\gamma$ indicates a higher expected rate of change. AdaptNC uses the weights allocated to each expect to dynamically identify how quickly the distribution is changing, and use that estimate to change how much historical data it uses at every timestep.  This is also illustrated in the expert weight dynamics (Figures~\ref{fig:appendix-indoor-localization-weights},~\ref{fig:appendix-socialnav-weights}, and \ref{fig:appendix-multirotor-weights}).

\subsection{Ability to Maintain Safety if a Catastrophic Change Occurs Between Score Updates}~\label{app:periodic_safety}
AdaptNC separates adaptation into two components: The threshold update occurs at every timestep, while the score update occurs periodically. While the shapes update periodically, the threshold updates at every timestep. This provides instantaneous response to changes in the system, while the calculation of the
new score function allows for AdaptNC to react to systemic changes in the system. We show empirically in our results that in scenarios with distribution shift, our method is able to maintain the target coverage better than the previous state of the art.

\subsection{Tackling More Complex Tasks and Computational Considerations}~\label{app:CBF_comp}
While the task of drawing safety boundaries may appear computationally intensive, in practice it is not prohibitively slow. In our experimental settings, the score update path executed every $T$ steps takes approximately $100$ ms, while the threshold update path executed at every timestep takes approximately $0.1$ ms. The threshold update runtime is comparable to standard safety filtering methods such as Control Barrier Functions (CBFs), which typically assume negligible or no distribution shift. The score update enables adaptation to significant distribution shift. In such settings, recomputing CBFs is often more computationally demanding than the threshold update in AdaptNC.

\clearpage
    \section{Method}\label{appendix:method}

\subsection{Score Optimization Algorithm}\label{sec:score-optim}
\begin{algorithm}[]
    \caption{OptimizeScore: Score Parameter Optimization}
    \label{alg:optimizescore}
    \begin{algorithmic}[1]
        \REQUIRE History buffer $\mathcal{H}_t^w$, current parameters $\theta$, target miscoverage rate $\alpha$
        % \STATE Compute weights $\omega_t$ for each sample $(X_t, Y_t) \in \mathcal{H}$ using \cref{eq:weight_calc}
        \STATE Generate high-density points $\hat{R}_t$ covering proportion $1-\alpha$ of density mass through \cref{alg:mckde}
        \STATE Fit shape template $\mathcal{S}$ using QuickHull algorithm on $\hat{R}_t$
        \STATE $\theta_{t+1} \gets$ parameters encoding shape templates $\{\mathcal{S}\}$
        \STATE \textbf{Return:} Updated parameters $\theta_{t+1}$
    \end{algorithmic}
\end{algorithm}

\subsection{AdaptNC Algorithm}\label{sec:adaptnc-algorithm}
\begin{algorithm}[]
    % \small
    \caption{AdaptNC Algorithm}
    \label{alg:adaptnc}
    \begin{algorithmic}
        \REQUIRE Target miscoverage rate $\alpha$, adaptation interval $t_s$, initial parameters $\theta_0$, DtACI expert gammas $\Gamma = [\gamma_1, \ldots, \gamma_k]$, window size $W$, learning rate $\eta$, calibration data $\mathcal{D}_{cal}$
        \STATE Initialize history buffer $\mathcal{H} \gets \emptyset$
        \STATE Fit initial conformal region with $\theta_0$ on $\mathcal{D}_{cal}$
        \STATE Initialize DtACI with $s(\cdot; \theta_0)$, $\alpha$, $\Gamma$, $W$, $\eta$, $\mathcal{D}_{cal}$
        \FOR{$t = 1, 2, \ldots, T$}
            % \STATE \textbf{// Observation Step}
            \STATE Observe $X_t$, produce prediction $\hat{Y}_t$
            \STATE Output prediction region $\hat{C}_t(X_t;\theta_t,q_{1-\alpha_t,\mathcal{D}_t})$
            \STATE Observe $Y_t$
            \STATE Append $(X_t, Y_t, s_t)$ to $\mathcal{H}$
            % \STATE \textbf{// Conformal Threshold Update}
            \STATE Compute score $s_t = s(X_t, Y_t; \theta_t)$
            \STATE Update DtACI with $(X_t, Y_t)$ to obtain $\bar{\alpha}_t$
            \STATE Obtain new threshold: $q_{1-\alpha_t,\mathcal{D}}$
            \IF{$t \bmod t_s = 0$}
                % \STATE \textbf{// Score Parameter Adaptation}
                \STATE Compute distribution $\mathcal{H}_t^\omega$ using \cref{eq:weight_calc}
                \STATE $\theta_{t+1} \gets \textsc{OptimizeScore}(\mathcal{H}_t^\omega, \theta_t, \alpha)$
                % \STATE \textbf{// Replay DtACI with Updated Score}
                % \STATE $\mathcal{D}_{replay} \gets$ last $W$ observations from $\mathcal{H}$
                \STATE Re-initialize DtACI with score function $s(\cdot; \theta_{t+1})$
                \FOR{each $(X_i, Y_i) \in [t-w+1,t]$}
                    \STATE Update experts and expert weights with $(X_i, Y_i)$
                \ENDFOR
                % \STATE Adjust conformal region shapes with updated threshold
            \ENDIF
        \ENDFOR
    \end{algorithmic}
\end{algorithm}

\subsection{Dynamically Tuned Adaptive Conformal Inference (DtACI)}\label{sec:dtaci}
See \cref{alg:dtaci}.
\begin{algorithm}[h]
   % The optional argument [DtACI] prevents the citation from breaking the List of Algorithms
   % Using the safer caption format we discussed
   \caption{Dynamically tuned Adaptive Conformal Inference (DtACI) \citep[Algorithm 2]{gibbs_conformal_2024}}
   \label{alg:dtaci}
%    \NT{This should go to appendix}
   \begin{algorithmic}[1]
        \REQUIRE Observed values $\{\beta_t\}_{1\leq t \leq T}$, set of candidate $\gamma$ values $\{\gamma_i\}_{1\leq i \leq k}$, starting points $\{\alpha_1^i\}_{1\leq i \leq k}$, and parameters $\sigma$ and $\eta$.
        \STATE $w_1^i \gets 1,\; 1\leq i\leq k;$
        \FOR{$t=1,2,\ldots,T$}
            \STATE Define the probabilities $p_t^i := w_t^i / \sum_{1\le j \le k} w_t^j,$
            $\quad \forall 1 \le i \le k$;
            \STATE Output $\bar{\alpha}_t = \sum_{1\le i \le k} p_t^i \alpha_t^i$;
            \STATE $\bar{w}_t^i \gets w_t^i \exp(-\eta \ell(\beta_t, \alpha_t^i)), \quad \forall 1 \le i \le k$;
            \STATE $\bar{W}_t \gets \sum_{1\le i \le k} \bar{w}_t^i$;
            \STATE $w_{t+1}^i \gets (1-\sigma)\bar{w}_t^i + \bar{W}_t \sigma / k$;
            \STATE $\err_t^i := \mathds{1}\{Y_t \notin \hat{C}_t(\alpha_t^i;\theta_t,\mathcal{D}_t)\}, \quad \forall 1 \le i \le k$;
            \STATE $\err_t := \mathds{1}\{Y_t \notin \hat{C}_t(\bar{\alpha}_t;\theta_t,\mathcal{D}_t)\}$;
            \STATE $\alpha_{t+1}^i = \alpha_t^i + \gamma_i (\alpha - \err_t^i), \quad \forall 1 \le i \le k$;
        \ENDFOR
   \end{algorithmic}
\end{algorithm}

\subsection{High Density Region Identification}\label{sec:mc-kde}
\begin{algorithm}[h]
   \caption{Monte Carlo Kernel Density Estimation (MCKDE)}
   \label{alg:mckde}
\begin{algorithmic}[1]
   \STATE {\bfseries Input:} Dataset $X \in \mathbb{R}^{N \times d}$, sample weights $w \in \mathbb{R}^N$, Monte Carlo samples $M$, miscoverage rate $\alpha$, bandwidth factor $\beta$.
   \STATE {\bfseries Output:} Set of high-density samples $\hat{R}_{N,M}$.

   \STATE \textbf{// 1. Bandwidth Selection}
   \IF{bandwidth method is ``Scott''}
       \STATE $h \leftarrow N^{\frac{-1}{d+4}}$
   \ELSIF{bandwidth method is ``Silverman''}
       \STATE $h \leftarrow \left( \frac{N(d+2)}{4} \right)^{\frac{-1}{d+4}}$
   \ENDIF
   \STATE $h \leftarrow h \times \beta$

   \STATE \textbf{// 2. Fit Kernel Density Estimator}
   \STATE Fit weighted Gaussian KDE $\hat{f}_N$ using $(X, w)$

   \STATE \textbf{// 3. Monte Carlo Sampling \& Scoring}
   \STATE Draw $M$ samples $\mathcal{Z} = \{z_1, \dots, z_M\}$ where $z_i \sim \hat{f}_N$
   \STATE Calculate density scores $\xi_i \leftarrow \hat{f}_N(z_i)$ for $i=1, \dots, M$

   \STATE \textbf{// 4. Thresholding}
   \STATE Let $\tau$ be the $\alpha$-quantile of the set $\{\xi_1, \dots, \xi_M\}$
   \STATE $\hat{R}_{N,M} \leftarrow \{ z_i \in \mathcal{Z} \mid \xi_i \geq \tau \}$

   \STATE \textbf{return} $\hat{R}_{N,M}$
\end{algorithmic}
\end{algorithm}

Under standard KDE regularity conditions and sufficiently dense Monte Carlo
sampling, the retained high-density samples approximate the corresponding KDE
level set. We therefore model the score-optimization step through
Assumption~\ref{assump:template_consistency}.

\begin{assumption}[Consistent residual-template estimation]
\label{assump:template_consistency}
At score-update times \(t_m\), let \(\widehat R_m\) be the residual set
returned by the score-optimization procedure and let
\(K_m=\operatorname{conv}(\widehat R_m)\). Let
\(R_\star=\{z:f_\star(z)\ge \tau_\star\}\) and
\(K_\star=\operatorname{conv}(R_\star)\). We assume
\[
d_H(K_m,K_\star)\xrightarrow{p}0.
\]
\end{assumption}

\begin{theorem}[Glivenko-Cantelli Theorem as written in \citet{vaart_asymptotic_1998}]
\label{thm:gliv-cant-thm}
If $X_1, X_2, \dots, X_n$ are independent and identically distributed random variables with cumulative distribution function $f$, and empirical distribution function $f_n$, then:
\begin{equation}
    \|f_n - f\|_\infty \xrightarrow{as} 0
\end{equation}
\end{theorem}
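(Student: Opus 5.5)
The plan is the classical bracketing argument: pointwise strong laws at finitely many points, upgraded to uniform convergence by the monotonicity of distribution functions. Write $f$ for the common CDF and $f_n(x)=\frac1n\sum_{i=1}^n \mathds{1}\{X_i\le x\}$. For each fixed $x$, the indicators $\mathds{1}\{X_i\le x\}$ are i.i.d.\ Bernoulli with mean $f(x)$, so the strong law of large numbers gives $f_n(x)\to f(x)$ almost surely. Applying the same argument to $\mathds{1}\{X_i<x\}$ gives $f_n(x^-)\to f(x^-)$ almost surely. Since a countable intersection of probability-one events has probability one, both convergences hold simultaneously on one event for every point of any fixed countable set.

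Next I would build the finite grid. Fix $\varepsilon>0$ and choose points $-\infty=x_0<x_1<\dots<x_k=+\infty$ such that $f(x_j^-)-f(x_{j-1})\le\varepsilon$ for every $j$. A natural choice is $x_j=\inf\{x: f(x)\ge j\varepsilon\}$, using right-continuity of $f$; atoms of size larger than $\varepsilon$ are handled automatically because the bracketing uses left limits. For $x\in[x_{j-1},x_j)$, monotonicity of both $f_n$ and $f$ gives
\[
f_n(x)-f(x)\le f_n(x_j^-)-f(x_{j-1})\le f_n(x_j^-)-f(x_j^-)+\varepsilon,
\]
\[
f_n(x)-f(x)\ge f_n(x_{j-1})-f(x_j^-)\ge f_n(x_{j-1})-f(x_{j-1})-\varepsilon.
\]
Hence
\[
\|f_n-f\|_\infty\le \max_j\max\{|f_n(x_j)-f(x_j)|,\,|f_n(x_j^-)-f(x_j^-)|\}+\varepsilon.
\]
On the probability-one event from the first step, the maximum over the finitely many grid points tends to zero. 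Therefore $\limsup_n\|f_n-f\|_\infty\le\varepsilon$ almost surely. Taking $\varepsilon=1/m$ for $m\in\mathbb{N}$ and intersecting these countably many events yields $\|f_n-f\|_\infty\to 0$ almost surely.

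The main obstacle is the grid construction when $f$ has jumps or flat regions. A naive equispaced grid in $x$ fails, and a quantile grid fails at atoms unless left limits are tracked separately. This is why I would carry both $f_n(x_j)$ and $f_n(x_j^-)$ through the argument. A secondary measurability concern is that the supremum over all of $\mathbb{R}$ could be non-measurable. The bound above sidesteps this, because the supremum is controlled by a maximum over finitely many measurable quantities; equivalently, the supremum equals a supremum over rationals together with the relevant left limits, which are countably many.
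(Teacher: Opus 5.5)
Your proposal is correct. It is the standard bracketing proof: pointwise strong laws for $f_n(x)$ and $f_n(x^-)$, a finite partition with $f(x_j^-)-f(x_{j-1})\le\varepsilon$, monotone sandwiching on each $[x_{j-1},x_j)$, and then $\varepsilon=1/m$. This is essentially the proof of Theorem 19.1 in van der Vaart, which the paper only cites and does not reproduce. One cosmetic point is that your quantile grid can repeat a point at an atom larger than $\varepsilon$, so the grid is only nondecreasing rather than strictly increasing; this is harmless, since the corresponding intervals $[x_{j-1},x_j)$ are empty and the intervals still tile $\mathbb{R}$.
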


% \begin{corollary}[Corollary 2.1 from \citet{cadre_kernel_2006}]
% \label{thm:cadre} This corollary is a consequence of \citet[Theorem 2.1]{cadre_kernel_2006}
% Let $d\geq 2$ be the dimension of the data, and $(a_N)_N$ be a sequence of positive real numbers such that $a_N \to 0$. Let $t^{(p)}$ be the unique real number such that $\lambda_f(L(f, t^{(p)})) = p$, and $t_N^{(p)}$ be the unique real number such that $\lambda_{\hat{f}_N}(L(\hat{f}_N, t_N^{(p)})) = p$. If the assumptions of \cref{thm:convergence} hold, then for almost every $t\geq 0$ we have that:
% \begin{equation}
%     \lambda(\hat{R}_{M,N,p} \Delta R_p) \xrightarrow{P} \frac{\sqrt{t_N^{(p)}}}{\sqrt{Nh^d} \varphi_N}\sqrt{\frac{2t}{\pi} \int K^2d\lambda}
% \end{equation}
% Where $\varphi_N = a_N/\lambda(\hat{R}_{M,N,p} - \hat{R}_{M,N,p+a_N})$ 
% \end{corollary}
% Note that in this corollary, the square root of the kernel term is a constant that depends on the choice of kernel, and the term $\sqrt{Nh^d}$ grows to infinity as $N$ increases, ensuring that the estimation error decreases with larger sample sizes. The term $t_N^{(p)}$ represents the density threshold corresponding to the level set, which is bounded for a well-defined density function. Thus, the corollary indicates that the estimation error diminishes as the sample size increases. The asymptotic behaviour described in \cref{thm:cadre} is that

% \begin{equation}
%     \lambda(\hat{R}_{M,N,p} \Delta R_p) \xrightarrow{P} O\left(1/\sqrt{Nh^d}\right)
% \end{equation}

\subsection{Computational Complexity Analysis of AdaptNC}~\label{app:complexity}

\begin{theorem}[Computational Complexity of AdaptNC]

AdaptNC consists of two components: a threshold update executed at every timestep and a score function update executed every $t_s$ timesteps. Let $W$ denote the window size, $k$ the number of experts, $M$ the number of Monte Carlo samples, $d$ the residual dimension, $r$ the number of facets of the convex hull, and $n = |\hat{R}_t|$ the number of retained high-density points. Then, the per-timestep cost of the threshold update is $O(r + W + k)$, and the cost of the score function update is $O\!\left(MWd + M \log M + n \log n + W(r+k)\right)$, which is dominated by the KDE evaluation term $O(MWd)$ in the low-dimensional regime considered here.

Under the assumption $d \leq 3$, the amortized per-timestep cost of AdaptNC is
\[
O\!\left(W + k + MWd/t_s\right).
\]
\end{theorem}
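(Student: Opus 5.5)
We bound the cost line by line against \cref{alg:adaptnc,alg:dtaci,alg:mckde,alg:optimizescore}, splitting the analysis into the per-timestep threshold path and the periodic score path, and then amortize.

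\textbf{Threshold path.} At each timestep the steps are as follows.
\begin{enumerate}
\item Evaluating the convex-template score $s(X_t,Y_t;\theta_t)=\max_j (A_j(Y_t-h(X_t))-b_j)$ costs $O(rd)$, which is $O(r)$ for fixed $d$.
\item Computing $\beta_t$ in \cref{eq:def_beta_t} and the quantile $q_{\mathcal D_t}(1-\bar\alpha_t)$ over the rolling window costs $O(W)$. We can rank the new score against the $W$ stored scores, or use linear-time selection.
\item The DtACI update in \cref{alg:dtaci} costs $O(k)$. This covers the weight update, normalization, $\bar\alpha_t$, and the $k$ expert recursions. Each $\err^i_t$ reduces to a comparison of $\beta_t$ with $\alpha^i_t$, so no extra score evaluations are needed.
\end{enumerate}
The total is $O(r+W+k)$.

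\textbf{Score path.} Every $t_s$ steps the algorithm runs the following.
\begin{enumerate}
\item The relevance weights \cref{eq:weight_calc} on the window cost $O(Wk)$.
\item Fitting the weighted KDE is $O(Wd)$. Drawing $M$ samples is $O(Md)$.
\item Evaluating $\hat f_N$ at $M$ points with $N=W$ kernels costs $O(MWd)$.
\item Thresholding at the $\alpha$-quantile can be done by sorting, $O(M\log M)$.
\item QuickHull on $n$ points in $d\le 3$ runs in $O(n\log n)$ expected time.
\item Converting the hull to $(A,b)$ is $O(r)$.
\item The replay loop re-scores $W$ observations at $O(r)$ each and performs $W$ DtACI updates at $O(k)$ each, giving $O(W(r+k))$.
\end{enumerate}
Summing gives $O(MWd+M\log M+n\log n+W(r+k))$.

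\textbf{Amortization and the main obstacle.} Dividing the score path by $t_s$ and adding the threshold path gives the amortized cost. The main obstacle is justifying which terms dominate; we make this an explicit regime assumption. We use $n\le M$, so $n\log n\le M\log M$. We take $\log M\lesssim Wd$, so that $M\log M\lesssim MWd$. For $d\le3$ the number of hull facets satisfies $r=O(n)$, and we treat $r$ as bounded by a constant since the template size is a fixed design choice. Then $r$ drops from the per-step term, and $W(r+k)/t_s$ is absorbed into $O(W+k)$ or into $MWd/t_s$. The result is $O(W+k+MWd/t_s)$.
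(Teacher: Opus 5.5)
Your proposal matches the paper's proof essentially step for step: the same threshold-path accounting $O(r+W+k)$, the same score-path breakdown (reweighting $O(kW)$, KDE fit, sampling and evaluation, sorting, QuickHull, replay $O(W(r+k))$), and the same amortization by $t_s$. The one difference is that you state the regime conditions the paper uses silently when it calls $O(MWd)$ dominant and drops $r$ from the final bound, namely $n\le M$, $\log M\lesssim Wd$, $r=O(1)$, and (implicitly, to absorb $Wk/t_s$) $k\lesssim Md$.
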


\begin{proof}
We analyze the computational cost of AdaptNC by decomposing it into two components: the threshold update and the score function update. We assume that the residual dimensionality satisfies $d \leq 3$, which holds for all experiments and typical real-world trajectory prediction tasks.

\textbf{Threshold update:}
The threshold update executes at every timestep and consists of three operations. First, the nonconformity score $s(X_t, Y_t; \theta_t)$ is evaluated for the current observation, requiring $O(r)$ time where $r$ is the number of facets of the convex hull. Second, the inverse quantile $\beta_t$ is computed by evaluating the score against the rolling window $\mathcal{D}_t$, requiring $O(W)$ time. Third, the $k$ expert weights are updated via exponential reweighting and each expert's quantile $\alpha_t^i$ is updated, requiring $O(k)$ time. The total per-timestep cost of the fast path is therefore $O(r + W + k)$.

\textbf{Score function update:}
The score function update executes every $t_s$ timesteps and consists of three stages: reweighting, score optimization, and replay.

\textit{Reweighting:} Computing the adaptive weights $\omega_\tau$ for all history buffer entries requires evaluating the exponential decay across $k$ experts for each of the $W$ buffered points, yielding $O(kW)$.

\textit{Score optimization:} This stage comprises density estimation and shape template fitting. The weighted KDE is constructed from $W$ samples in $O(Wd)$ time. Drawing $M$ Monte Carlo samples from the KDE costs $O(Md)$. Evaluating the KDE at each Monte Carlo sample requires summing over all $W$ kernel centers, yielding $O(MWd)$. Selecting the top $1-\alpha$ fraction by density requires sorting in $O(M \log M)$. Finally, the QuickHull algorithm fits the convex hull to the $n$ retained high-density points in $O(n \log n)$ expected time. The dominant cost of this stage is the KDE evaluation at $O(MWd)$.

\textit{Replay:} The replay mechanism re-initializes the DtACI expert chain and processes the $W$ most recent observations under the updated score function. Each replayed observation requires $O(r + k)$ for score evaluation and expert updates, yielding a total replay cost of $O(W(r+k))$.

The total cost of the score update is $O(MWd + M \log M + n \log n + W(r+k))$, dominated by the KDE evaluation term $O(MWd)$.

\textbf{Amortized cost:}
Since the score function update is executed every $t_s$ timesteps, its per timestep cost amortizes to: $O\!\left(MWd/t_s\right)$. Adding the threshold update cost yields the amortized per-timestep cost of AdaptNC:
\[
O\!\left(
W + k + MWd/t_s
\right)
\]
\end{proof}
\clearpage
    \section{Theoretical Details and Proofs}\label{appendix:theory}

\subsection{Background ACI and DtACI Guarantees}
We present the theorem that yields \cref{rem:big-o-regret}
\begin{restatable}[Theorem 4, \citep{gibbs_conformal_2024}]{theorem}{dtaciregret}
\label{thm:dtaci_regret}
Let $\gamma_{max} \coloneqq \max_{1\leq i\leq k} \gamma_t$ and assume that $\gamma_1 < \gamma_2 < ... < \gamma_k$ where $\gamma_{i+1}/\gamma_i \leq 2$ for all $1 < i \leq k$, and that $\ell(\beta_t,\theta)\coloneqq \alpha(\beta_t -\theta) - \min\{0,\beta_t-\theta\}$ Then, for any window $W = [r,s] \in [T]$ and any sequence $\alpha^*_r, \ldots, \alpha^*_s \in [0,1]$,
\begin{equation}
    \begin{split}
        \frac{1}{|W|} \sum_{t=r}^s \ell((\beta_t,\bar{\alpha}_t)) - \frac{1}{|W|} \sum_{t=r}^s \ell((\beta_t,\alpha^*_t)) \leq \\
        \frac{\log(k/\sigma)+2\sigma|W|}{\eta |W|}
        +\frac{\eta}{|W|}\sum_{t=r}^{s}\mathbb{E}\!\big[\ell(\beta_t,\alpha_t)^2\big]
        + \\ 4(1+\gamma_{\max})^{2}\max\!\left\{
        \sqrt{\frac{\sum_{t=r+1}^{s}\left|\alpha_t^{*}-\alpha_{t-1}^{*}\right|+1}{|W|}},
        \gamma_{1}
        \right\}.
    \end{split}
\end{equation}
\end{restatable}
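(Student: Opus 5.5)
The plan is to split the regret into two pieces. The first is the regret of the weighted average $\bar\alpha_t$ against the single best fixed expert $i$ on the window $W=[r,s]$. The second is the dynamic regret of that expert's ACI recursion against the comparator sequence $\alpha^*_r,\ldots,\alpha^*_s$. The first piece will produce the $\eta,\sigma$ terms, and the second will produce the path-length term. Throughout I use that $\ell(\beta,\cdot)$ is the pinball loss, which is convex and piecewise linear with subgradients bounded by $\max\{\alpha,1-\alpha\}\le 1$. I also use that the ACI update $\alpha^i_{t+1}=\alpha^i_t+\gamma_i(\alpha-\err^i_t)$ is exactly an online (sub)gradient step $\alpha^i_{t+1}=\alpha^i_t-\gamma_i\,\partial_\theta\ell(\beta_t,\alpha^i_t)$, because $\err^i_t=\mathds{1}\{\beta_t<\alpha^i_t\}$ up to the convention on ties.

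\emph{Step 1 (experts layer).} The weight update in \cref{alg:dtaci} is the fixed-share exponential-weights scheme. I will use the standard fixed-share tracking argument, taking the potential $\log \bar W_t$ and using $e^{-x}\le 1-x+x^2$. This gives, for every expert $i$ and every subwindow $W$,
\[
\sum_{t\in W}\sum_j p^j_t\,\ell(\beta_t,\alpha^j_t)-\sum_{t\in W}\ell(\beta_t,\alpha^i_t)\le \frac{\log(k/\sigma)+2\sigma|W|}{\eta}+\eta\sum_{t\in W}\mathbb E[\ell(\beta_t,\alpha_t)^2].
\]
The mixing term $\sigma/k$ is what allows restarting at any $r$ at cost $\log(k/\sigma)$, and each step of mixing costs $O(\sigma)$. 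By convexity of $\ell$ in its second argument, Jensen gives $\ell(\beta_t,\bar\alpha_t)\le\sum_j p_t^j\ell(\beta_t,\alpha^j_t)$, so the left side upper-bounds the regret of $\bar\alpha_t$ against expert $i$.

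\emph{Step 2 (single expert, dynamic regret).} First I show the iterates stay bounded. If $\alpha^i_t>1$, then $\beta_t\le1<\alpha^i_t$ forces $\err^i_t=1$ and the iterate decreases; symmetrically below $0$. Hence $\alpha^i_t\in[-\gamma_i,1+\gamma_i]$. Next I run the usual OGD expansion of $(\alpha^i_{t+1}-\alpha^*_t)^2$ and sum by parts against the moving comparator. This yields
\[
\sum_{t\in W}\ell(\beta_t,\alpha^i_t)-\ell(\beta_t,\alpha^*_t)\le \frac{(1+\gamma_i)^2\bigl(1+2\sum_{t=r+1}^s|\alpha^*_t-\alpha^*_{t-1}|\bigr)}{2\gamma_i}+\frac{\gamma_i|W|}{2},
\]
where the diameter $1+2\gamma_i\le 2(1+\gamma_i)$ controls both the initial distance and the cross terms $(\alpha^*_t-\alpha^*_{t-1})\alpha^i_{t+1}$. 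Dividing by $|W|$ and writing $P=\sum|\alpha^*_t-\alpha^*_{t-1}|$ gives roughly $(1+\gamma_{\max})^2\bigl[(P+1)/(\gamma_i|W|)+\gamma_i\bigr]$.

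\emph{Step 3 (choosing the expert).} The ideal step size is $\gamma^\circ=\sqrt{(P+1)/|W|}$. Case (a): $\gamma^\circ\ge\gamma_1$. The grid condition $\gamma_{i+1}/\gamma_i\le2$ guarantees an expert with $\gamma^\circ/2\le\gamma_i\le\gamma^\circ$ (assuming $\gamma^\circ\le\gamma_{\max}$; otherwise take $\gamma_k$ and note the bound is trivial since losses are $O(1)$). Case (b): $\gamma^\circ<\gamma_1$. Take $i=1$, giving a bound of order $\gamma_1$. In both cases the bracket is at most a constant times $\max\{\gamma^\circ,\gamma_1\}$, which is the final term with constant $4(1+\gamma_{\max})^2$. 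Combining with Step 1 yields the stated inequality.

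I expect the main obstacle to be Step 2. The difficulty is getting the dynamic-regret constant cleanly with a moving comparator, in particular controlling the telescoping cross terms through the boundedness lemma for the ACI iterates rather than through projection, since ACI is unprojected. Bookkeeping the constants so that they land exactly at $4(1+\gamma_{\max})^2$ is the delicate part, and I would first try a crude bound and then tighten the choice of $\gamma_i$ in Step 3.
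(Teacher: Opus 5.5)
The paper gives no proof of this theorem; it imports it from Gibbs and Candès. Your outline follows the same architecture as that source argument:
\begin{itemize}
\item fixed-share regret against any single expert on $W$, combined with Jensen;
\item a Zinkevich-type dynamic-regret bound for each unprojected ACI expert, using $\alpha^i_t\in[-\gamma_i,1+\gamma_i]$;
\item selection of an expert from the doubling grid.
\end{itemize}
Steps 1 and 2 check out. Step 1 tacitly needs $\sigma\le 1/2$ so that $\log\frac{1}{1-\sigma}\le 2\sigma$. Your cases (a) and (b) of Step 3 give $3(1+\gamma_{\max})^2\gamma^\circ$ and $2(1+\gamma_{\max})^2\gamma_1$, both within the stated constant.

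The gap is the case $\gamma^\circ>\gamma_{\max}$, which you dismiss as trivial because losses are $O(1)$. The right-hand side is not bounded below by a constant. Its last term is of order $\gamma^\circ$, and for suitable $\eta,\sigma$ its first two terms are of order $\sqrt{\log(k|W|)/|W|}$. So it can be small while the left-hand side is of order one. Using expert $k$ only yields $(1+\gamma_k)^2\bigl((\gamma^\circ)^2/\gamma_k+\gamma_k\bigr)$, which is not $O(\gamma^\circ)$.

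Here is a concrete counterexample. Take $\alpha=0.1$ and $\gamma_i=2^{i-1}\cdot 10^{-6}$ for $i=1,\ldots,4$. Feed $\beta_t=1$ until every expert sits at about $1$. Then, on a window with $|W|=10^4$, feed $\beta_t=0$ and compare against $\alpha^*_t\equiv 0$, so $\gamma^\circ=0.01>\gamma_{\max}$.
\begin{itemize}
\item Left-hand side: each expert moves by $0.9\gamma_i$ per step, hence by less than $0.9\cdot 8\cdot 10^{-6}\cdot 10^{4}<0.08$ over the window. Therefore $\ell(0,\bar\alpha_t)=0.9\,\bar\alpha_t>0.8$ throughout, while $\ell(0,\alpha^*_t)=0$.
\item Right-hand side: with $\sigma=1/(2|W|)$ and $\eta=0.04$, the first two terms total about $0.06$ and the last is about $0.04$.
\end{itemize}

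So the statement as restated is false, and no argument can close your case (c). You need a hypothesis guaranteeing that the grid reaches $\gamma^\circ$, for example $\gamma_k\ge 1$. This suffices because $\sum_{t=r+1}^{s}|\alpha^*_t-\alpha^*_{t-1}|\le |W|-1$ forces $\gamma^\circ\le 1$. As far as I recall, the version in Gibbs and Candès carries a lower bound on $\gamma_k$ of this kind, which the restatement here drops. With that hypothesis, case (c) is vacuous and the rest of your proof goes through.
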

\begin{remark}\label{rem:big-o-regret}
The bound of the error presented here can be written in simpler form for specific choices of $\sigma$ and $\eta$. Specifically, $\eta = \sqrt{\frac{\log(2k|W|)+1}{\sum_{t=r}^{s}\E[\ell(\beta_t,\alpha_t)^2]}}$ and $\sigma=1/(2|W|)$
\begin{equation}
    \begin{split}
        \frac{1}{|W|} \sum_{t=r}^s \ell((\beta_t,\bar{\alpha}_t)) - \frac{1}{|W|} \sum_{t=r}^s \ell((\beta_t,\alpha^*_t)) \leq
        O\left(\sqrt{\frac{\log(|W|)}{|W|}}\right)+ O\left(
        \sqrt{\frac{\sum_{t=r+1}^{s}\left|\alpha_t^{*}-\alpha_{t-1}^{*}\right|+1}{|W|}}\right).
    \end{split}
\end{equation}
This derivation is due to \citet{gibbs_conformal_2024}
\end{remark}

\begin{theorem}[Theorem 6, \citep{gibbs_conformal_2024}]
\label{thm:dtaci_longterm} Consider a modified version of \cref{alg:dtaci} in which on iteration $t$ the parameters $\eta$ and $\sigma$ are replaced by values $\eta_t$ and $\sigma_t$. Let $\gamma_{\min} := \min_i \gamma_i$,$\gamma_{\max} := \max_i \gamma_i$, and that $\lim_{t\to\infty} \eta_t = \lim_{t\to\infty} \sigma_t = 0$. Then,
\[
\lim_{T\to\infty} \frac{1}{T} \sum_{t=1}^T \mathrm{err}_t \overset{a.s.}{=} \alpha
\]
where the expectation is over the randomness in the randomized variant of DtACI and the data $\beta_1, \dots, \beta_T$ can be viewed as fixed.
\end{theorem}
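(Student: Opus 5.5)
The plan is to reduce the DtACI long-run coverage statement to the single-expert ACI telescoping identity. The bridge is the randomized expert selection, together with the fact that the mixture probabilities $p_t^i$ change slowly once $\eta_t,\sigma_t\to 0$. In the randomized variant, at time $t$ an index $I_t$ is drawn with $\Pr(I_t=i\mid\mathcal F_{t-1})=p_t^i$, and $\err_t=\err_t^{I_t}$, where $\mathcal F_{t-1}$ is generated by the past draws with $\beta_1,\dots,\beta_T$ held fixed. I would split
\[
\frac1T\sum_{t=1}^T(\err_t-\alpha)=\frac1T\sum_{t=1}^T\big(\err_t-\textstyle\sum_i p_t^i\err_t^i\big)+\sum_{i=1}^k\frac1T\sum_{t=1}^T p_t^i(\err_t^i-\alpha)
\]
and show that both pieces vanish.

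\textbf{Step 1 (bounded expert states).} Since $\beta_t\in[0,1]$, if $\alpha_t^i>1$ then $\err_t^i=1$ and $\alpha^i$ decreases, and if $\alpha_t^i<0$ then $\err_t^i=0$ and $\alpha^i$ increases. By induction from $\alpha_1^i\in[0,1]$, this gives $\alpha_t^i\in[-\gamma_{\max},1+\gamma_{\max}]$ for all $t$, which is the standard ACI argument. Consequently the pinball loss $\ell(\beta_t,\alpha_t^i)$ is bounded by a constant $B$ depending only on $\alpha$ and $\gamma_{\max}$.

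\textbf{Step 2 (martingale term).} The quantities $D_t:=\err_t-\sum_ip_t^i\err_t^i$ form a martingale difference sequence bounded by $1$. By Azuma--Hoeffding, $\Pr(|\sum_{t\le T}D_t|>T^{3/4})\le 2e^{-\sqrt T/2}$, which is summable in $T$. Borel--Cantelli then gives $\frac1T\sum_t D_t\to0$ almost surely.

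\textbf{Step 3 (deterministic term, via summation by parts).} The ACI recursion gives $\err_t^i-\alpha=(\alpha_t^i-\alpha_{t+1}^i)/\gamma_i$, so
\[
\sum_{t=1}^Tp_t^i(\err_t^i-\alpha)=\frac1{\gamma_i}\Big(p_1^i\alpha_1^i-p_T^i\alpha_{T+1}^i+\sum_{t=1}^{T-1}(p_{t+1}^i-p_t^i)\alpha_{t+1}^i\Big).
\]
By Step 1 this is at most $\frac{C}{\gamma_{\min}}\big(1+\sum_{t<T}|p_{t+1}^i-p_t^i|\big)$ in absolute value. It therefore suffices to show $\frac1T\sum_t|p_{t+1}^i-p_t^i|\to0$.

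\textbf{Step 4 (main obstacle: slow drift of the weights).} Write $p_{t+1}^i=(1-\sigma_t)\bar p_t^i+\sigma_t/k$, where $\bar p_t^i\propto p_t^ie^{-\eta_t\ell(\beta_t,\alpha_t^i)}$. Because every loss lies in $[0,B]$, each ratio $\bar p_t^i/p_t^i$ lies in $[e^{-\eta_tB},e^{\eta_tB}]$. Hence $|\bar p_t^i-p_t^i|\le e^{\eta_tB}-1$ and $|p_{t+1}^i-p_t^i|\le (e^{\eta_tB}-1)+2\sigma_t$. The key point is that this bound is uniform in the $\beta$ sequence, because Step 1 already controls the loss. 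Since $\eta_t,\sigma_t\to0$, the right-hand side tends to $0$, and its Cesàro average tends to $0$ as well. Combining Steps 2--4 gives $\frac1T\sum_t\err_t\to\alpha$ almost surely.
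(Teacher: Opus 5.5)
Your argument is correct. The paper gives no proof of its own here, since the theorem is quoted from Gibbs and Cand\`es, and your route is essentially the argument behind that result: bounded ACI states, summation by parts of $\err_t^i-\alpha=(\alpha_t^i-\alpha_{t+1}^i)/\gamma_i$ against $p_t^i$, and the per-step drift bound $|p_{t+1}^i-p_t^i|\le (e^{\eta_t B}-1)+2\sigma_t$. The cited result's quantitative form, $\frac{1}{T}\sum_t\eta_t e^{\eta_t(1+2\gamma_{\max})}$ and $\frac{1}{T}\sum_t\sigma_t$ divided by $\gamma_{\min}$, is what your Steps 3--4 give for $\frac{1}{T}\sum_t\E[\err_t]-\alpha$, and your Azuma--Borel--Cantelli step upgrades that in-expectation bound (hence the leftover ``expectation'' clause in the statement) to the almost-sure conclusion.
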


\subsection{HDR and Score-Stability Proofs}\label{app:score-stability-proofs}

\begin{assumption}[Regularity for HDR stability]
\label{assump:hdr_regular}
Let $\mathcal Z_t$ denote the effective weighted residual distribution used by
AdaptNC at score-update time $t$, and let $f_t$ denote its density. Let
$\mathcal Z_\star$ be the limiting weighted residual distribution with density
$f_\star$.

\begin{enumerate}
    \item The residual-distribution convergence is strong enough to imply
    uniform density convergence:
    \[
        \|f_t-f_\star\|_\infty\to0.
    \]

    \item The KDE kernel, bandwidth sequence, and weighted effective sample size
    satisfy the standard conditions for uniform KDE consistency:
    \[
        \|\widehat f_t-f_t\|_\infty\xrightarrow{p}0.
    \]

    \item The density-score quantile at level $\alpha$ is regular: if
    \[
        G_\star(u):=
        \mathbb P_{Z\sim\mathcal Z_\star}(f_\star(Z)\le u),
    \]
    then $G_\star$ is continuous and strictly increasing in a neighborhood of
    $\tau_\star:=G_\star^{-1}(\alpha)$.

    \item The limiting HDR boundary is regular: there exist constants
    $c>0$ and $\varepsilon_0>0$ such that
    \[
        \|\nabla f_\star(z)\|_2\ge c
        \qquad
        \text{whenever }
        |f_\star(z)-\tau_\star|\le\varepsilon_0.
    \]

    \item The relevant HDRs are nonempty compact subsets of $\mathbb R^p$.
    \item There exists \(L_{\mathrm{vol}}<\infty\) such that for all \(x\), all thresholds \(q\) in the relevant range, and all sufficiently small \(\varepsilon>0\)
    \[ \operatorname{Vol} \{y : q-\varepsilon \le s_\star(x,y)\le q+\varepsilon\}
    \le L_{\mathrm{vol}}\varepsilon.\]
\end{enumerate}
\end{assumption}

\begin{lemma}[Hausdorff consistency of the MCKDE high-density region]
\label{lem:mckde_hausdorff_consistency}
Let
\[
    R_\star := \{z\in\Omega:f_\star(z)\ge\tau_\star\}
\]
be the limiting $(1-\alpha)$ high-density residual region, where $\tau_\star$
is chosen so that
\[
    \int_{R_\star} f_\star(z)\,dz = \int_{\Omega}\mathbf 1\{f_\star(z)\ge\tau_\star\}f_\star(z)\,dz = 1-\alpha.
\]

At score-update time $t$, let $\widehat f_t$ be the KDE formed from $N_t$
weighted residual samples, and let $\widehat\tau_t$ be the Monte Carlo estimate
of the density threshold obtained from $M_t$ samples. Define the MCKDE estimated
high-density region
\[
    \widehat R_t := \{z\in\Omega:\widehat f_t(z)\ge\widehat\tau_t\}.
\]
Under Assumptions~\ref{assump:weighted_residual_stability}
and~\ref{assump:hdr_regular},
\[
    d_H(\widehat R_m,R_\star)\xrightarrow{p}0.
\]
\end{lemma}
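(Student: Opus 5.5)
We establish Hausdorff consistency in two stages. First we show that the estimated density threshold $\widehat\tau_t$ converges in probability to $\tau_\star$. Then we convert the resulting uniform closeness of the pair $(\widehat f_t,\widehat\tau_t)$ to $(f_\star,\tau_\star)$ into closeness of level sets, using the gradient condition in \cref{assump:hdr_regular}(4). The starting point is the triangle inequality together with parts (1) and (2) of \cref{assump:hdr_regular}:
\[
\|\widehat f_t-f_\star\|_\infty\le\|\widehat f_t-f_t\|_\infty+\|f_t-f_\star\|_\infty\xrightarrow{p}0 .
\]
Write $\epsilon_t:=\|\widehat f_t-f_\star\|_\infty$.

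\textbf{Threshold consistency.} By construction, $\widehat\tau_t$ is the empirical $\alpha$-quantile of the values $\widehat f_t(z_i)$ with $z_i\sim\widehat f_t$ i.i.d. Define the population version $\widehat G_t(u):=\mathbb P_{Z\sim\widehat f_t}(\widehat f_t(Z)\le u)$. Conditional on $\widehat f_t$, the Glivenko–Cantelli theorem (\cref{thm:gliv-cant-thm}) with $M_t\to\infty$ (\cref{assump:weighted_residual_stability}) gives that the empirical CDF of the $\xi_i$ is uniformly close to $\widehat G_t$. It then suffices to show $\widehat G_t(u)\to G_\star(u)$ for each $u$ near $\tau_\star$. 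I would split this into two pieces.
\begin{itemize}
\item Replace the sampling law $\widehat f_t$ by $f_\star$. Uniform convergence of densities on the compact region that carries the mass implies total variation convergence (Scheffé's lemma on the bulk, with tails controlled by tightness).
\item Replace the event $\{\widehat f_t(Z)\le u\}$ by $\{f_\star(Z)\le u\}$. These events differ only on the set $\{|f_\star(Z)-u|\le\epsilon_t\}$, whose probability vanishes by continuity of $G_\star$ (\cref{assump:hdr_regular}(3)).
\end{itemize}
Since $G_\star$ is continuous and strictly increasing near $\tau_\star$, the standard quantile-inversion argument then yields $\widehat\tau_t\xrightarrow{p}\tau_\star$.

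\textbf{Level-set sandwich.} Let $\rho_t:=\epsilon_t+|\widehat\tau_t-\tau_\star|\xrightarrow{p}0$. Because $f_\star-\epsilon_t\le\widehat f_t\le f_\star+\epsilon_t$, we get
\[
\{f_\star\ge\tau_\star+\rho_t\}\subseteq\widehat R_t\subseteq\{f_\star\ge\tau_\star-\rho_t\}.
\]
The Hausdorff distance is therefore at most the larger of two quantities:
\begin{itemize}
\item how far $\{f_\star\ge\tau_\star-\rho_t\}$ extends beyond $R_\star$;
\item how far points of $R_\star$ lie from $\{f_\star\ge\tau_\star+\rho_t\}$.
\end{itemize}

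\textbf{Boundary regularity (main obstacle).} This is where the condition $\|\nabla f_\star\|\ge c$ on the band $|f_\star-\tau_\star|\le\varepsilon_0$ is used. Take a point $z$ with $\tau_\star-\rho\le f_\star(z)<\tau_\star$. Following the normalized gradient flow $\dot z=\nabla f_\star/\|\nabla f_\star\|^2$, the value of $f_\star$ increases at unit rate, and each unit of increase moves $z$ by at most $1/c$ in distance. Hence $z$ reaches $R_\star$ after travelling at most $\rho/c$. Symmetrically, every point of $R_\star$ lies within $\rho/c$ of $\{f_\star\ge\tau_\star+\rho\}$.

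The delicate points are the following.
\begin{itemize}
\item The flow must stay inside the band, which holds once $\rho<\varepsilon_0$.
\item It must not escape to infinity. This uses compactness from \cref{assump:hdr_regular}(5) and the fact that $\nabla f_\star$ is locally Lipschitz, so the flow is well defined.
\item Points of the outer set lying far from the band must be excluded. Any point with $f_\star\ge\tau_\star-\rho$ either satisfies $f_\star\ge\tau_\star$ (and is already in $R_\star$) or lies in the band, so this case is covered.
\end{itemize}
Combining the three steps gives $d_H(\widehat R_t,R_\star)\le\rho_t/c$ on the event $\{\rho_t<\varepsilon_0\}$. That event has probability tending to one, so $d_H(\widehat R_m,R_\star)\xrightarrow{p}0$ along the score-update times $t_m$.
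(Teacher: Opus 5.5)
Your proposal is correct and follows the same route as the paper's proof: the triangle inequality for $\|\widehat f_t-f_\star\|_\infty$, consistency of $\widehat\tau_t$ via Glivenko--Cantelli plus the regularity of $G_\star$, the level-set sandwich with $\rho_t=\|\widehat f_t-f_\star\|_\infty+|\widehat\tau_t-\tau_\star|$, and the gradient condition to turn the level band into an $O(\rho_t)$ Hausdorff bound. You also fill in steps the paper only sketches---the TV/event-replacement argument for $\widehat\tau_t$, and a gradient-flow bound $d_H\le\rho_t/c$ that correctly measures $R_\star$ against $\{f_\star\ge\tau_\star+\rho_t\}\subseteq\widehat R_t$ rather than against $\{f_\star=\tau_\star\}$ (which only handles the outer inclusion)---with two small refinements: since the sampling law $\widehat f_t$ changes with $t$, cite the Dvoretzky--Kiefer--Wolfowitz inequality rather than plain Glivenko--Cantelli, and the flow only needs $\nabla f_\star$ continuous on the band, not locally Lipschitz (that property is not among the stated assumptions), while compactness plays no role there because the flow's speed is at most $1/c$.
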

\begin{proof}
By the assumed stability of the weighted residual distributions and the assumed
relation between $D_\phi$ and the residual density class,
\[
    \|f_t-f_\star\|_\infty\to0.
\]
By uniform KDE consistency,
\[
    \|\widehat f_t-f_t\|_\infty\xrightarrow{p}0.
\]
Therefore,
\[
    \|\widehat f_t-f_\star\|_\infty\xrightarrow{p}0.
\]

The Monte Carlo threshold $\widehat\tau_t$ is the empirical quantile of the
density scores $\widehat f_t(Z)$, where $Z$ is sampled from the KDE. Since
$M_t\to\infty$, the empirical density-score CDF converges uniformly to its
population counterpart, by the Glivenko--Cantelli theorem. Together with the regularity of the density-score
quantile and the convergence $\widehat f_t\to f_\star$, this gives
\[
    |\widehat\tau_t-\tau_\star|\xrightarrow{p}0.
\]

Define
\[
    \varepsilon_t := \|\widehat f_t-f_\star\|_\infty + |\widehat\tau_t-\tau_\star|.
\]
Then $\varepsilon_t\xrightarrow{p}0$.

If $z\in\widehat R_t$, then $\widehat f_t(z)\ge\widehat\tau_t$, and hence
\[
    f_\star(z)
    \ge
    \widehat f_t(z)-\|\widehat f_t-f_\star\|_\infty
    \ge
    \widehat\tau_t-\|\widehat f_t-f_\star\|_\infty
    \ge
    \tau_\star-\varepsilon_t.
\]
Therefore,
\[
    \widehat R_t \subseteq \{z\in\Omega:f_\star(z)\ge\tau_\star-\varepsilon_t\}.
\]

Similarly, if $z\in R_\star$, then $f_\star(z)\ge\tau_\star$. Points in
$R_\star$ with $f_\star(z)\ge\tau_\star+\varepsilon_t$ satisfy
\[
    \widehat f_t(z)
    \ge
    f_\star(z)-\|\widehat f_t-f_\star\|_\infty
    \ge
    \tau_\star+\varepsilon_t-\|\widehat f_t-f_\star\|_\infty
    \ge
    \widehat\tau_t,
\]
and therefore belong to $\widehat R_t$. Thus, any point of $R_\star$ that may
fail to belong to $\widehat R_t$ must lie in the shrinking level band
\[
    \{z\in\Omega: |f_\star(z)-\tau_\star|\le\varepsilon_t\}.
\]

By the regular-boundary assumption, these level bands shrink to
$\partial R_\star$ in Hausdorff distance. More precisely, for all sufficiently
small $\varepsilon$, every point satisfying
\[
    |f_\star(z)-\tau_\star|\le\varepsilon
\]
lies within distance $O(\varepsilon)$ of the boundary
$\{z:f_\star(z)=\tau_\star\}$. Consequently,
\[
    d_H(\widehat R_t,R_\star) \le C\varepsilon_t
\]
for some constant $C>0$ and all sufficiently large $t$, with probability tending
to one. Since $\varepsilon_t\xrightarrow{p}0$, we conclude that
\[
    d_H(\widehat R_t,R_\star)\xrightarrow{p}0.
\]
\end{proof}

\begin{lemma}[Convex-hull stability]
\label{lem:convex_hull_stability}
Let $A_m,A_\star\subset\mathbb R^p$ be nonempty compact sets.
\[
    d_H(A_m,A_\star)\to0 \implies d_H(\operatorname{conv}(A_m),\operatorname{conv}(A_\star))\to0.
\]
\end{lemma}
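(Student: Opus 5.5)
The approach is to prove the stronger, non-asymptotic inequality
\[
    d_H(\operatorname{conv}(A),\operatorname{conv}(B)) \le d_H(A,B)
\]
for nonempty compact $A,B\subset\mathbb R^p$. The stated implication then follows immediately by taking $A=A_m$ and $B=A_\star$ and letting $m\to\infty$. The key tool is the convex-combination representation of the hull (Carathéodory), together with the fact that the Euclidean norm is convex.

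Let $\delta:=d_H(A,B)$, so that $A\subseteq B+\delta\bar{\mathbb B}$ and $B\subseteq A+\delta\bar{\mathbb B}$, where $\bar{\mathbb B}$ is the closed unit ball. Compactness ensures these closed-neighborhood inclusions hold with the infimum attained. Take any $x\in\operatorname{conv}(A)$ and write $x=\sum_{j=1}^{p+1}\lambda_j a_j$ with $a_j\in A$, $\lambda_j\ge0$ and $\sum_j\lambda_j=1$. For each $j$, choose $b_j\in B$ with $\|a_j-b_j\|_2\le\delta$, and set $y:=\sum_j\lambda_j b_j\in\operatorname{conv}(B)$. The triangle inequality gives
\[
    \|x-y\|_2\le\sum_j\lambda_j\|a_j-b_j\|_2\le\delta .
\]
Hence $\operatorname{conv}(A)\subseteq\operatorname{conv}(B)+\delta\bar{\mathbb B}$. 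The symmetric argument gives the reverse inclusion, so $d_H(\operatorname{conv}(A),\operatorname{conv}(B))\le\delta$.

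One can equally argue through support functions. We have $h_{\operatorname{conv}(A)}=h_A$, and $d_H$ between compact convex sets equals $\sup_{\|u\|=1}|h_K(u)-h_L(u)|$. Moreover $|h_A(u)-h_B(u)|\le d_H(A,B)$ for unit $u$, and this yields the same bound. That version also fits naturally with the ``canonical support-function scores'' used in \cref{thm:score_func_stability}, so I may mention it as a remark.

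No step is a genuine obstacle. The only point requiring care is that the convex hull of a compact set is itself compact, which follows from Carathéodory's theorem as the continuous image of the compact set $A^{p+1}\times\Delta_p$. Compactness is needed so that $d_H$ is finite and the convex hulls are valid arguments for the Hausdorff metric. For the probabilistic use in \cref{thm:score_func_stability}, the $1$-Lipschitz bound also transfers convergence in probability from \cref{lem:mckde_hausdorff_consistency} directly to $d_H(K_m,K_\star)\xrightarrow{p}0$.
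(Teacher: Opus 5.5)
Your proof is correct and is essentially the paper's argument. The paper also writes a point of $\operatorname{conv}(A_m)$ as a finite convex combination, matches each $a_\ell$ to a point of $A_\star$ within $\rho_m=d_H(A_m,A_\star)$, and uses the triangle inequality to obtain the non-expansive bound $d_H(\operatorname{conv}(A_m),\operatorname{conv}(A_\star))\le d_H(A_m,A_\star)$, which it records as a corollary; your Carath\'eodory reduction to $p+1$ points and your support-function remark are harmless refinements rather than a different route.
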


\begin{proof}
Let $\rho_m=d_H(A_m,A_\star)$. For any
$x\in\operatorname{conv}(A_m)$, write
\[
    x=\sum_{\ell=1}^L\lambda_\ell a_\ell,
    \qquad
    a_\ell\in A_m,\quad
    \lambda_\ell\ge0,\quad
    \sum_{\ell=1}^L\lambda_\ell=1.
\]
For each $a_\ell$, choose $a_\ell^\star\in A_\star$ such that
$\|a_\ell-a_\ell^\star\|_2\le\rho_m$. Then
\[
    x^\star:=\sum_{\ell=1}^L\lambda_\ell a_\ell^\star
    \in\operatorname{conv}(A_\star),
\]
and
\[
    \|x-x^\star\|_2 \le \sum_{\ell=1}^L\lambda_\ell\|a_\ell-a_\ell^\star\|_2 \le \rho_m.
\] where the last inequality follows from the definition of the Hausdorff distance.
Thus every point of $\operatorname{conv}(A_m)$ lies within $\rho_m$ of
$\operatorname{conv}(A_\star)$. The reverse inclusion is identical. Hence
\[
    d_H(\operatorname{conv}(A_m),\operatorname{conv}(A_\star)) \le \rho_m.
\]
\end{proof}
\begin{corollary}
    In the setting of \cref{lem:convex_hull_stability},
    \[
    d_H(\operatorname{conv}(A_m),\operatorname{conv}(A_\star)) \le d_H(A_m,A_\star).
    \]
\end{corollary}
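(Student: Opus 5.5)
The inequality follows directly from the proof of \cref{lem:convex_hull_stability}, which showed more than convergence: it produced an explicit Hausdorff bound. I will make that bound the main object and write the argument so that it holds for each fixed $m$, not only in the limit.

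Set $\rho_m := d_H(A_m,A_\star)$ and recall that, for nonempty compact sets,
\[
d_H(B,C)=\max\Bigl\{\sup_{b\in B}\operatorname{dist}(b,C),\ \sup_{c\in C}\operatorname{dist}(c,B)\Bigr\}.
\]
Compactness of $A_m$ and $A_\star$ gives, for every $a\in A_m$, a point $a^\star\in A_\star$ with $\|a-a^\star\|_2\le\rho_m$; the infimum is attained, so no $\epsilon$-slack is needed. Compactness is also preserved under convex hulls in $\mathbb R^p$, since $\operatorname{conv}$ of a compact set is compact by Carath\'eodory. Hence $d_H$ is well defined on the hulls.

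\textbf{First one-sided bound.} Take any $x\in\operatorname{conv}(A_m)$. By Carath\'eodory, write it as a finite convex combination $x=\sum_\ell\lambda_\ell a_\ell$ of points $a_\ell\in A_m$. Pair each $a_\ell$ with a nearest point $a_\ell^\star\in A_\star$ and set $x^\star=\sum_\ell\lambda_\ell a_\ell^\star\in\operatorname{conv}(A_\star)$. The triangle inequality then gives
\[
\|x-x^\star\|_2\le\sum_\ell\lambda_\ell\|a_\ell-a_\ell^\star\|_2\le\rho_m,
\]
so $\sup_{x\in\operatorname{conv}(A_m)}\operatorname{dist}(x,\operatorname{conv}(A_\star))\le\rho_m$.

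\textbf{Second one-sided bound.} The same construction with the roles of $A_m$ and $A_\star$ exchanged bounds the other directed distance by $\rho_m$, because the Hausdorff distance is symmetric.

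Taking the maximum of the two one-sided bounds yields $d_H(\operatorname{conv}(A_m),\operatorname{conv}(A_\star))\le d_H(A_m,A_\star)$, as claimed. There is no real obstacle here. The only points to check are that the infima are attained, or else one replaces $\rho_m$ by $\rho_m+\epsilon$ and lets $\epsilon\to0$, and that finite convex combinations suffice to describe the hull.
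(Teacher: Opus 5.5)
Your proof is correct and is essentially the paper's argument. The paper's proof of \cref{lem:convex_hull_stability} already ends with $d_H(\operatorname{conv}(A_m),\operatorname{conv}(A_\star))\le\rho_m$, obtained exactly as you do: write each hull point as a finite convex combination, match each atom to a point of the other set within $\rho_m$, apply the triangle inequality, and swap roles. Your added remarks on attained infima and compactness of the hulls are harmless refinements the paper leaves implicit.
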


Because halfspace representations of a convex set are not unique, we state
score stability for the canonical support-function score induced by the fitted
convex template. Equivalently, the implemented halfspace score should be
normalized to this canonical representation before comparing scores across
updates. Our implementation, which relies on QuickHull \cite{barber_quickhull_1996}, performs this normalization.

\begin{lemma}[Score stability from hull stability]
\label{lem:score_stability_from_hull}
For a compact convex set $K\subset\mathbb R^p$, define
\[
    s_K(x,y) := \sup_{\|v\|_2=1}\{v^\top(y-h(x))-H_K(v)\}, \qquad H_K(v):=\sup_{z\in K}v^\top z.
\]
If
\[
    d_H(K_t,K_\star)\to0,
\]
then
\[
    \sup_{(x,y)\in\mathcal A}|s_{K_t}(x,y)-s_{K_\star}(x,y)| \to0
\]
for a set $\mathcal A\subseteq\mathcal X\times\mathcal Y$.
\end{lemma}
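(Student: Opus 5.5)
The plan is to reduce the claim to the classical fact that support functions are $1$-Lipschitz with respect to Hausdorff distance. Write $z := y-h(x)$. Then
\[
s_K(x,y)=\sup_{\|v\|_2=1}\{v^\top z - H_K(v)\},
\]
so $s_K$ depends on $(x,y)$ only through $z$. Take any two compact convex sets $K,K'$ and fix a unit vector $v$. The terms $v^\top z$ cancel, so the difference of the objectives at $v$ is $H_{K'}(v)-H_K(v)$. Using the elementary inequality $|\sup_v a(v)-\sup_v b(v)|\le \sup_v|a(v)-b(v)|$, we obtain, for every $(x,y)$,
\[
|s_K(x,y)-s_{K'}(x,y)| \le \sup_{\|v\|_2=1}|H_K(v)-H_{K'}(v)|.
\]
The right-hand side does not depend on $(x,y)$. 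Hence the supremum over any $\mathcal A\subseteq\mathcal X\times\mathcal Y$ is controlled by it, and no boundedness of $\mathcal A$ is needed.

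The key step is to show that $\sup_{\|v\|=1}|H_K(v)-H_{K'}(v)|\le d_H(K,K')$. The direct route works as follows. Let $\rho=d_H(K,K')$, so that $K\subseteq K'+\rho B$, where $B$ is the closed unit ball. Support functions are monotone under inclusion and additive under Minkowski sums, and $H_{\rho B}(v)=\rho\|v\|=\rho$. Therefore $H_K(v)\le H_{K'}(v)+\rho$. Exchanging the roles of $K$ and $K'$ gives the reverse inequality, and hence $|H_K(v)-H_{K'}(v)|\le\rho$ for all unit $v$. Compactness of $K$ and $K'$ guarantees that the $H$ values are finite, so the subtractions are legitimate.

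Combining the two steps gives
\[
\sup_{(x,y)\in\mathcal A}|s_{K_t}(x,y)-s_{K_\star}(x,y)|\le d_H(K_t,K_\star).
\]
This tends to $0$ by hypothesis, and the same inequality also yields the in-probability version used in \cref{thm:score_func_stability}.

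The main obstacle is modest. One has to make sure $s_K$ is finite and well defined; it is, since $K$ is compact and nonempty and the sphere is compact. One also has to handle the sup-difference inequality carefully when the suprema are attained at different $v$. I would write this out explicitly: bound $s_K(z)\le s_{K'}(z)+\rho$ by evaluating at a maximizer $v$ for $s_K$, then argue symmetrically.
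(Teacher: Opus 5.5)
Your proposal is correct and follows the paper's proof: both bound $|s_{K_t}-s_{K_\star}|$ pointwise by $\sup_{\|v\|_2=1}|H_{K_t}(v)-H_{K_\star}(v)|$ via the sup-difference inequality, and then control this quantity by $d_H(K_t,K_\star)$ uniformly in $(x,y)$. The only difference is that the paper cites the identity $\sup_{\|v\|_2=1}|H_K(v)-H_{K'}(v)|=d_H(K,K')$ for compact convex sets, whereas you prove just the needed inequality from $K\subseteq K'+\rho B$ together with monotonicity and Minkowski additivity of support functions, which is all the argument requires.
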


\begin{proof}
For any residual $z=y-h(x)$,
\[
\begin{aligned}
|s_{K_m}(x,y)-s_{K_\star}(x,y)|
&=
\left|
\sup_{\|v\|_2=1}
\{v^\top z-h_{K_m}(v)\}
-
\sup_{\|v\|_2=1}
\{v^\top z-h_{K_\star}(v)\}
\right| \\
&\le
\sup_{\|v\|_2=1}
|H_{K_m}(v)-H_{K_\star}(v)|.
\end{aligned}
\]
For compact convex sets,
\[
    \sup_{\|v\|_2=1}|H_{K_m}(v)-H_{K_\star}(v)| = d_H(K_m,K_\star).
\]
Therefore,
\[
    |s_{K_m}(x,y)-s_{K_\star}(x,y)| \le d_H(K_m,K_\star).
\]
Taking the supremum over $(x,y)\in\mathcal A$ gives the result.
\end{proof}

\scorefuncstability*
\begin{proof}\label{proof:score-func-stability}
By Lemma~\ref{lem:mckde_hausdorff_consistency},
\[
    d_H(\widehat R_m,R_\star)\xrightarrow{p}0.
\]
By Lemma~\ref{lem:convex_hull_stability},
\[
    d_H(K_m,K_\star) = d_H(\operatorname{conv}(\widehat R_m),\operatorname{conv}(R_\star)) \xrightarrow{p}0.
\]
By Lemma~\ref{lem:score_stability_from_hull},
\[
    \sup_{(x,y)\in\mathcal A}|s_{K_m}(x,y)-s_{K_\star}(x,y)| \xrightarrow{p}0.
\]
Finally, by the triangle inequality,
\[
\begin{aligned}
&\sup_{(x,y)\in\mathcal A}|s_{K_{m+1}}(x,y)-s_{K_m}(x,y)| \\
&\qquad\le \sup_{(x,y)\in\mathcal A}|s_{K_{m+1}}(x,y)-s_{K_\star}(x,y)|
    + \sup_{(x,y)\in\mathcal A}|s_{K_m}(x,y)-s_{K_\star}(x,y)|.
\end{aligned}
\]
Both terms converge to zero in probability, so the successive score functions
are asymptotically stable.
\end{proof}

\begin{proof}[Proof of \cref{cor:prediction_region_size_stability}]
Suppose
\[
    \sup_{(x,y)\in\mathcal A}|s_t(x,y)-s_\star(x,y)| \le \delta_t.
\]
Under Assumption~\ref{assump:hdr_regular}, for every fixed threshold \(q\),
\[
    \left|
    \operatorname{Vol}(C_t(x;q))
    -
    \operatorname{Vol}(C_\star(x;q))
    \right|
    \le
    L_{\mathrm{vol}}\delta_t.
\]
Since \(\delta_t\to0\) in probability by \cref{thm:score_func_stability}, the prediction-region volume stabilizes in probability.
\end{proof}

\subsection{Long-Run Coverage under Replay}\label{app:coverage-replay-proofs}

\begin{proof}[Proof of \cref{thm:per_expert_longrun_coverage}]\label{proof:per_expert_longrun_coverage}
Expanding the perturbed recursion gives
\[
    \alpha_{T+1}^i = \alpha_1^i + \gamma_i\sum_{t=1}^T(\alpha-\err_t^i) + \sum_{t=1}^T\Delta_t^i.
\]
Rearranging,
\[
    \frac{1}{T}\sum_{t=1}^T \err_t^i-\alpha
    =
    \frac{\alpha_1^i-\alpha_{T+1}^i}{T\gamma_i} + \frac{1}{T\gamma_i}\sum_{t=1}^T\Delta_t^i.
\]
The first term vanishes because the expert states are bounded. The second term vanishes
by the assumed sublinear cumulative perturbation.
\[
\left|
\frac{1}{T\gamma_i}\sum_{t=1}^T\Delta_t^i
\right|
\le
\frac{1}{T\gamma_i}\sum_{t=1}^T|\Delta_t^i|
\to0.
\]
Hence:
\[
    \frac{1}{T}\sum_{t=1}^T \err_t^i \to \alpha .
\]
\end{proof}

\begin{proof}[Proof of \cref{thm:perturbation_to_zero_avg}]
For the analysis, define the pre-update replay trajectory as the counterfactual
trajectory obtained by replaying \(\mathcal R_m\) using \(s_m^-\), and the
post-update replay trajectory as the trajectory obtained by replaying the same
window using \(s_m^+\), both initialized from the same expert state. Let
\[
    e_{m,r}^{i,-}
    :=
    \mathbf 1\{s_m^-(X_r,Y_r)>q_{m,r}^{i,-}\},
    \qquad
    e_{m,r}^{i,+}
    :=
    \mathbf 1\{s_m^+(X_r,Y_r)>q_{m,r}^{i,+}\}.
\]
At non-update times, \(\Delta_t^i=0\). At update time \(t_m\), the difference
between the replayed post-update expert state and the corresponding
pre-update expert state is controlled by the number of replayed observations
whose coverage indicators change:
\[
    |\Delta_{t_m}^i| \le \gamma_i\sum_{r\in\mathcal R_m}|e_{m,r}^{i,+}-e_{m,r}^{i,-}|.
\]
Now observe that the indicators can differ only if the replayed score lies
near the pre-update threshold. Indeed, if
\[
    |s_m^-(X_r,Y_r)-q_{m,r}^{i,-}| > (1+L_q)\varepsilon_m,
\]
then
\[
    |s_m^+(X_r,Y_r)-s_m^-(X_r,Y_r)| \le \varepsilon_m
\]
and
\[
    |q_{m,r}^{i,+}-q_{m,r}^{i,-}| \le L_q\varepsilon_m
\]
together imply that \(s_m^+(X_r,Y_r)\) and \(s_m^-(X_r,Y_r)\) remain on the
same side of their respective thresholds. Hence
\[
    |e_{m,r}^{i,+}-e_{m,r}^{i,-}|
    \le
    \mathbf 1
    \left\{
        |s_m^-(X_r,Y_r)-q_{m,r}^{i,-}|
        \le
        (1+L_q)\varepsilon_m
    \right\}.
\]
Therefore
\[
    \frac1T\sum_{t=1}^T|\Delta_t^i|
    \le
    \frac{\gamma_i}{T}
    \sum_{m:t_m\le T}
    \sum_{r\in\mathcal R_m}
    \mathbf 1
    \left\{
        |s_m^-(X_r,Y_r)-q_{m,r}^{i,-}|
        \le
        (1+L_q)\varepsilon_m
    \right\}.
\]
The assumed averaged boundary condition implies that the right-hand side
converges to zero. The final coverage claim follows from the perturbed
single-expert ACI guarantee.
\end{proof}

\subsection{Coverage Shock from Score Updates}\label{app:coverage-shock}

{\allowdisplaybreaks
\begin{proposition}\label{prop:dset-change-bound}
    Let $\mathcal{D}_t$ and $\mathcal{D}_{t+1}$ be the joint distributions of inputs and observations $(X_t,Y_t)$ at times $t$ and $t+1$. Assume that the total variation distance ($\TV$) between the two distributions is bounded $\TV(\mathcal{D}_t,\mathcal{D}_{t+1}) \leq \delta$.
    For a fixed NCSF $s(\cdot)$, and a fixed score parameter $\theta$ let $S_{\theta,t}$ be the random variable $s(X,Y,\theta)$ where $(X,Y)\sim \mathcal{D}_t$, and $S_{\theta,t+1}$ be the random variable where $(X,Y)\sim \mathcal{D}_{t+1}$ Then, the $\TV$ distance between the score distributions will be less than $\delta$:
    \(\TV(S_{\theta,t},S_{\theta,t+1}) \leq \delta\)
\end{proposition}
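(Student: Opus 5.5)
This is the data-processing inequality for total variation applied to the deterministic, measurable map \(g_\theta(x,y) := s(x,y;\theta)\). The plan is to write the score laws as pushforwards, \(S_{\theta,t} \sim \mathcal{D}_t\circ g_\theta^{-1}\) and \(S_{\theta,t+1}\sim \mathcal{D}_{t+1}\circ g_\theta^{-1}\). We then compare the two variational formulas for \(\TV\), one over Borel sets of \(\mathbb{R}\) and one over measurable sets of \(\mathcal X\times\mathcal Y\).

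\textbf{Step 1.} Use the characterization \(\TV(P,Q)=\sup_{A}|P(A)-Q(A)|\), where the supremum is over measurable sets. This is the convention consistent with the bound \(\TV(\mathcal{D}_t,\mathcal{D}_{t+1})\le\delta\). If the paper intends the \(\tfrac12\|\cdot\|_1\) form, it coincides with this one, so nothing changes.

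\textbf{Step 2.} Fix an arbitrary Borel set \(B\subseteq\mathbb{R}\) and let \(A:=g_\theta^{-1}(B)\). Because \(s(\cdot;\theta)\) is measurable for fixed \(\theta\), \(A\) is a measurable subset of \(\mathcal X\times\mathcal Y\). This is the only regularity needed, and we take it as implicit in \(S_{\theta,t}\) being a random variable. Then
\[
\bigl|\Pr(S_{\theta,t}\in B)-\Pr(S_{\theta,t+1}\in B)\bigr| = \bigl|\mathcal{D}_t(A)-\mathcal{D}_{t+1}(A)\bigr| \le \TV(\mathcal{D}_t,\mathcal{D}_{t+1})\le\delta .
\]

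\textbf{Step 3.} Take the supremum over \(B\). The sets \(\{g_\theta^{-1}(B)\}\) form a sub-collection of all measurable sets of \(\mathcal X\times\mathcal Y\), so the supremum over \(B\) is at most the supremum over all \(A\). This gives \(\TV(S_{\theta,t},S_{\theta,t+1})\le\delta\).

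There is no real obstacle here. The one point to state explicitly is that \(\theta\) is held fixed and is not data-dependent. If \(\theta\) were fit using the same samples, \(g_\theta\) would be random and the pushforward argument would fail. That is exactly the gap \cref{rem:param-adaptation-cost} isolates: it splits the total shift into this bounded term and a separate term coming from the change of \(\theta\), combined via the triangle inequality for \(\TV\).
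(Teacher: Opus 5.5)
Your proof is correct and follows the paper's approach: the paper simply invokes the data-processing inequality for $f$-divergences, with $\TV$ as the divergence and $s(\cdot;\theta)$ as a deterministic channel. Your preimage argument via $\sup_A |P(A)-Q(A)|$ is that inequality proved directly for this special case.
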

\begin{proof}
    This follows from the Data Processing Inequality \citep[Theorem 7.4]{polyanskiy_information_2025} with NCSF $s(\cdot;\theta)$ as channel, and $f$-divergence as $\TV$ distance.
\end{proof}
}
\clearpage
    \section{Experiments}\label{appendix:experiments}

\subsection{Gaussian Mixture Model: A Minimal Example of Coverage Shock}
\label{appendix:gmm}
\begin{wrapfigure}{r}{0.38\columnwidth}
    \centering
    \includegraphics[width=\linewidth]{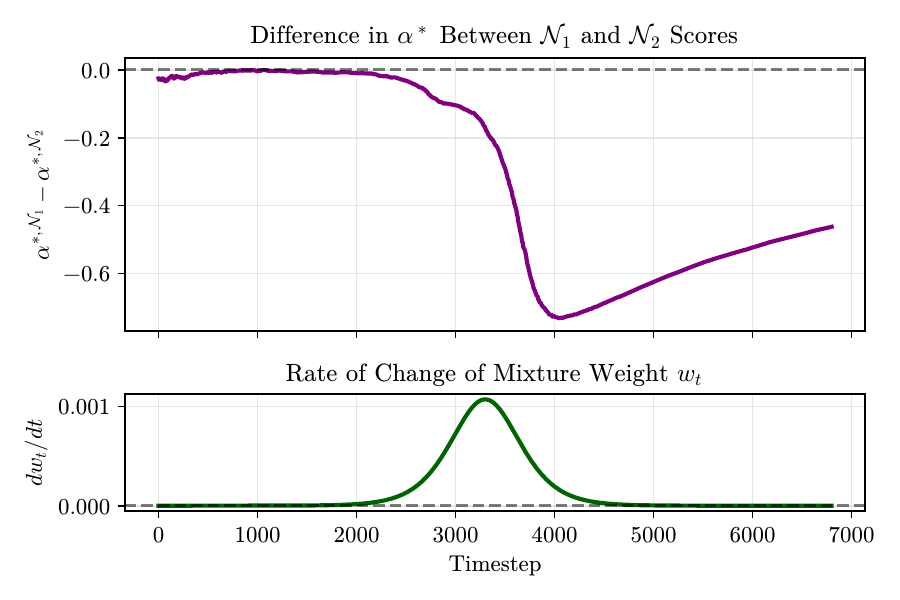}
    % \vspace{-5pt}
    \caption{Score-induced coverage shock in GMM example. The top panel shows the gap between history-relative effective miscoverage levels induced by the two diagnostic scores, while the bottom panel shows the rate of change of the mixture weight. Despite smooth distributional changes, altering the score can induce large shifts in the calibration state that DtACI must track.}
    \label{fig:adapt_nc_alpha_star}
    % \vspace{2em}
\end{wrapfigure}

The replay mechanism in AdaptNC is motivated by a simple failure mode: the
state of an adaptive thresholding algorithm is tied to the nonconformity score
used to map past residuals into ranks and coverage errors. When the score
function changes, the same recent residuals can induce a different score
distribution and therefore a different calibration state. Thus, even if the
underlying residual distribution changes smoothly, a score update can produce a
transient mismatch between the inherited threshold state and the updated score.
We refer to this transient mismatch as \emph{coverage shock}. The following
Gaussian mixture example isolates this effect in a setting where the only source
of nonstationarity is a smooth change in mixture weights.

\paragraph{Data-generating process.}
At time $t$, a residual sample $z_t\in\mathbb R^2$ is drawn from a two-component
Gaussian mixture
\begin{equation}
    p_t(z)=(1-w_t)p_1(z)+w_t p_2(z),
\end{equation}
where $w_t$ varies smoothly over time. Thus the stream begins in a regime
dominated by $\mathcal N_1$ and gradually transitions to a regime dominated by
$\mathcal N_2$. The two components are
\begin{equation*}
    \mathcal{N}_1 = \mathcal{N}\left(
    \begin{bmatrix}1.0 & -1.2\end{bmatrix},
    \begin{bmatrix}
        1.2 & 0.6 \\ 0.6 & 0.9
    \end{bmatrix}
    \right), \qquad
    \mathcal{N}_2 = \mathcal{N}\left(
    \begin{bmatrix}-1.0 & 1.2\end{bmatrix},
    \begin{bmatrix}
        0.8 & -0.3 \\ -0.3 & 1.1
    \end{bmatrix}
    \right).
\end{equation*}
The resulting stream is shown in \cref{fig:appendix-gmm-stream}.

\begin{figure}[h]
    \centering
    \begin{minipage}[t]{0.49\linewidth}
        \centering
        \includegraphics[width=\linewidth]{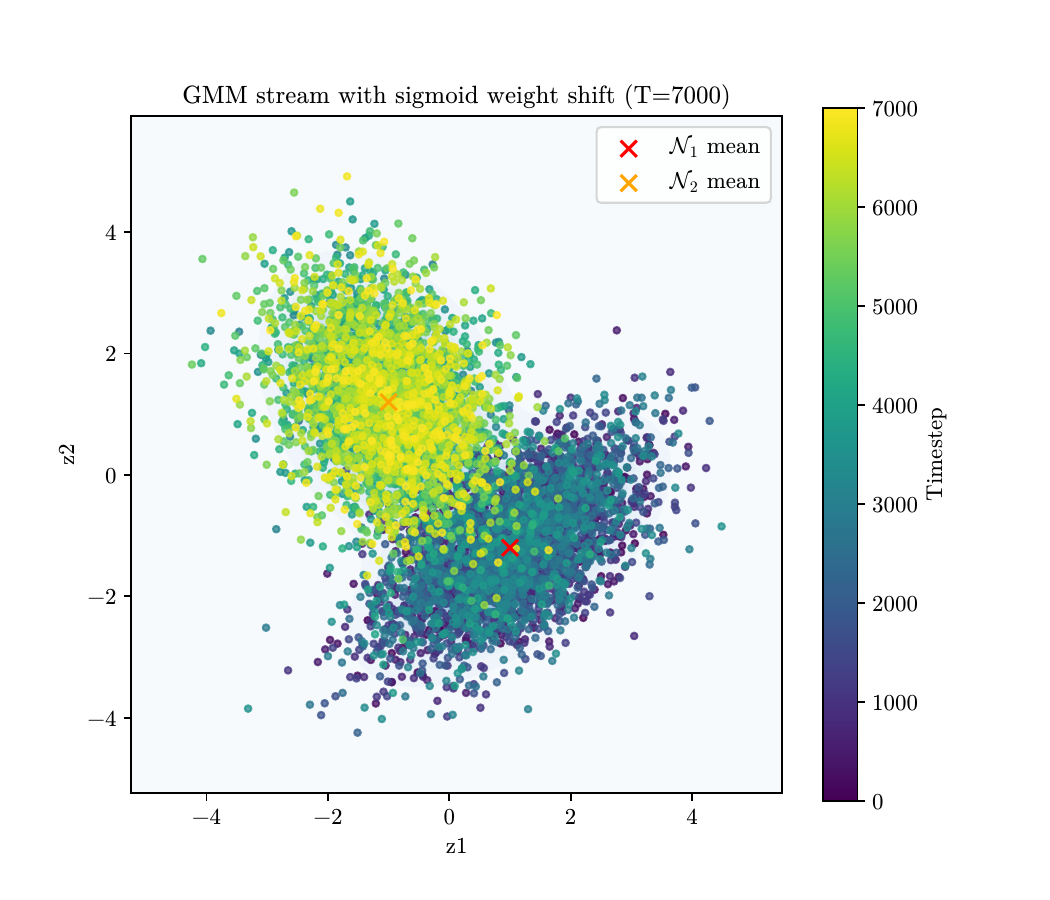}
        \captionof{figure}{The 7000 samples drawn from the shifting Gaussian mixture stream. Color indicates time, showing the transition from the $\mathcal N_1$-dominated regime to the $\mathcal N_2$-dominated regime.}
        \label{fig:appendix-gmm-stream}
    \end{minipage}\hfill
    \begin{minipage}[t]{0.49\linewidth}
        \centering
        \includegraphics[width=\linewidth]{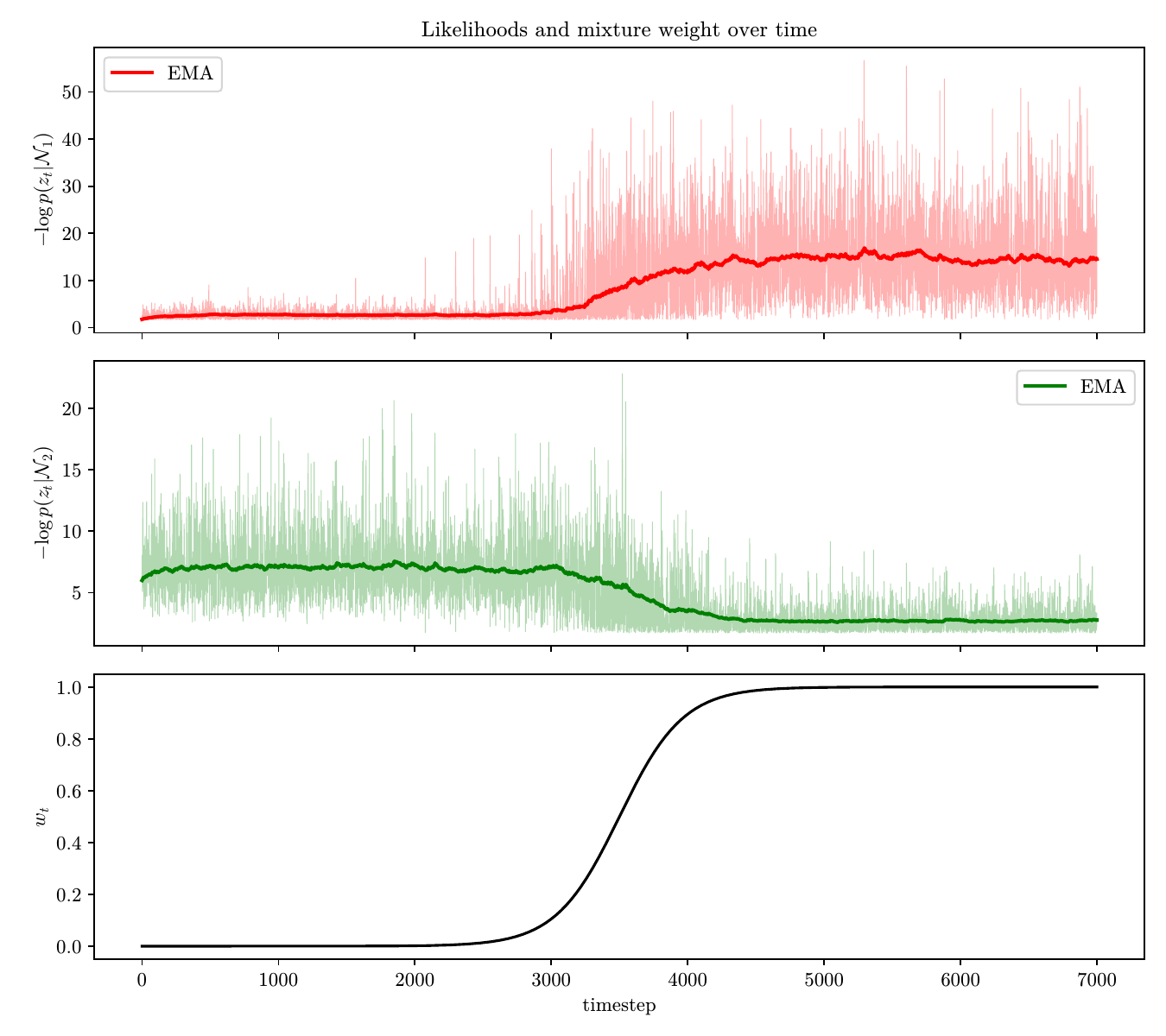}
        \captionof{figure}{Negative log-likelihood scores under the two component scores as the mixture weight changes. The smooth mixture transition induces a systematic change in the score distribution.}
        \label{fig:appendix-gmm-likelihood}
    \end{minipage}
\end{figure}

\paragraph{Score-dependent reference levels.}
We compare two fixed nonconformity scores, each matched to one component:
\begin{equation}
    s_a(z)=-\log p_a(z), \qquad a\in\{1,2\}.
\end{equation}
These scores are used as interpretable diagnostic endpoints: $s_1$ is matched to
the early regime and $s_2$ is matched to the late regime. AdaptNC does not
literally switch between these two scores; rather, the pair illustrates how a
change in score geometry can change the calibration state even on the same
residual stream.

For each score $s_a$, let
\begin{equation}
q_{a,t}^{\mathrm{mix}}
=
\inf\left\{
q:
\Pr_{Z\sim p_t}\left[s_a(Z)\le q\right]\ge 1-\alpha
\right\}
\end{equation}
be the nominal $(1-\alpha)$ score quantile under the instantaneous mixture
$p_t$. In the experiment, this quantity is estimated by Monte Carlo from the
current mixture distribution. We then evaluate where this instantaneous quantile
falls relative to the accumulated score history,
\begin{equation}
\widehat F^{\mathrm{hist}}_{a,t}(u)
=
\frac{1}{t+1}
\sum_{\tau=0}^{t}
\mathds{1}\{s_a(z_\tau)\le u\},
\qquad
\widehat \alpha_{a,t}^{\star}
=
1-\widehat F^{\mathrm{hist}}_{a,t}\!
\left(q_{a,t}^{\mathrm{mix}}\right).
\end{equation}
The quantity $\widehat \alpha_{a,t}^{\star}$ is a diagnostic effective
miscoverage level: it is the miscoverage rate that would make the accumulated
score history place its $(1-\widehat \alpha_{a,t}^{\star})$ quantile at the
instantaneous mixture quantile. Because both $q_{a,t}^{\mathrm{mix}}$ and
$\widehat F^{\mathrm{hist}}_{a,t}$ depend on the score map, changing from
$s_1$ to $s_2$ can induce a large jump in $\widehat \alpha_{a,t}^{\star}$ even
when $w_t$ changes smoothly.

\begin{figure}[t]
    \centering
    \begin{minipage}[t]{0.475\linewidth}
        \includegraphics[width=\linewidth]{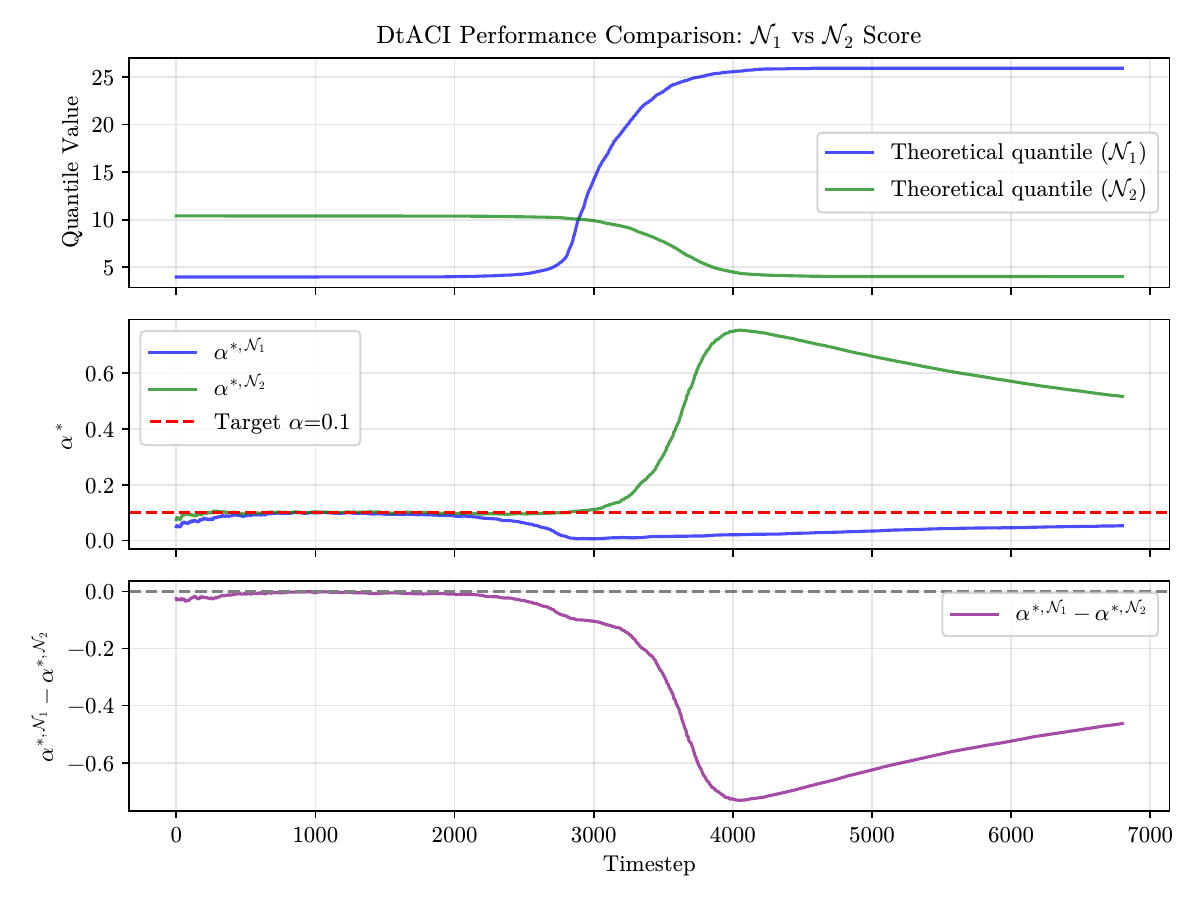}
        \caption{Coverage shock in the shifting GMM. \textbf{Top:} The nominal $(1-\alpha)$ score quantiles for the two diagnostic scores evolve in opposite directions as the dominant mixture component changes. \textbf{Middle:} The corresponding effective miscoverage levels $\widehat\alpha_{a,t}^{\star}$ deviate from the nominal target $\alpha=0.1$ and from one another. \textbf{Bottom:} The gap between these levels quantifies the change in calibration state induced by changing scores.}
        \label{fig:appendix-gmm-alpha-star}
    \end{minipage}\hfill
    \begin{minipage}[t]{0.475\linewidth}
        \includegraphics[width=\linewidth]{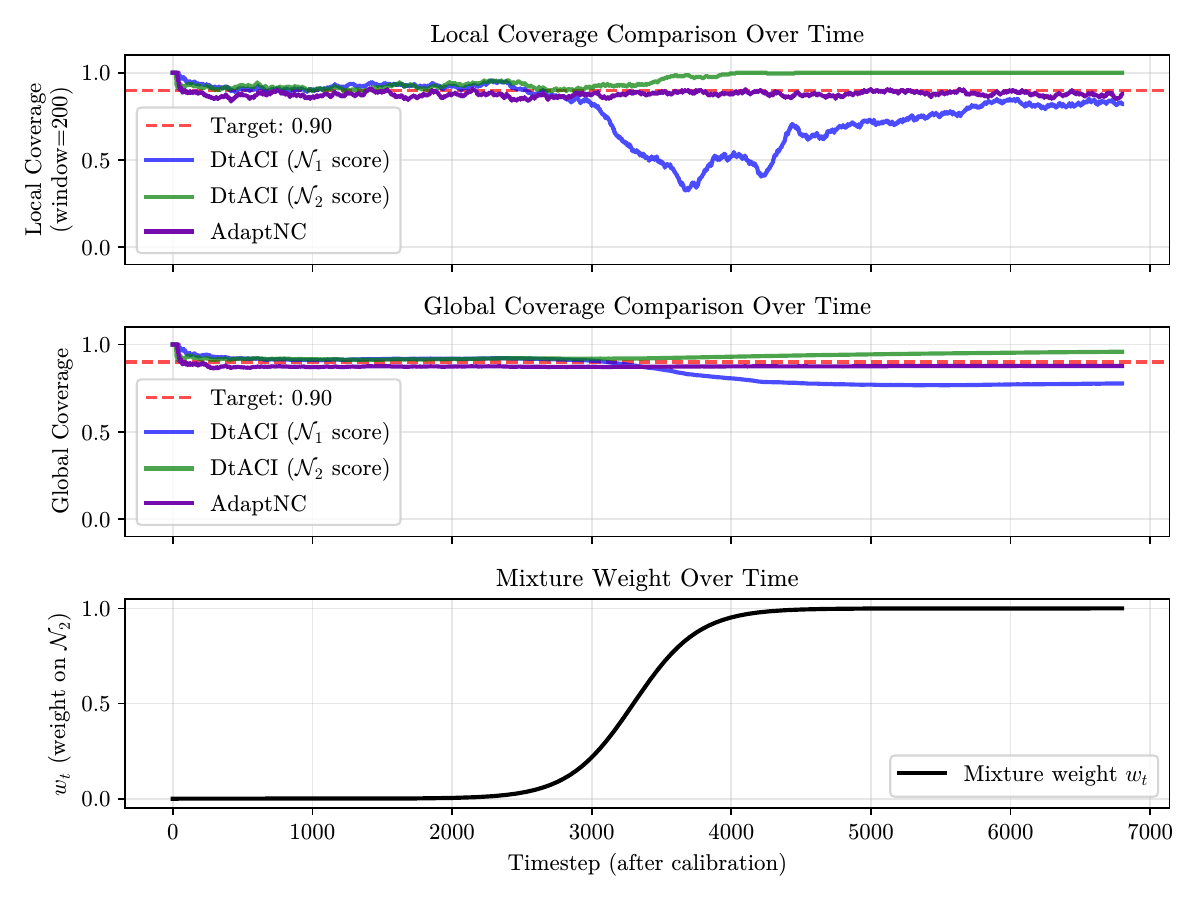}
        \caption{Coverage behavior under the shifting GMM. The fixed $s_1$ baseline undercovers after the mixture shifts, while the fixed $s_2$ baseline is conservative. AdaptNC remains closer to the target than the under-covering fixed score and avoids the persistent conservatism of the late-regime score by adapting its score and replaying recent observations.}
        \label{fig:appendix-gmm-local-coverage}
    \end{minipage}
\end{figure}

\paragraph{Why this motivates replay.}
\Cref{fig:appendix-gmm-alpha-star} shows that a smooth change in mixture weight
can still create a large separation between the calibration reference levels
associated with different scores. If a score-adaptive method updates its score
without recalibrating the threshold state, DtACI continues from a state shaped by
old score observations while receiving future observations through a new score.
In the regret bound of \cref{rem:big-o-regret}, this appears as a large
$|\alpha_t^\star-\alpha_{t-1}^\star|$ term, i.e., coverage shock.

Replay mitigates this effect by reconstructing the recent DtACI state under the
updated score. Instead of applying the new score to future observations while
retaining a threshold state learned from the old score, AdaptNC rescores the
recent window, recomputes the corresponding ranks and coverage errors, and reruns
the threshold updates. In the GMM example, this replaces a counterfactual abrupt
jump between the diagnostic endpoint scores with a recalibration step under the
updated score used by AdaptNC.

This behavior is reflected in \cref{fig:appendix-gmm-local-coverage}. When a
fixed score becomes mismatched with the dominant mixture component, DtACI either
undercovers or becomes conservative. AdaptNC adapts the score and uses replay to
realign the threshold state, yielding more stable near-target coverage through
the transition. The example is intentionally simple: its purpose is not to model
a robotics task, but to isolate the mechanism by which score changes create
coverage shock and to motivate replay as the corresponding state-realignment
step.
\clearpage

\subsection{Indoor Localization}\label{appendix:indoor-localization}

This section provides a complete and reproducible description of the environment dynamics, wireless observation model, trajectory predictor, and evaluation policy used in all experiments.

\subsubsection{Environment Dynamics}

We consider a planar indoor localization environment with discrete-time dynamics and sampling period $\Delta t = 0.1$ s. The true system state at time $t$ is
\[
\mathbf{s}_t = [x_t, y_t, v_{x,t}, v_{y,t}]^\top \in \mathbb{R}^4,
\]
where $(x_t, y_t)$ denotes the agent position and $(v_{x,t}, v_{y,t})$ denotes velocity.

Given an acceleration control input $\mathbf{a}_t = [a_{x,t}, a_{y,t}]^\top$, the dynamics evolve according to
\[
\begin{aligned}
x_{t+1} &= x_t + \Delta t \, v_{x,t}, \\
y_{t+1} &= y_t + \Delta t \, v_{y,t}, \\
v_{x,t+1} &= v_{x,t} + \Delta t \, (a_{x,t} + \varepsilon_{x,t}), \\
v_{y,t+1} &= v_{y,t} + \Delta t \, (a_{y,t} + \varepsilon_{y,t}),
\end{aligned}
\]
where $\varepsilon_{x,t}, \varepsilon_{y,t} \sim \mathcal{N}(0, \sigma_{\text{proc}}^2)$ model additive process noise on acceleration. We use $\sigma_{\text{proc}} = 0.02$.

The agent operates within a bounded workspace defined by $x,y \in [-6,6]$. Reflective boundary conditions are enforced by reversing the corresponding velocity component whenever the agent reaches the workspace boundary.

\subsubsection{Access Point Geometry}

The environment contains four fixed access points located at
\[
(-5,-5), \; (5,-5), \; (5,5), \; (-5,5).
\]
These access points are static and known to the environment simulator and predictor.

\subsubsection{RSSI Observation Model}

At each time step, the agent observes a received signal strength indicator vector
\[
\mathbf{o}_t = [\text{RSSI}_t^{(1)}, \ldots, \text{RSSI}_t^{(4)}]^\top \in \mathbb{R}^4.
\]

Each RSSI measurement is generated as the sum of path loss, shadowing, and small-scale fading components, expressed in decibels:
\[
\text{RSSI}_t^{(i)} = P_0 - 10 n \log_{10}(d_t^{(i)}) + S_t^{(i)} + F_t^{(i)}.
\]

\paragraph{Path Loss.}
The path loss component follows the standard log-distance path loss model~\cite{rappaport2010wireless}. Here, $P_0 = -30$ dB denotes the reference received power at unit distance, $n = 2.2$ is the path loss exponent, and
\[
d_t^{(i)} = \| [x_t, y_t] - \mathbf{a}_i \|_2
\]
is the Euclidean distance between the agent and access point $i$. Distances are lower bounded by $10^{-2}$ m to ensure numerical stability. The path loss exponent controls the rate of signal attenuation with distance and reflects indoor propagation characteristics.

\paragraph{Shadowing.}
Large-scale shadowing is modeled as a temporally correlated log-normal process~\cite{gudmundson1991correlation}. For each access point $i$, the shadowing term evolves according to a first-order Gauss--Markov model:
\[
S_t^{(i)} = \rho S_{t-1}^{(i)} + \sqrt{1 - \rho^2} \, \sigma_{\text{sh}} \, \xi_t^{(i)},
\quad
\xi_t^{(i)} \sim \mathcal{N}(0,1),
\]
where $\sigma_{\text{sh}} = 4.0$ dB controls the shadowing variance and $\rho = 0.97$ determines temporal correlation. Shadowing states are initialized to zero at environment reset and evolve independently across access points.

\paragraph{Small-Scale Fading.}
Small-scale fading is modeled using a Rayleigh fading process generated via a sum-of-sinusoids~\cite{jakes1994microwave} method. For each access point, a complex channel coefficient $h_t^{(i)}$ is generated, and the fading contribution is computed as
\[
F_t^{(i)} = 10 \log_{10}(|h_t^{(i)}|^2 + \epsilon),
\]
with $\epsilon = 10^{-12}$ for numerical stability.

The Doppler frequency governing temporal fading correlation is velocity-dependent:
\[
f_{d,t} = \frac{\|\mathbf{v}_t\|_2}{\lambda},
\quad
\lambda = \frac{c}{f_c},
\]
where $f_c = 2.4$ GHz is the carrier frequency and $c$ is the speed of light. Fading processes are independent across access points. The Rayleigh distribution of $|h_t^{(i)}|$ follows from the assumption of rich multipath with no dominant 
line-of-sight component~\cite{clarke1968statistical, rappaport2010wireless}.

\subsubsection{Trajectory Predictor}
\label{sec:appendix_predictor}

We employ a lightweight model-based trajectory predictor that estimates the agent position from RSSI measurements using a simplified propagation model. The predictor is mismatched with the true environment observation model and does not explicitly account for shadowing or small-scale fading. This mismatch induces systematic and time-varying localization error.

\paragraph{Internal State.}
The predictor maintains an internal estimate of planar position and velocity,
\[
\hat{\mathbf{x}}_t \in \mathbb{R}^2, \quad \hat{\mathbf{v}}_t \in \mathbb{R}^2,
\]
which are reset to zero at the beginning of each rollout.

\paragraph{Prediction Step.}
At each time step, the predictor applies a constant-velocity motion model:
\[
\hat{\mathbf{x}}_{t|t-1} = \hat{\mathbf{x}}_{t-1} + \Delta t \, \hat{\mathbf{v}}_{t-1}.
\]
This prediction does not incorporate control inputs or process noise and serves only as a kinematic extrapolation of past estimates.

\paragraph{RSSI-to-Distance Conversion.}
Given an RSSI observation vector $\mathbf{o}_t$, the predictor converts each RSSI measurement into a distance estimate by inverting the log-distance path loss model:
\[
\hat{d}_t^{(i)} = 10^{\frac{P_0 - \text{RSSI}_t^{(i)}}{10 n}}.
\]
This inversion assumes that RSSI measurements are generated solely by path loss and ignores the presence of shadowing and fading. As a result, the inferred distances are biased and noisy, with error statistics that depend on the instantaneous channel realization.

\paragraph{Multilateration via Gauss--Newton.}
The predicted position is refined by solving a nonlinear least-squares multilateration problem of the form
\[
\min_{\mathbf{x} \in \mathbb{R}^2}
\sum_{i=1}^{N}
w_i \left( \|\mathbf{x} - \mathbf{a}_i\|_2 - \hat{d}_t^{(i)} \right)^2,
\quad
w_i = \frac{1}{(\hat{d}_t^{(i)})^2}.
\]
This problem is solved using Gauss--Newton iterations initialized at $\hat{\mathbf{x}}_{t|t-1}$. A small damping term is added for numerical stability, and update steps are clipped to avoid large jumps. The resulting solution is denoted $\mathbf{x}_t^{\text{meas}}$.

\paragraph{Alpha--Beta Filtering.}
The final position and velocity estimates are obtained using an $\alpha$--$\beta$ filter:
\[
\begin{aligned}
\mathbf{e}_t &= \mathbf{x}_t^{\text{meas}} - \hat{\mathbf{x}}_{t|t-1}, \\
\hat{\mathbf{x}}_t &= \hat{\mathbf{x}}_{t|t-1} + \alpha \mathbf{e}_t, \\
\hat{\mathbf{v}}_t &= \hat{\mathbf{v}}_{t-1} + \frac{\beta}{\Delta t} \mathbf{e}_t,
\end{aligned}
\]
with $\alpha = 0.25$ and $\beta = 0.05$. This filter smooths high-frequency measurement noise while allowing the predictor to track gradual motion trends.

\paragraph{Model Mismatch and Error Characteristics.}
Since the predictor assumes a deterministic path loss model and neglects shadowing and small-scale fading, its position estimates exhibit residual errors that are neither independent nor identically distributed. In particular, temporally correlated fading induces time-correlated localization errors whose magnitude and direction vary over time. This mismatch produces nonstationary residual streams, making the predictor suitable for evaluating adaptive conformal uncertainty estimation methods that must respond to evolving error statistics.

\subsubsection{Evaluation Policy}

All experiments use a fixed, non-adaptive random motion policy intended to emulate surveillance-style exploration. At each time step, acceleration inputs are sampled uniformly within bounded limits. Velocity magnitudes are clipped to a maximum of $1.0$ m/s, and accelerations are bounded by $0.5$ m/s$^2$.

The policy is independent of RSSI observations, predictor estimates, and uncertainty regions. The same policy is used across all methods, ensuring that observed differences in coverage and region geometry arise solely from the uncertainty estimation method.

\subsubsection{Expert Weights}
\begin{figure}[h]
    \centering
    \includegraphics[width=0.8\linewidth]{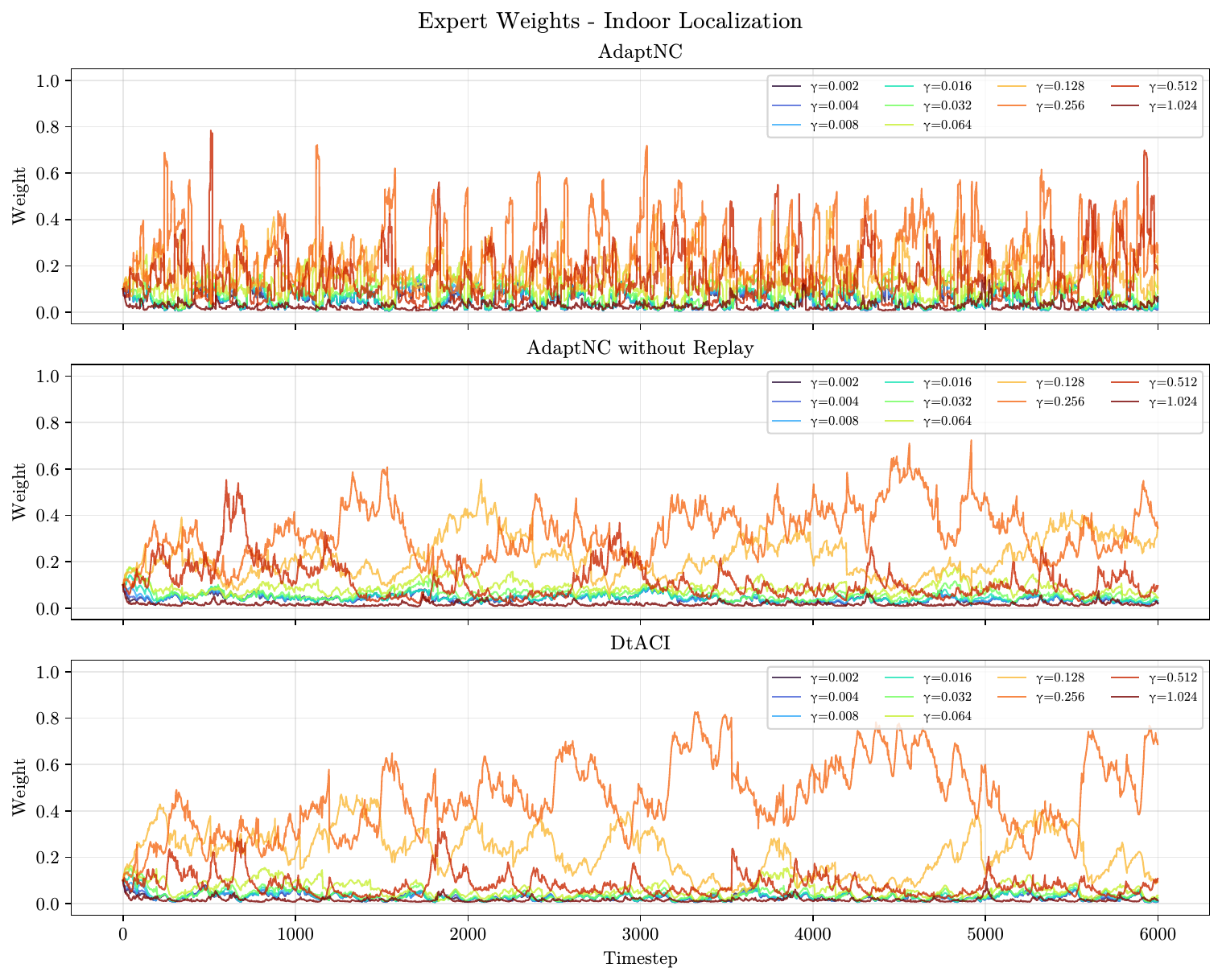}
    \caption{This figure shows the evolution of expert weights in the Indoor Localization setting over the course of the experiment. The pronounced, piecewise variation observed for AdaptNC arises from its adaptive reweighting mechanism, which enables rapid adjustment of expert weights in response to changes in the score function and contributes to maintaining valid coverage. In contrast, DtACI and AdaptNC without replay exhibit slower weight adaptation and consistently select smaller values of $\gamma$ than AdaptNC.}
    \label{fig:appendix-indoor-localization-weights}
\end{figure}\clearpage

\subsection{Social Navigation}\label{appendix:social-navigation}
The simulation environment implements a continuous-space multi-agent system based on the Social Force Model described by Helbing and Molnár (1995).

\subsubsection{Equations of Motion}
Let the state of agent $\alpha$ at time $t$ be defined by its position $\vec{r}_\alpha(t)$, actual velocity $\vec{v}_\alpha(t)$, and preferred velocity $\vec{w}_\alpha(t)$. The dynamics are governed by a nonlinearly dynamics equation driving the preferred velocity, subject to a maximum speed constraint.

The evolution of the preferred velocity $\vec{w}_\alpha$ is given by:
\begin{equation}
    \frac{d\vec{w}_\alpha}{dt} = \frac{\vec{v}_\alpha^{des} - \vec{v}_\alpha}{\tau} + \sum_{\beta \neq \alpha} \vec{F}_{\alpha\beta} + \sum_{B} \vec{F}_{\alpha B} + \vec{\xi}(t)
\end{equation}
where:
\begin{itemize}
    \item $\tau$ is the relaxation time constant.
    \item $\vec{v}_\alpha^{des}$ is the desired velocity vector toward the current goal.
    \item $\vec{F}_{\alpha\beta}$ represents the social repulsive force from agent $\beta$.
    \item $\vec{F}_{\alpha B}$ represents the repulsive force from boundary $B$.
    \item $\vec{\xi}(t)$ is a stochastic fluctuation term (Gaussian noise).
\end{itemize}

The actual velocity $\vec{v}_\alpha(t)$ is derived by clipping the preferred velocity to the agent's maximum speed $v_\alpha^{max}$:
\begin{equation}
    \vec{v}_\alpha(t) = \min\left(1, \frac{v_\alpha^{max}}{\|\vec{w}_\alpha(t)\|}\right) \vec{w}_\alpha(t)
\end{equation}

\subsubsection{Force Components}

\paragraph{Goal Attraction}
The driving term attracts the agent toward its goal $\vec{g}_\alpha$ with a desired speed $v_\alpha^0$:
\begin{equation}
    \vec{v}_\alpha^{des} = v_\alpha^0 \frac{\vec{g}_\alpha - \vec{r}_\alpha}{\|\vec{g}_\alpha - \vec{r}_\alpha\|}
\end{equation}

\paragraph{Social Repulsion}
The interaction force $\vec{F}_{\alpha\beta}$ models the repulsion between agents to maintain personal space. The implementation utilizes an elliptical distance metric approximation. The force is defined as:
\begin{equation}
    \vec{F}_{\alpha\beta} = w(\vec{e}_\alpha, \vec{f}_{\alpha\beta}) \cdot \vec{f}_{\alpha\beta}
\end{equation}
where the raw isotropic force $\vec{f}_{\alpha\beta}$ is:
\begin{equation}
    \vec{f}_{\alpha\beta} = \left( \frac{A}{B} \exp\left(-\frac{b}{B}\right) \frac{2d_{\alpha\beta} - s_{ab}}{2b} \right) \vec{n}_{\alpha\beta}
\end{equation}
with terms defined as follows:
\begin{align*}
    d_{\alpha\beta} &= \|\vec{r}_\alpha - \vec{r}_\beta\| \quad \text{(Euclidean distance)} \\
    \vec{n}_{\alpha\beta} &= \frac{\vec{r}_\alpha - \vec{r}_\beta}{d_{\alpha\beta}} \quad \text{(Normalized direction vector)} \\
    s_{ab} &= \|\vec{v}_\beta\| \Delta t_{anticipation} \quad \text{(Anticipated displacement)}
\end{align*}

The semi-minor axis $b$ of the repulsive potential is approximated in the implementation as:
\begin{equation}
    b = \sqrt{\frac{d_{\alpha\beta}^2 + (d_{\alpha\beta} - s_{ab})^2}{2}}
\end{equation}
This differs slightly from the exact equipotential formulation in the original work but preserves the qualitative elliptical avoidance behavior %
.

\paragraph{Boundary Repulsion}
Repulsion from static boundaries (walls) decays exponentially with the distance $d_{\alpha B}$ to the nearest point on the boundary:
\begin{equation}
    \vec{F}_{\alpha B} = w(\vec{e}_\alpha, \vec{f}_{\alpha B}) \cdot \frac{A_{wall}}{R_{wall}} \exp\left(-\frac{d_{\alpha B}}{R_{wall}}\right) \vec{n}_{\alpha B}
\end{equation}
where $\vec{n}_{\alpha B}$ is the normal vector pointing away from the boundary.

\subsubsection{Anisotropic Perception}
To reflect the limited field of view, forces are scaled by an anisotropic weight $w(\vec{e}, \vec{f})$ dependent on the angle $\phi$ between the agent's desired direction $\vec{e}_\alpha$ and the force vector $\vec{f}$:
\begin{equation}
    w(\vec{e}_\alpha, \vec{f}) = 
    \begin{cases} 
        \lambda + (1-\lambda)\frac{1 + \cos\phi}{2} & \text{if } \cos\phi \ge 0 \quad (\text{Frontal}) \\
        c & \text{if } \cos\phi < 0 \quad (\text{Rear})
    \end{cases}
\end{equation}
where $\cos\phi = \frac{\vec{e}_\alpha \cdot \vec{f}}{\|\vec{e}_\alpha\| \|\vec{f}\|}$.

\subsubsection{Model Parameters}
The default parameters used in the simulation are listed in Table \ref{tab:params}.

\begin{table}[h]
    \centering
    \begin{tabular}{l c l}
        \hline
        \textbf{Parameter} & \textbf{Symbol} & \textbf{Value} \\
        \hline
        Relaxation time & $\tau$ & 0.5 s \\
        Desired speed & $v^0$ & $\mathcal{N}(1.34, 0.26)$ m/s \\
        Max speed & $v^{max}$ & $1.3 \times v^0$ m/s \\
        Repulsion strength & $A$ & 5.0 m/s$^2$ \\
        Repulsion range & $B$ & 2.0 m \\
        Anticipation time & $\Delta t_{anticipation}$ & 2.0 s \\
        Anisotropy factor & $\lambda$ & 0.5 \\
        Rear influence & $c$ & 0.5 \\
        Wall repulsion strength & $A_{wall}$ & 10.0 m/s$^2$ \\
        Wall repulsion range & $R_{wall}$ & 0.2 m \\
        \hline
    \end{tabular}
    \caption{Simulation parameters corresponding to the Social Navigation experiment}
    \label{tab:params}
\end{table}

\subsubsection{Trajectory Predictor}

The trajectory predictor is implemented as an Long Short-Term Memory (LSTM)  to capture temporal dependencies in agent motion. The model is implemented using the Flax/JAX framework.

\paragraph{Architecture}
The model processes a sequence of state observations $X = \{x_1, \dots, x_T\}$ where each timestep $x_t \in \mathbb{R}^{34}$ contains the ego-agent's state and the relative states of the $k=7$ nearest neighbors. The specific feature composition is:
\begin{itemize}
    \item \textbf{Ego Features (6)}: Absolute position $(p_x, p_y)$, velocity $(v_x, v_y)$, and goal position $(g_x, g_y)$.
    \item \textbf{Neighbor Features (28)}: Relative position $(\Delta p_x, \Delta p_y)$ and relative velocity $(\Delta v_x, \Delta v_y)$ for each of the 7 nearest neighbors.
\end{itemize}

The network architecture consists of a single LSTM layer with a hidden dimension of $h=128$. The final hidden state $h_T$ is passed through a Multi-Layer Perceptron (MLP) head to predict the 2D position $\hat{y} \in \mathbb{R}^2$ of the agent at the prediction horizon $\tau$:
\begin{align}
    h_{1:T} &= \text{LSTM}(x_{1:T}) \\
    z &= \text{ReLU}(\text{Dense}_{128}(\text{Dropout}(h_T))) \\
    \hat{y} &= \text{Dense}_2(\text{Dropout}(z))
\end{align}
Dropout with a rate of $0.1$ is applied before the dense layers during training to prevent overfitting %
.

\paragraph{Training Objective}
The model is trained to minimize the Mean Squared Error (MSE) between the predicted position $\hat{y}$ and the ground truth position $y$ after $\tau$ simulation steps:
\begin{equation}
    \mathcal{L}(\theta) = \frac{1}{N} \sum_{i=1}^N \| \hat{y}_i - y_i \|^2
\end{equation}
Optimization is performed using the AdamW optimizer. The specific hyperparameters used for training are detailed in Table \ref{tab:predictor_params}.

\begin{table}[h]
    \centering
    \begin{tabular}{l c}
        \hline
        \textbf{Hyperparameter} & \textbf{Value} \\
        \hline
        Hidden Dimension & 128 \\
        Dropout Rate & 0.1 \\
        Sequence Length ($T$) & 10 \\
        Prediction Horizon ($\tau$) & 5 \\
        Learning Rate & $1 \times 10^{-4}$ \\
        Weight Decay & $1 \times 10^{-5}$ \\
        Batch Size & 512 \\
        Training Epochs & 200 \\
        \hline
    \end{tabular}
    \caption{Hyperparameters for the LSTM Trajectory Predictor.}
    \label{tab:predictor_params}
\end{table}

\subsubsection{Expert Weights}
\begin{figure}[h]
    \centering
    \includegraphics[width=0.8\linewidth]{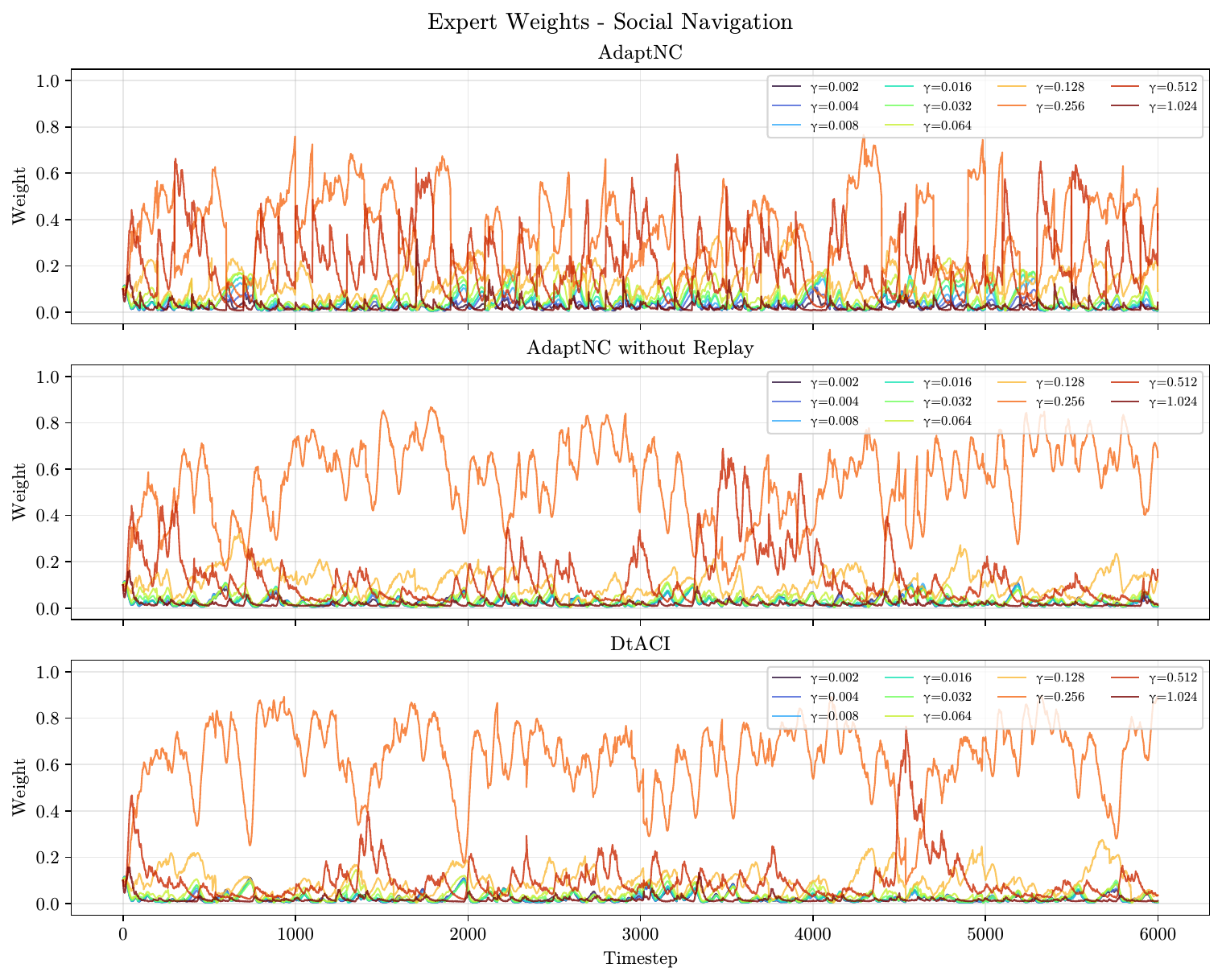}
    \caption{This figure contains the expert weights for the social navigation setting. The plots show the evolution of the weights over the course of the scenario. The stacatto pattern of the AdaptNC setting is a consequence of the adaptive reweighting scheme, which enables rapid adaptation of the expert weights in response to score function changes, which helps AdaptNC maintain coverage. We can see that the DtACI and AdaptNC without Replay methods do not adapt expert weights as quickly, and choose values for $\gamma$ that are lower than those chosen by AdaptNC.}
    \label{fig:appendix-socialnav-weights}
\end{figure}\clearpage

\subsection{Multirotor Tracking}\label{appendix:multirotor-tracking}

This section describes the multirotor environment with actuator degradation, the trajectory predictor used for position forecasting, and the control policy employed to generate training, calibration, and evaluation trajectories.

\subsubsection{Environment with Actuator Degradation}

The multirotor state at time $t$ is: $
\mathbf{s}_t =
[x_t, \dot{x}_t, y_t, \dot{y}_t, z_t, \dot{z}_t,
\psi_t, \dot{\psi}_t, \theta_t, \dot{\theta}_t, \delta_t, \dot{\delta}_t]^\top
\in \mathbb{R}^{12}$, where $(x,y,z)$ denote position, $(\dot{x},\dot{y},\dot{z})$ translational velocity, $\psi$ yaw, $\theta$ pitch, and $\delta$ roll. The system evolves in discrete time with integration step $\Delta t = 0.1$ s using forward Euler integration.

\paragraph{Control Inputs.}
The control input is: $\mathbf{u}_t = [u_{1,t}, u_{2,t}, u_{3,t}, u_{4,t}]^\top$, where $u_1$ corresponds to collective thrust, $u_2$ to pitch torque, $u_3$ to roll torque, and $u_4$ to yaw torque.

\paragraph{Actuator Degradation.}
Each actuator is subject to stochastic degradation modeled by a health variable, $\boldsymbol{\Gamma}_t \in [0,1]^4,$ where $\Gamma_t^{(i)} = 1$ denotes a healthy actuator. The effective control applied to the system is: $\mathbf{u}_t^{\text{eff}} = \boldsymbol{\Gamma}_t \odot \mathbf{u}_t.$ Actuator health evolves according to an additive Wiener process with negative drift:
\[
\boldsymbol{\Gamma}_{t+1}
=
\boldsymbol{\Gamma}_t
- \alpha \Delta t
+ \sigma \mathbf{w}_t,
\quad
\mathbf{w}_t \sim \mathcal{N}(\mathbf{0}, \Delta t I),
\]
with $\alpha = 5 \times 10^{-4}$ and $\sigma = 2.5 \times 10^{-4}$. The degradation process is unobserved by the predictor and policy.
%Health values are clipped to $[0.7, 1.0]$.

\paragraph{Dynamics.}
Translational accelerations are given by :$ \ddot{x}_t = g \theta_t,~ \ddot{y}_t = -g \delta_t,~
\ddot{z}_t = u_{1,t}^{\text{eff}} - g,$
with gravitational constant $g = 9.81$ m/s$^2$.
Rotational accelerations satisfy: $\ddot{\psi}_t = u_{4,t}^{\text{eff}}, ~\ddot{\theta}_t = u_{2,t}^{\text{eff}}, ~\ddot{\delta}_t = u_{3,t}^{\text{eff}}.$ Yaw is wrapped to $[-\pi,\pi]$, and pitch and roll are clipped to $[-0.3, 0.3]$ radians.

\subsubsection{Trajectory Predictor}

To forecast planar motion, we use a hybrid physics learning predictor that combines a simple kinematic prior with a learned residual model.

\paragraph{Physics Prior.}
The prior assumes constant velocity motion in the horizontal plane:
\[
\hat{x}_{t+1}^{\text{phys}} = x_t + \Delta t \dot{x}_t,
\quad
\hat{y}_{t+1}^{\text{phys}} = y_t + \Delta t \dot{y}_t.
\]
This prior ignores rotational coupling and actuator degradation and therefore provides only a coarse approximation of the true dynamics.

\paragraph{Residual Learning.}
Rather than predicting absolute positions, the predictor learns the residual
\[
\mathbf{r}_t =
\begin{bmatrix}
x_{t+1} \\
y_{t+1}
\end{bmatrix}
-
\begin{bmatrix}
\hat{x}_{t+1}^{\text{phys}} \\
\hat{y}_{t+1}^{\text{phys}}
\end{bmatrix}.
\]
The final prediction is obtained as
\[
\hat{\mathbf{p}}_{t+1} =
\begin{bmatrix}
\hat{x}_{t+1}^{\text{phys}} \\
\hat{y}_{t+1}^{\text{phys}}
\end{bmatrix}
+ \hat{\mathbf{r}}_t.
\]

\paragraph{LSTM Architecture and Training.}
Residuals are predicted using a single layer long short term memory network that takes as input a fixed length history
\[
[\mathbf{s}_{t-H+1}, \ldots, \mathbf{s}_t] \in \mathbb{R}^{H \times 12},
\]
with $H = 50$. The LSTM hidden dimension is 128, and the output is a two dimensional residual $(r_x, r_y)$. The network is trained by minimizing mean squared error between predicted and true residuals using supervised data generated from the environment.

\paragraph{Model Mismatch.}
The predictor does not observe actuator health and does not model degradation dynamics. As actuator effectiveness degrades over time, the physics prior becomes increasingly inaccurate. Although the LSTM can partially compensate using temporal context, prediction errors remain time varying and correlated, providing a nonstationary residual process.

\subsubsection{Evaluation Policy}

All trajectories are generated using a fixed model predictive path integral control policy. The policy is used solely to excite the system and is not part of the proposed uncertainty estimation method.

\paragraph{Reference Motion.}
The policy tracks a figure eight trajectory in the horizontal plane:
\[
x_r(t) = A \sin(\omega t), \quad
y_r(t) = A \sin(2 \omega t),
\]
with amplitude $A = 3.0$ and angular frequency $\omega = 0.25$. A constant altitude reference $z_r = 2.0$ m is enforced using a proportional derivative controller on thrust.

\paragraph{MPPI Control.}
At each time step, the policy samples $N = 30$ control sequences over a horizon of $H = 35$ by perturbing a nominal sequence with Gaussian noise. Each sequence is rolled out using the environment dynamics, and a quadratic cost penalizing position tracking error, velocity error, attitude deviation, and control effort is accumulated. Sampled trajectories are weighted using an exponential transformation of cost, and the nominal sequence is updated accordingly. The first control input is applied, and the horizon is receded.

\paragraph{Policy Independence.}
The policy does not observe actuator health, predictor outputs, or uncertainty regions. The same policy and parameters are used across all experiments. Consequently, differences in coverage and region geometry arise solely from the uncertainty estimation method rather than from changes in control behavior.

\subsubsection{Expert Weights}
\begin{figure}[h]
    \centering
    \includegraphics[width=0.8\linewidth]{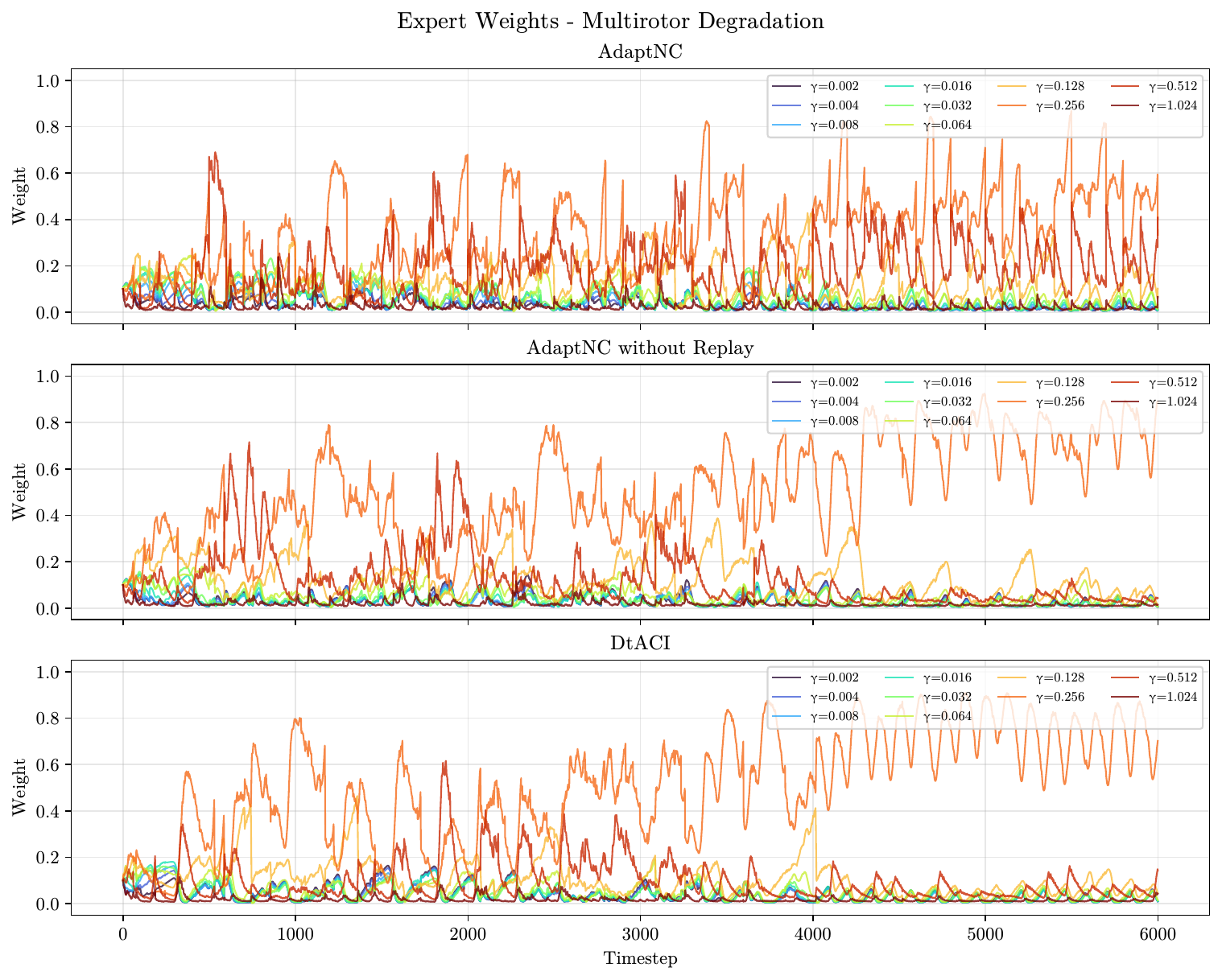}
    \caption{This figure shows the evolution of expert weights in the multirotor navigation setting. The plots illustrate how weights change over time across the different methods. The pronounced oscillatory pattern observed for AdaptNC arises from its adaptive reweighting mechanism, which enables rapid adjustment of expert weights in response to changes in the nonconformity score and contributes to maintaining coverage. In contrast, DtACI and AdaptNC without Replay exhibit slower weight adaptation and assign greater weight to intermediate values of $\gamma$ toward the end of the rollout, resulting in reduced responsiveness compared to AdaptNC.}
    \label{fig:appendix-multirotor-weights}
\end{figure}
\newpage

\subsection{Hyperparameter Sensitivity Analysis}~\label{app:sensitivity_analysis}

\begin{table*}[h]
\centering
\small
\setlength{\tabcolsep}{6pt}
\renewcommand{\arraystretch}{1.1}
\begin{tabular}{lcc}
\toprule

\multicolumn{3}{c}{\textbf{Social Navigation}} \\
\midrule

\multicolumn{3}{l}{\textit{Varying window size (update period = 100)}} \\
\cmidrule(lr){1-3}
Window Size & Global Cov. ($\uparrow$) & Mean Volume ($\downarrow$) \\
\cmidrule(lr){1-3}
35 & 0.9043 $\pm$ 0.0144 & 40.24 $\pm$ 5.66 \\
75 (base) & 0.9223 $\pm$ 0.0090 & 42.00 $\pm$ 5.54 \\
120 & 0.9641 $\pm$ 0.0043 & 52.41 $\pm$ 5.46 \\

\addlinespace

\multicolumn{3}{l}{\textit{Varying update period (window size = 75)}} \\
\cmidrule(lr){1-3}
Update Period & Global Cov. ($\uparrow$) & Mean Volume ($\downarrow$) \\
\cmidrule(lr){1-3}
70 & 0.9609 $\pm$ 0.0032 & 36.45 $\pm$ 3.97 \\
100 (base) & 0.9223 $\pm$ 0.0090 & 42.00 $\pm$ 5.54 \\
130 & 0.8780 $\pm$ 0.0111 & 46.74 $\pm$ 5.93 \\

\midrule

\multicolumn{3}{c}{\textbf{Multirotor Tracking}} \\
\midrule

\multicolumn{3}{l}{\textit{Varying window size (update period = 100)}} \\
\cmidrule(lr){1-3}
Window Size & Global Cov. ($\uparrow$) & Mean Volume ($\downarrow$) \\
\cmidrule(lr){1-3}
35 & 0.9180 $\pm$ 0.0050 & 0.04 $\pm$ 0.01 \\
65 (base) & 0.9264 $\pm$ 0.0055 & 0.04 $\pm$ 0.01 \\
100 & 0.9620 $\pm$ 0.0049 & 0.03 $\pm$ 0.01 \\

\addlinespace

\multicolumn{3}{l}{\textit{Varying update period (window size = 65)}} \\
\cmidrule(lr){1-3}
Update Period & Global Cov. ($\uparrow$) & Mean Volume ($\downarrow$) \\
\cmidrule(lr){1-3}
60 & 0.9685 $\pm$ 0.0027 & 0.03 $\pm$ 0.01 \\
100 (base) & 0.9264 $\pm$ 0.0055 & 0.04 $\pm$ 0.01 \\
140 & 0.8976 $\pm$ 0.0060 & 0.04 $\pm$ 0.01 \\

\midrule

\multicolumn{3}{c}{\textbf{Indoor Localization}} \\
\midrule

\multicolumn{3}{l}{\textit{Varying window size (update period = 20)}} \\
\cmidrule(lr){1-3}
Window Size & Global Cov. ($\uparrow$) & Mean Volume ($\downarrow$) \\
\cmidrule(lr){1-3}
60 & 0.9135 $\pm$ 0.0027 & 26.29 $\pm$ 1.64 \\
100 (base) & 0.9062 $\pm$ 0.0027 & 27.81 $\pm$ 1.75 \\
140 & 0.9004 $\pm$ 0.0032 & 28.67 $\pm$ 1.71 \\

\addlinespace

\multicolumn{3}{l}{\textit{Varying update period (window size = 100)}} \\
\cmidrule(lr){1-3}
Update Period & Global Cov. ($\uparrow$) & Mean Volume ($\downarrow$) \\
\cmidrule(lr){1-3}
15 & 0.8964 $\pm$ 0.0034 & 25.00 $\pm$ 1.71 \\
20 (base) & 0.9062 $\pm$ 0.0027 & 27.81 $\pm$ 1.75 \\
50 & 0.9365 $\pm$ 0.0024 & 35.90 $\pm$ 2.61 \\

\bottomrule
\end{tabular}
\caption{Sensitivity analysis with respect to window size and update period across all environments. Performance remains stable around the chosen base configurations, indicating robustness to moderate hyperparameter variations.}
\label{table:sensitivity}
\end{table*}

\clearpage

    % \newpage
    % \input{checklist.tex}

\end{document}